\def\eqref#1{Eq.~(\ref{#1})}
\def\1{\bm{1}}
\DeclareMathAlphabet{\mathsfit}{\encodingdefault}{\sfdefault}{m}{sl}
\SetMathAlphabet{\mathsfit}{bold}{\encodingdefault}{\sfdefault}{bx}{n}
\DeclareMathOperator*{\argmax}{arg\,max}
	\definecolor{mygreen}{rgb}{0,0.5,0.2}
	\newtheorem{theorem}{Theorem}
	\newtheorem{lemma}[theorem]{Lemma}
	\newtheorem{definition}[theorem]{Definition}
	\newtheorem{remark}{Remark}
	\newtheorem{proposition}[theorem]{Proposition}
	\newtheorem{example}{Example}[section]
	\numberwithin{theorem}{section}
	\icmltitlerunning{Latent Causal Invariant Model}
\begin{document}
		
		\twocolumn[
		\icmltitle{Latent Causal Invariant Model}
		
		
		
		\icmlsetsymbol{equal}{*}
		
		\begin{icmlauthorlist}
			\icmlauthor{Xinwei Sun}{1}
			\icmlauthor{Botong Wu}{2}
			\icmlauthor{Xiangyu Zheng}{2}
			\icmlauthor{Chang Liu}{1}
			\icmlauthor{Wei Chen}{1}
			\icmlauthor{Tao Qin}{1}
			\icmlauthor{Tie-yan Liu}{1}
		\end{icmlauthorlist}
		
		\icmlaffiliation{1}{Microsoft Research Asia, Beijing, 100080}
		\icmlaffiliation{2}{Peking University, Beijing, 100871}
		\icmlcorrespondingauthor{Xinwei Sun}{xinsun@microsoft.com}


		
		\icmlkeywords{Machine Learning, ICML}
		
		\vskip 0.3in
		]
		
		\printAffiliationsAndNotice{}
		
		
		\begin{abstract}
			
			Current supervised learning can learn spurious correlation during the data-fitting process, imposing issue regarding out-of-distribution (OOD) generalization. To address this problem, we propose \textbf{La}tent \textbf{C}ausal \textbf{I}nvariance \textbf{M}odels (LaCIM), as a set of causal models by taking causal structure into consideration, in order to identify causal relations for prediction. Specifically, we introduce latent variables that are separated into (a) output-causative factors and (b) others that are spuriously correlated to the output via confounders. Such a spurious correlation can lead to the domain shift. We show that the observational distribution conditioning on latent factors are invariant to the above domain shift, and is thus called \emph{causal invariance}. Further, we give the identifiable claim of such invariance, particularly the disentanglement of output-causative factors from others, as a theoretical guarantee for precise inference and avoiding spurious correlation. We then propose a Variational-Bayesian-based method to learn such an invariance. The utility of our approach is verified by improved generalization ability on various OOD scenarios. 

		\end{abstract}
		\section{Introduction}
		Current data-driven deep learning models, revolutionary in various tasks though, heavily rely on \emph{i.i.d} data to exploit all types of correlations to fit data well. Among such correlations, there can be spurious ones corresponding to biases (\textit{e.g.}, selection or confounding bias due to coincidence of the presence of the third factor) inherited from the data provided. Such data-dependent spurious correlations can erode the prediction power for out-of-distribution (OOD) samples (\emph{i.e.}, the ones that are differently distributed with training data), which is crucial especially in safety-critical tasks.

		Recently, there is a Renaissance of causality in machine learning, expected to pursue causal prediction \citep{scholkopf2019causality}. The so-called ``causality" is pioneered by Judea Pearl \citep{pearl2009causality}, as a mathematical formulation of this metaphysical concept grasped in the human mind. The incorporation of a \textit{priori} about cause and effect endows the model with the ability to identify the causal structure \cite{pearl2009causality} which entails not only the data but also the underlying process of how they are generated. For causal prediction, the old-school methods \citep{peters2016causal, buhlmann2018invariance} causally related the output label $Y$ to the \textit{observed input $X$}, which however is NOT conceptually reasonable in scenarios with sensory-level observed data ({\textit{e.g. modeling pixels as causal factors of $Y$ does not make much sense}}). 
		
		
		For such applications, we rather adopt the manner of human visual perception \cite{bengio2013representation, biederman1987recognition} to relate the causal factors of human label $Y$ to unobserved abstractions denoted by $S$, \textit{i.e.}, $Y \gets f_y(S,\varepsilon_y)$ via mechanism $f_y$. We further assume existence of additional latent components denoted as $Z$, that together with $S$ generates the input $X$ via mechanism $f_x$ as $X \gets f_x(S, Z,\varepsilon_x)$. Such an assumption is similarly adopted in the literature of nonlinear Independent Components Analysis (ICA) \citep{hyvarinen2016unsupervised, hyvarinen2019nonlinear, khemakhem2020variational, teshima2020few} and latent generative models \citep{suter2019robustly}. To model the effect of domain shifts, we allow the $Z$ to be spuriously correlated with $S$, hence the output $Y$, as marked by the bidirected arrow in Fig.~\ref{fig:lacim} (a). Taking image classification as an example, the $S$ and $Z$ respectively refer to object-related abstractions (\textit{e.g.}, contour, texture) and contextual information (\textit{e.g.}, background, view). During data-fitting process, the model can learn contextual information into prediction, as it can be correlated with the label in data provided. 
		
		
		We encapsulate these assumptions into a set of causal models as illustrated in Fig.~\ref{fig:lacim} (a), in which we argue that the generating mechanisms $f_x,f_y$ are invariant across domains (as marked by the blue arrow); while the spurious correlation between $S$ and $Z$ is allowed to be varied (as marked by the red (bi-directed) arrow). Such a domain-dependent spurious correlation, as governed by an auxiliary domain variable $D$ in Fig.~\ref{fig:lacim} (c) when takes a closer inspection, can lead to domain shifts. We call such a set of causal models augmented with the domain variable $D$ as \textbf{La}tent \textbf{C}ausal \textbf{I}nvariance \textbf{M}odels (LaCIM). Under the assumptions embedded in the causal structure of LaCIM, we can derive that the $P(Y|do(s))$ and $P(X|do(s),do(z))$ are stable to the shift across domains and we thus call them \emph{\textbf{C}ausal \textbf{I}nvariance} (CI). Further, we can show that if the multiple environments are diverse enough, such CI are identifiable, which can benefit the OOD prediction. Besides, our identifiability results can implicate that the learned $Y$-causative factor (\textit{a.k.a}, $S$) can be disentangled from others (\textit{a.k.a}, $Z$), \emph{i.e.}, does \emph{not} mixture the information of $Z$.

		Guaranteed by the identifiability claims, we propose to learn the CI for prediction. Given the causal structure of LaCIM, we resort to latent generative model by reformulating the Variational Auto-encoder (VAE) \citep{kingma2014auto} to our supervised scenario. For OOD prediction, we propose to optimize over latent space under the identified CI (specifically $P(X|do(s),do(z)$). To verify the correctness of our identifiability claim, we conduct a simulation experiment. We further demonstrate the utility of our LaCIM via improved prediction power on various OOD scenarios (including tasks with confounding and selection bias, healthcare) and high explainable learned semantic features.

		We summarize our contribution as follows: \textbf{(i) Methodologically}, we propose in section~\ref{sec:lacim} a set of causal models in which the causal assumptions are incorporated in order to reason the \emph{causal invariance} for OOD generalization; \textbf{(ii) Theoretically}, we prove the identifiability (in theorem~\ref{thm:iden}) of CI $P(X|do(s),do(z)),P(Y|do(s))$ and also the $Y$-causative factor up to permutation and point-wise transformation; \textbf{(iii) Algorithmically}, guided by the identifiability, we in section~\ref{sec:vae} reformulate Variational Bayesian method to estimate CI during training and optimize over latent space during the test; \textbf{(iv) Experimentally}, LaCIM outperforms others in terms of prediction power on OOD tasks and interpretability in section~\ref{sec:expm-ood}.

		
		\vspace{-0.2cm}
		\section{Related Work}
		
		The invariance/causal learning proposes to learn the assumed invariance for transferring to OOD samples. For the invariance learning methods in \citet{krueger2020out,subbaswamy2020spec} and \cite{scholkopf2019causality}, they are still data-driven without incorporating causal assumptions \cite{pearl2009causality} beyond data, which results in that the learned ``invariance" is still stable correlation rather than causation and hence impedes its generalization to a broader set of domains. For causal learning, \citet{peters2016causal, buhlmann2018invariance, kuang2018stable, heinze2017conditional} assume causal factors as observed input, which is inappropriate for sensory-level observational data. In contrast, our LaCIM takes into account the causal structure; specifically, we introduce \textbf{i)} latent factors and separate them into $Y$-causative factor and others; \textbf{ii)} an augmented domain variable, which plays as a selection variable that generates the varied $S$-$Z$ correlation across domains. The incorporation of such a causal structure makes it possible to isolate the causal invariance and also only the $Y$-causative factor for OOD prediction. In independent and concurrent works, \citet{teshima2020few} and \citet{ilse2020designing} also explore latent variables in causal relation. As comparisons, \citet{teshima2020few} did not differentiate $S$ from $Z$. The \citet{ilse2020designing} is limited to only considering the spurious correlation between domain and label; while our LaCIM can allow the spurious correlation existed in a single domain.


		
		Other works which are conceptually related to us, as a non-exhaustive review, include (i) transfer learning which also leverages invariance in the context of domain adaptation \citep{scholkopf2011robust, zhang2013domain, gong2016domain} or domain generalization \citep{li2018domain, shankar2018generalizing}; and (ii) causal inference \citep{pearl2009causality,peters2017elements} which proposes a structural causal model to incorporate intervention via ``do-calculus" for cause-effect reasoning and counterfactual learning; (iii) latent generative model which also assumes generation from latent space to observed data \citep{kingma2014auto,suter2019robustly} \textit{but} aims at learning generator in the unsupervised scenario.

		\vspace{-0.2cm}
		\section{Preliminaries}
		\vspace{-0.1cm}
		
		\textbf{Problem Setting.} Let $X,Y$ respectively denote the input and output variables. The training data $\{\mathcal{D}^e\}_{e \in \mathcal{E}_{\mathrm{train}}}$ are collected from multiple environments $\mathcal{E}_{\mathrm{train}}$, where each domain $e$ is associated with a distribution $\mbox{P}^e(X,Y)$ over $\mathcal{X} \times \mathcal{Y}$ and $\mathcal{D}^e = \{x^e_i,y^e_i\}_{i \in [n_e]} \overset{i.i.d}{\sim} \mbox{P}^e$ with $[k]:=\{1,...,k\}$ for any $k \in \mathbb{Z}^{+}$. Our goal is to learn $f: \mathcal{X} \to \mathcal{Y}$ that learns $Y$-causative (or output-causative) factor for prediction and performs well on the set of all environments $\mathcal{E} \supset \mathcal{E}_{\mathrm{train}}$, which is aligned with existing OOD generalization works \citep{arjovsky2019invariant, krueger2020out}. We use respectively upper, lower case letter and Cursive letter to denote the random variable, the instance and the space, \textit{e.g.}, $a$ is an instance in the space $\mathcal{A}$ of random variable $A$. The $[f]_{\mathcal{A}}$ denotes the $f$ restricted on dimensions of $\mathcal{A}$. The Sobolev space $W^{k,p}(\mathcal{A})$ contains all $f$ such that $\int_{\mathcal{A}} \big|\partial_A f^{\alpha}\bigr|_{A=a} \big|^p d\mu(a) < \infty, \forall \alpha \leq k$.

		\textbf{Structural Causal Model.} The structural causal model (SCM) is defined as a triplet $M:=\langle G,\mathcal{F},P(\varepsilon)\rangle$, in which \textbf{i)} the causal structure $G:=(V,E)$ described by a directed acyclic graph (DAG); \textbf{ii)} the structural equations $\mathcal{F}:=\{f_k\}_{V_k \in V}$ are autonomous, \emph{i.e.}, intervening on $V_k$ does not affect others, based on which we can calculate causal effect; \textbf{iii)} the $P(\varepsilon)$ are probability measure for exogenous variables $\{\varepsilon_k\}_k$. By assuming independence among $\{\varepsilon_k\}_k$, it can be obtained according to Causal Markov Condition that each $P$ that is compatible with $G$ has $\mbox{P}(\{V_k=v_k\}_{V_k \in V})=\Pi_{k} \mbox{P}(V_k=v_k|Pa(k)=pa(k))$. A back-door path from $V_a$ to $V_b$ is defined as a path that ends with an arrow pointing to $V_a$ \citep{pearl2009causality}.

		\section{Methodology}
		
		We build our causal models which incorporate the causal assumptions in section~\ref{sec:lacim}, with which we can define the causal invariance that is robust to domain shift. In section~\ref{sec:iden}, we will present the identifiability of such causal invariance; and the $Y$-causative factor up to permutation transformation and point-wise addition, which guarantees the disentanglement of learned $Y$-causative features from others. Finally, we will introduce our learning method in section~\ref{sec:vae} to identify the causal invariance for prediction.


		
		
		\begin{figure*}
			\centering
			\begin{tabular}{cccccc}
				\includegraphics[width=0.17\textwidth]{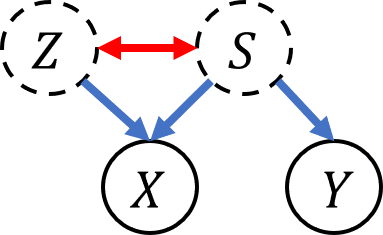}
				\hspace{-0.5cm}
				& \includegraphics[width=0.14\textwidth]{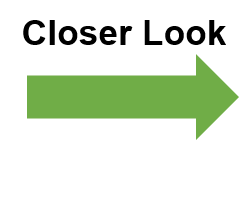}
				& \includegraphics[width=0.17\textwidth]{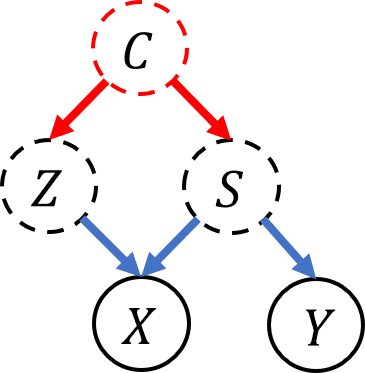}
				\hspace{-0.5cm}
				& \includegraphics[width=0.14\textwidth]{Figure/p66.png}
				& \includegraphics[width=0.17\textwidth]{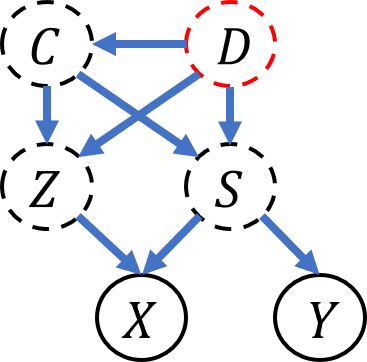} \\
				(a) &  & (b) & & (c) 
			\end{tabular}
			\caption{The directed acyclic graph of our Latent Causal Invariant Models. The observed and observed variables are respectively marked by solid and dot circle. The directed arrow represents direct causal relation; while the bidirected arrows represent spurious correlation. The arrows marked by blue and red represent the invariant and variant mechanisms, respectively. The $X,Y$ denote input and output variable; the $S,Z$ denote the $Y$-causative factor and $Y$-non-causative factor. The $C$ denotes the confounder of $S,Z$. The $D$ denotes the domain index with fixed value for each domain. The (a) $\to$ (b) $\to$ (c) is step-by-step closer inspection of spurious correlation. From (a) to (b): the spurious correlation between $S$ and $Z$ can be explained by another unobserved confounder $C$, which is differently distributed across environments and generates the $S,Z$ via domain-dependent mechanisms, as further explained by (c).}
			\label{fig:lacim}
		\end{figure*}
		
		\subsection{\textbf{La}tent \textbf{C}ausal \textbf{I}nvariance \textbf{M}odels}
		\label{sec:lacim}
		
		In this section, we introduce our model with \emph{causal invariance} from latent variables to observed variables, namely \textbf{La}tent \textbf{C}ausal \textbf{I}nvariance \textbf{M}odel (LaCIM). The corresponding DAG of LaCIM is illustrated in Fig.~\ref{fig:lacim} (c), as a step-by-step inspections of the skeleton shown in Fig.~\ref{fig:lacim} (a). 
		
		Specifically, we in Fig.~\ref{fig:lacim} (a) introduce latent factors $V:=\{S,Z\}$ to model the abstractions/concepts that generate the observed variables $(X,Y)$, which has been similarly assumed in existing latent generative models \cite{kingma2014auto} for sensory-level data. Further, we explicitly separate the $V$ into $S$ and $Z$ that respectively denote the $Y$-causative and $Y$-non-causative factors, with only $S$ having the direct causal effect on the label $Y$. In other words, the $Y$ is generated by $S$, which \emph{for example} refers to the shape, the contour of the object of interest in object classification; while the image $X$ is additionally affected by contextual factors such as \emph{light,view}. Since follows from physical law in generating the observed variables, the process $(S,Z) \to X$ and $S \to Y$ are hence assumed to be invariant across all environments/domains. We derive that the associated interventional distributions (and $P(X|do(s),do(z)),P(Y|do(s))$) as \emph{\textbf{C}ausal \textbf{I}nvariance}, as formally defined in Prop.~\ref{prop:CI}. 
		
		From another perspective, note that we consider the scenario when the $X$ and $Y$ are generated concurrently, \emph{i.e.}, there is neither directed paths from $X$ to $Y$ \cite{arjovsky2019invariant}, nor directed paths from $Y$ to $X$ \cite{ilse2020designing}, which can commonly exist in real scenarios however has been ignored in the literature. For example, the clinicians are recording the disease status while implementing the ultrasound test at the same time, during medical diagnosis.

		In addition, we assume that there exists a \emph{spurious correlation} between the $S$ and $Z$, as marked by bidirected red arrows in Fig.~\ref{fig:lacim} (a). Such a spurious correlation corresponds to the bias inherited from data, \emph{e.g.} the contextual information in object classification. Therefore, unlike invariant causation, this correlation is data-dependent and the magnitude of it can vary across domains. In statistics, such a spurious implicates the presence of a third unobserved (we use dot circle to represent unobserveness) confounder, which is denoted as $C$ in Fig.~\ref{fig:lacim} (b). The unblocked path from $Z$ to $Y$ can make the model learn unexpected feature during data-fitting process. Taking a closer inspection in Fig.~\ref{fig:lacim} (b), the varying degree of correlation can be either due to the changing mechanism from $C \to (S,Z)$ (as marked by the red arrows) or mutable distribution of the confounder $C$ itself (as marked by the red circle) across domains. Further, we ascribe both changing causes to the domain variable $D$, which takes a fixed value for each domain, as shown in the red circle of $D$ in Fig.~\ref{fig:lacim} (c). In other words, the variation of $D$ across domains governs the change of $P(C)$ and the auxiliary dependency $S,Z$ on $D$ can further explain the varying mechanisms of $P(S,Z|C)$ across domains. Note that this domain variable is not required to be observed in our scenario, in which we can only access the domain index $I^e$ (\emph{e.g.} one-hot encoded vector with length $m:=|\mathcal{E}_{\mathrm{train}}|$). 
		
		As an illustration, we first give a realization of the LaCIM (Fig.~\ref{fig:lacim} (c)), followed by the formal definition~\ref{def:lacim}. 
		\begin{example}[Sampling Bias]
			Consider the cat/dog classification task in which the animal in each image is either associated with the snow or grass. The $D$ denotes the sampler, which generates the $C$ that denotes the (time,weather) to go outside and collect sample. The $S$ refers to the features of cat/dog while $Z$ denotes the scene concepts of grass/snow. Since each sampler may have a fixed pattern (\emph{e.g.} gets used to going out in the sunny morning (or in the snowy evening)), the data he/she collects may have sampling bias, (\emph{e.g.} with dogs (cats) more associated with grass (snow) in the sunny morning (or snowy evening) ). In this regard, the scene concepts $Z$ can be correlated with the label $Y$.
		\end{example}
		
		\begin{definition}[LaCIM]
			\label{def:lacim}
			The LaCIM denotes a set of SCMs augmented with the domain variable $D$, \emph{i.e.}, $\{\langle M^e,d^e \rangle\}_{e \in \mathcal{E}}$, in which $M^e := \langle G,\mathcal{F}^e,P(\varepsilon) \rangle$. The $G$ denotes the DAG in Fig.~\ref{fig:lacim} (b). For each environment/domain $e$, the $\mathcal{F}^e:=\{f_x,f_y,f^e_s,f^e_z,f^e_c\}$ corresponding generating mechanism of $X,Y,S,Z,C$, with $f^e_c(\varepsilon_c):=g_c(\varepsilon_c,d^e)$, $f^e_s(c,\varepsilon_s):=g_s(c,\varepsilon_s,d^e)$ and $f^e_z(c,\varepsilon_z):=g_z(c,\varepsilon_z,d^e)$ from some $g_c,g_s,g_z$. 
		\end{definition}
		
		\begin{remark}
			The Def.~\ref{def:lacim} is compatible with the \emph{selection diagram} in \cite{subbaswamy2020spec}, in which an auxiliary variable is augmented with a set of acyclic graphs and generates mutable distributions across environments. In Def.~\ref{def:lacim}, such an auxiliary variable refers to $D$, which makes the $P(S,Z|C)$ and $P(C)$ vary across domains. Besides, note that the $f^e_c(\varepsilon_c):=g_c(\varepsilon_c,d^e)$ (or $f^e_s(c,\varepsilon_s):=g_s(c,\varepsilon_s,d^e)$) is allowed to be intervention, \emph{i.e.} set the $S$ or ($Z$) as a fixed value. 
		\end{remark}
		
		The Def.~\ref{def:lacim} specifies the generating mechanisms across environments and how can they differ. Equipped with such a specification, we can define the \emph{invariant} prediction mechanism which is stable to domain shifts: 
		
		\begin{proposition}[Causal Invariance]
			\label{prop:CI}
			For LaCIM in Def.~\ref{def:lacim}, the $P(X|do(s^{\star}),do(z^{\star}))$ and $P(Y|do(s^{\star}))$ are invariant to shifts across $\mathcal{E}$, and are denoted as \textbf{C}ausal \textbf{I}nvariance (CI). According to Fig.~\ref{fig:lacim}, we have $p(x|do(s^{\star}),do(z^{\star})) = p(x|s^\star,z^\star)$ and $p(y|do(s^{\star}))=p(y|s^\star)$.
		\end{proposition}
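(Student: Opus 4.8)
The plan is to prove the two displayed equalities first and then read off invariance across $\mathcal{E}$ as a corollary of the structural form of the resulting expressions. I would organize the argument in two stages.

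\textbf{Stage one: reduce each $do$-distribution to an observational conditional.} Start from the truncated factorization implied by the DAG in Fig.~\ref{fig:lacim}(b), namely $p(x,y,s,z,c) = p(x\mid s,z)\,p(y\mid s)\,p(s\mid c)\,p(z\mid c)\,p(c)$, where each factor is the push-forward of the corresponding structural equation. For $Y$, the sole parent is $S$ through $Y \gets f_y(S,\varepsilon_y)$, so
\[
p(y\mid do(s^\star)) = \int \mathbbm{1}\!\left[y=f_y(s^\star,\varepsilon_y)\right] dP(\varepsilon_y).
\]
Because the exogenous noises are mutually independent, $\varepsilon_y$ is independent of $(C,\varepsilon_s)$ and hence of $S=f^e_s(C,\varepsilon_s)$; equivalently $S$ is $Y$'s only parent and is not a descendant of $Y$, so no back-door path exists and conditioning on $S=s^\star$ leaves the law of $\varepsilon_y$ unchanged. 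This yields $p(y\mid do(s^\star)) = p(y\mid s^\star)$. The case of $X$ is analogous: $X \gets f_x(S,Z,\varepsilon_x)$ has parent set exactly $\{S,Z\}$ with $\varepsilon_x \perp (S,Z)$, so the observational factor is $p(x\mid s^\star,z^\star)=\int \mathbbm{1}[x=f_x(s^\star,z^\star,\varepsilon_x)]\,dP(\varepsilon_x)$, and applying the truncated factorization after $do(S=s^\star,Z=z^\star)$ --- which deletes the equations $f^e_s,f^e_z$ together with all dependence on the confounder $C$ --- and marginalizing the remaining factors gives $p(x\mid do(s^\star),do(z^\star)) = p(x\mid s^\star,z^\star)$.

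\textbf{Stage two: invariance.} The closed forms obtained above depend only on (i) the mechanisms $f_x,f_y$, which carry no environment superscript in $\mathcal{F}^e := \{f_x,f_y,f^e_s,f^e_z,f^e_c\}$ and are therefore shared across every $e\in\mathcal{E}$, and (ii) the exogenous law $P(\varepsilon)$, which is the same fixed measure in each $M^e=\langle G,\mathcal{F}^e,P(\varepsilon)\rangle$. The intervention is precisely what strips away the only domain-dependent ingredients $f^e_s,f^e_z,f^e_c$. Hence both $P(X\mid do(s^\star),do(z^\star))$ and $P(Y\mid do(s^\star))$ are independent of $e$, i.e. invariant across $\mathcal{E}$, which is the asserted Causal Invariance.

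I expect the only delicate point to be stage one, specifically checking that no back-door path survives the intervention. For $Y$ this is immediate since $S$ is its unique parent. For $X$ the subtlety is the confounder $C$ that spuriously couples $S$ and $Z$: the key observation is that intervening on \emph{both} $S$ and $Z$ simultaneously severs the path $S \gets C \to Z$, whereas a single-variable intervention would leave it open. Everything else reduces to the standard truncated-factorization identity together with independence of the exogenous noises, and is routine.
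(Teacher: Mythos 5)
Your proof is correct and is essentially the argument the paper itself relies on: the paper states Prop.~\ref{prop:CI} without any explicit proof, treating it as an immediate consequence of the Causal Markov factorization of the DAG in Fig.~\ref{fig:lacim} together with the fact that $f_x$, $f_y$ and $P(\varepsilon)$ carry no environment index in Def.~\ref{def:lacim} --- which is exactly your two-stage argument. Your remark that it is the \emph{joint} intervention on $S$ and $Z$ that severs the back-door path $S \gets C \to Z$ (so that the interventional and observational conditionals coincide, unlike for a single-variable intervention) is the right subtlety to make explicit.
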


		Note that here ``$do(s^{\star})$" (or ``$do(z^{\star})$") denotes the intervention operation during data generating process, rather than the process during inference in which the $S,Z$ are unobserved. During prediction, for sample $(x,y)$ generated by $x \gets f_x(s^\star,z^\star,\varepsilon_x)$, $y \gets f_y(s^\star,\varepsilon_y)$, the goal is to inference $s^{\star}$ from $p_{f_x}(x|s,z)$ first and also $p_{f_y}(y|s^{\star})$ for prediction. Two natural \emph{identifiability} questions can be asked: 
		\begin{enumerate}[topsep=0pt,itemsep=-1ex,partopsep=1ex,parsep=1ex]
			\item \emph{Can the \emph{causal invariance} be identified and efficiently learned, in order for OOD generalization?}
			\item \emph{Can the learned $Y$-causative factor (\emph{a.k.a} $S$) be disentangled from the others?}
		\end{enumerate}
		We will give our answer to these questions in the subsequent section, followed by our learning methods to identify the $Y$-causative factor and the CI for prediction.

		
		\subsection{Identifiability Analysis}
		\label{sec:iden}
		
		In this section, we present the identifiability claims regarding the two questions imposed in section~\ref{sec:lacim}. Statistically, the identifiability implies the precisely inference of underlying parameters giving rise to the observational distribution: $p_\theta(x,y)\!=\!p_{\tilde{\theta}}(x,y)\!\implies\!\theta \!=\! \tilde{\theta}$. In our scenario, the parameters specifically refers to the \textbf{(i)} CI ($f_x,f_y$ and corresponding $p(x|s,z),p(y|s)$); and \textbf{(ii)} the $Y$-causative features (\emph{a.k.a} $S$) estimated from $f_x$ up to transformation that does \emph{not} mix with the information of $Y$-non-causative features (\emph{a.k.a} $Z$), which correspondingly echo the questions in section~\ref{sec:lacim}.

		
		Our main results are presented in theorem~\ref{thm:iden}. Speaking in a high-level way, our results require that the degree of diversity regarding $S$-$Z$ correlation across environments is large enough, which has been similarly assumed in \cite{peters2016causal,arjovsky2019invariant} and implies complementary information from multiple datasets for the invariance to be picked up. Besides, for the disentanglement of $S$ and $Z$, note that the $S$ and $Z$ play ``asymmetric roles" in terms of generating process, as reflected in additional arrow from $S$ to $Y$. This ``information intersection" property of $S$ for $X,Y$, \textit{i.e.}, $f_y^{-1}(\bar{y}) = [f_x^{-1}]_S(\bar{x})$ for any $(\bar{x},\bar{y}) \in f_x(\mathcal{S}, \mathcal{Z}) \times f_y(\mathcal{S})$ {if $y = f_y(s) + \varepsilon_y$}, is exploited to disentangle $S$ from $Z$. Such a disentanglement analysis, is crucial to causal prediction but is ignored in existing literature about identifiability, such as those identifying the discrete latent confounders \citep{janzing2012detecting,sgouritsa2013identifying}, or those relying on ANM assumption \citep{janzing2012identifying}, or linear ICA \citep{eriksson2003identifiability,khemakhem2020variational,khemakhem2020ice,teshima2020few} (Please refer to supplement~\ref{sec:related-iden} for more exhaustive reviews). Besides, our analysis extends the scope of \cite{khemakhem2020variational} to categorical $Y$ and general forms of $\mbox{P}(S,Z|C)$ that belongs to Sobolev space, in theorem~\ref{thm:p-iden}.

		

		We assume the \emph{Additive Noise Model} (ANM) for $f_x(s, z, \varepsilon_x)$ $\!=\! \hat{f}_x(s,z) + \varepsilon_x$ (we replace $\hat{f}_x$ with $f_x$ for simplicity), which has been widely adopted to identify the causal factors \citep{janzing2009identifying, peters2014causal,khemakhem2020variational}. We first narrow our interest to a subset of types of parameterization for LaCIM denoted as $\mathcal{P}_{\mathrm{exp}}$ in which any parameterization in $\mathcal{P}_{\mathrm{exp}}$ satisfies that \textbf{(i)} the $S,Z$ belong to the exponential family; and \textbf{(ii)} the $Y$ is generated from the ANM. We show later that $\mathcal{P}_{\mathrm{exp}}$ can approximate any $\mbox{P}(S,Z|c) \in W^{r,2}(\mathcal{S} \times \mathcal{Z})$ for some $r \geq 2$:
		\vspace{-0.2cm}
		\begin{equation}
		\begin{aligned}
		\label{eq:p-exp}
		\mathcal{P}_{\mathrm{exp}} & = \Big\{ \text{LaCIM} | \ y = f_y(s) + \varepsilon_y, \\ 
		& p^e(s,z|c) := p_{\mathbf{T}^z, \bm{\Gamma}^z_{c,d^e}}(z|c) p_{\mathbf{T}^s, \bm{\Gamma}^s_{c,d^e}}(s|c), \forall e\Big\}, \text{with} \nonumber 
		\end{aligned}
		\end{equation}
		\begin{equation*}
		p_{\mathbf{T}^{t}, \bm{\Gamma}^t_{c,d^e}}(t) = {\small\prod_{i=1}^{q_t}} \exp\Big( {\small \sum_{j=1}^{k_t}} T^t_{i,j}(t_i) \Gamma^t_{c,d^e,i,j} + B_i(t_i) -  A^t_{c,d^e,i} \Big),
		\end{equation*}
		for $t = s,z$ and $e\in\mathcal{E}$.
		The $\{T^t_{i,j}(t_i)\}$, $\{\Gamma^t_{c,d^e,i,j}\}$ denote the sufficient statistics and natural parameters, $\{B_i\}$ and $\{A_{c,d^e,i}^t\}$ denote the base measures and normalizing constants to ensure the integral of distribution equals to 1. Let {\small $\mathbf{T}^t(t)\!:=\![\mathbf{T}^{t}_{1}(t_{1}),...,\mathbf{T}^{t}_{q_t}(t_{q_{t}}) ]$ $\!\in\! \mathbb{R}^{k_t \times q_t}$ $\big(\mathbf{T}^t_{i}(t_{i}) \!:=\! [T^t_{i,1}(t_i),...,T^t_{i,k_t}(t_i)], \forall i \in [q_t]\big)$}, {\small$\bm{\Gamma}^t_{c,d^e} \!:=\! \left[\bm{\Gamma}^{t}_{c,d^e,1},...,\bm{\Gamma}^{t}_{c,d^e,q_t} \right]$ $\!\in\! \mathbb{R}^{k_t \times q_t}$ $\big(\bm{\Gamma}^t_{c,d^e,i} \!:=\! [\Gamma^t_{c,d^e,i,1},...,\Gamma^t_{c,d^e,i,k_t}], \forall i \in [q_t]\big)$}. We further assume that the $P^e(C)$ serves to discrete distributions on the set $\{c_1,...,c_R\}$, with which the $p^e(s,z) := \int p(s|c)p(z|c)dP^e(c)$ can be regarded as the mixture of distributions that belong to exponential family. 
		
		For our supervised scenario  with additional goals of disentangle $S$ and $Z$ and identifying the $p(y|s)$, we extend the $\sim_p$-identfiability \cite{khemakhem2020variational} of $\theta:=\Big\{f_x,f_y,\mathbf{T}^s,\mathbf{T}^z,\{P^e(C)\}_e \Big\}$:
		\begin{definition}[$\sim_p$-identifiability]
			\label{p-identifiable}
			We define a binary relation $\theta \sim_p \tilde{\theta}$ on the parameter space of $\mathcal{X} \times \mathcal{Y}$: there exist two sets of permutation matrices and vectors, $(M_s, a_s)$ and $(M_z, a_z)$ for $s$ and $z$ respectively, such that for any $(x,y) \in \mathcal{X} \times \mathcal{Y}$, the following hold: 
			\vspace{-0.1cm}
			\begin{align}
			& \tilde{\mathbf{T}}^s([\tilde{f}_{x}^{-1}]_{\mathcal{S}}(x)) = M_s \mathbf{T}^s([f_{x}^{-1}]_{\mathcal{S}}(x)) + a_s; \label{eq:s} \\
			& \tilde{\mathbf{T}}^z([\tilde{f}_{x}^{-1}]_{\mathcal{Z}}(x)) = M_z \mathbf{T}^z([f_{x}^{-1}]_{\mathcal{Z}}(x))  + a_z; \label{eq:z} \\
			& p_{\tilde{f}_y}(y|[\tilde{f}_{x}^{-1}]_{\mathcal{S}}(x)) = p_{f_y}(y|[f_{x}^{-1}]_{\mathcal{S}}(x)). \label{eq:y_1} 
			\end{align}
			We then say that $\theta$ is $\sim_p$-identifiable, if for any $\tilde{\theta}$, $p_\theta^e(x,y) = p_{\tilde{\theta}}^e(x,y) ~\forall e \in \mathcal{E}_{\mathrm{train}}$, implies $\theta \sim_p \tilde{\theta}$.
		\end{definition}
		This definition is inspired by but beyond the scope of unsupervised scenario considered in nonlinear ICA \citep{hyvarinen2019nonlinear, khemakhem2020variational} to further disentangle $S$ from $Z$ and identify the CI for prediction. To connect these results with practical inference, recall that as hidden factors are unobserved, the first step during test stage is to estimate the $Y$-causative factor from $f^{-1}_x$ (or from $\argmax_{s,z} \log p(x|s,z)$). The \eqref{eq:s} claims that such an estimation is up to permutation and point-wise addition, which implicates that the identified $s$ (characterized by sufficient statistics) does \emph{not} mix with the information of $Z$. With such learned $s$ (\emph{i.e.}, $[\tilde{f}^{-1}]_{\mathcal{S}}(x)$), the next step is to implement $p_{f_y}(y|s)$ for prediction. The \eqref{eq:y_1} guarantees that the learned $p_{\tilde{f}_y}(y|[\tilde{f}^{-1}]_{\mathcal{S}}(x))$ can recover the ground-truth predicting mechanism, \emph{i.e.},  $p_{f_y}(y|[f_{x}^{-1}]_{\mathcal{S}}(x))$. The formal result is presented in theorem~\ref{thm:iden}.

		\begin{theorem}[$\sim_p$-identifiability]
			\label{thm:iden}
			For $\theta$ of $\mathcal{P}_{\exp}$ in Def.~\ref{def:lacim} with $m:=|\mathcal{E}_{\mathrm{train}}|$, under following assumptions:
			\begin{enumerate}[topsep=0pt,itemsep=-1ex,partopsep=0ex,parsep=1ex]
				\item The characteristic functions of $\varepsilon_x,\varepsilon_y$ are almost everywhere nonzero.
				\item $f_x$, $f'_x, f''_x$ are continuous and $f_x, f_y$ are bijective;
				\item The $\{T^t_{i,j}\}_{1\leq j \leq k_t}$ are linearly independent in $\mathcal{S}$ or $\mathcal{Z}$ for each $i \in [q_t]$ for any $t=s,z$; and $T^t_{i,j}$ are twice differentiable for any $t=s,z, i \in [q_t], j \in [k_t]$;
				\item The $\{\left(\mathbf{T}^s([f^{-1}]_{\mathcal{S}}(x)),\mathbf{T}^z([f^{-1}]_{\mathcal{Z}}(x))\right); \mathcal{B}(x)>0\}$ contains a non-empty open set in $\mathbb{R}^{q_s\times k_s + q_z\times k_z}$, with
				\begin{align}
				\mathcal{B}(x):=\prod_{i_s \in [q_s]} B_{i_s}([f^{-1}]_{i_s}(x))\prod_{i_z \in [q_z]} B_{i_z}([f^{-1}]_{i_z}(x)), \nonumber
				\end{align}
				\item The $L:=[P^{e_1}(C)^\mathsf{T},...,P^{e_m}(C)^\mathsf{T}]^\mathsf{T} \in \mathbb{R}^{m \times R}$ and $\big[[\bm{\Gamma}^{t=s,z}_{c_2,d^{e_1}} - \bm{\Gamma}^{t=s,z}_{c_1,d^{e_1}}]^\mathsf{T},...,[\bm{\Gamma}^{t=s,z}_{c_{R},d^{e_m}} - \bm{\Gamma}^{t=s,z}_{c_1,d^{e_1}}]^\mathsf{T}\big]^\mathsf{T} \in \mathbb{R}^{(R\times m) \times (q_t \times k_t)}$ have full column rank,
			\end{enumerate}
			we have that the $\theta$ is $\sim_p$ identifiable.
		\end{theorem}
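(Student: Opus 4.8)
The plan is to adapt the deconvolution-plus-exponential-family machinery of \citet{khemakhem2020variational} to our three new complications: the two-block latent structure $(S,Z)$, the discrete confounder $C$ that turns the latent law into a \emph{mixture} of exponential families, and the supervised label $Y$. The first move is to isolate the $X$-channel: integrating the hypothesized equality $p^e_\theta(x,y)=p^e_{\tilde\theta}(x,y)$ over $y$ gives $p^e_\theta(x)=p^e_{\tilde\theta}(x)$ for every $e$, which by the ANM is the convolution of $p_{\varepsilon_x}$ with the pushforward of $p^e(s,z)$ through $f_x$. Passing to characteristic functions and invoking Assumption~1 (that of $\varepsilon_x$ is a.e.\ nonzero) lets me divide out the noise factor, so the two pushforward densities agree for each $e$. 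Using Assumption~2 (bijective, $C^2$ map $f_x$) and the change-of-variables formula, I transport this identity onto $\mathcal S\times\mathcal Z$, obtaining for all $x$ that $p^e_\theta(f_x^{-1}(x))\,|\det J_{f_x^{-1}}(x)| = p^e_{\tilde\theta}(\tilde f_x^{-1}(x))\,|\det J_{\tilde f_x^{-1}}(x)|$.

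The crux, and the step I expect to be the main technical obstacle, is inverting the discrete-confounder mixture $p^e(s,z)=\sum_{r=1}^{R} P^e(c_r)\,p_{\mathbf T^s,\bm\Gamma^s_{c_r,d^e}}(s)\,p_{\mathbf T^z,\bm\Gamma^z_{c_r,d^e}}(z)$, because the log-linearity of a single exponential family that drives the nonlinear-ICA argument is destroyed by the sum, and moreover each component itself varies with $e$ through $\bm\Gamma^t_{c_r,d^e}$. This is precisely where the two halves of Assumption~5 are meant to cooperate: the full column rank of $L=[P^{e_1}(C)^\mathsf{T},\dots,P^{e_m}(C)^\mathsf{T}]^\mathsf{T}$ furnishes enough diversity across the $m$ environments to disentangle the contributions of the $R$ confounder values, while the full column rank of the stacked natural-parameter \emph{differences} $\bm\Gamma^t_{c_r,d^e}-\bm\Gamma^t_{c_1,d^{e_1}}$ over the $R\times m$ grid ensures that the resulting probes excite every coordinate of the sufficient statistics. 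Once the mixture is resolved into its exponential-family pieces, I would take logarithms and form differences across the $(c_r,d^e)$ pairs to annihilate the base-measure terms $B_i$ and the ($e$-independent) Jacobian, reducing the identity to a linear system whose inversion yields an affine relation $\tilde{\mathbf T}(\tilde f_x^{-1}(x)) = M\,\mathbf T(f_x^{-1}(x)) + a$ with $M$ invertible.

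Upgrading $M$ from merely invertible to a permutation (the scaling being absorbable into the natural parameters) then follows the nonlinear-ICA template: Assumption~3 (linear independence and twice-differentiability of the $T^t_{i,j}$) together with the open-set condition of Assumption~4 rules out nontrivial mixing within each block. It remains to force the block structure, namely that $M$ does not route $\mathcal Z$-coordinates into the recovered $S$; this separation is what produces the two independent permutation pairs $(M_s,a_s)$ and $(M_z,a_z)$ of \eqref{eq:s}--\eqref{eq:z}. Here I would exploit the asymmetry that only $S$ generates $Y$: the ``information-intersection'' identity $f_y^{-1}(\bar y)=[f_x^{-1}]_{\mathcal S}(\bar x)$ pins the $\mathcal S$-block down through the label channel, so any $\tilde\theta$ reproducing the full joint $p^e(x,y)$ cannot mix $\mathcal Z$ into $S$, forcing $M$ to be block-diagonal. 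I anticipate this disentanglement to be the most delicate step conceptually, as it is the genuinely new ingredient relative to the unsupervised theory.

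Finally, to obtain the predictor identity \eqref{eq:y_1}, I would return to the full equality $p^e_\theta(x,y)=p^e_{\tilde\theta}(x,y)$ with the $X$-marginal and the $\mathcal S$-block already matched, and deconvolve a second time in $Y$ using Assumption~1 for $\varepsilon_y$ and the bijectivity of $f_y$. This strips the additive label noise and leaves $p_{\tilde f_y}(y\,|\,[\tilde f_x^{-1}]_{\mathcal S}(x))=p_{f_y}(y\,|\,[f_x^{-1}]_{\mathcal S}(x))$, completing the three claims of $\sim_p$-identifiability. In summary, the proof threads deconvolution (Assumption~1), change of variables (Assumption~2), mixture inversion via the rank conditions (Assumption~5), the permutation upgrade (Assumptions~3--4), and the label-channel disentanglement, with the mixture inversion and the $S$/$Z$ separation being the two places where the argument departs most sharply from prior work.
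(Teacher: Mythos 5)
Your plan follows essentially the same route as the paper's proof: deconvolve via characteristic functions (assumption 1), change variables through the bijective $f_x,f_y$ (assumption 2), linearize in the sufficient statistics by taking logs and differencing against a reference environment, invert using the rank conditions, and upgrade the resulting invertible matrices to block permutations via the nonlinear-ICA lemmas. The paper also realizes your ``information-intersection'' idea concretely: it derives three affine relations --- one from $p^e(x)$, one from $p^e(y)$, and one from the joint $p^e(x,y)$ --- and substitutes the $y$-relation into the other two, which is precisely what decouples the $\mathcal{S}$- and $\mathcal{Z}$-blocks and produces the separate pairs $(M_s,a_s)$ and $(M_z,a_z)$; the permutation property of $M_s$ is then obtained by applying the Khemakhem-type lemmas to the $f_y$-channel relation.

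The one genuine gap is in your treatment of the mixture over $C$. The identity to be resolved has the form $\sum_{r}P^e(c_r)\,p_\theta(x,y|c_r)=\sum_{r}\tilde{P}^e(c_r)\,p_{\tilde\theta}(x,y|c_r)$, in which \emph{both} the weights and the components are unknown; the full column rank of $L$ only yields componentwise equality \emph{after} one has shown $P^e(C)=\tilde{P}^e(C)$, since only then does the system collapse to $L\Delta=0$ with a single known $L$. That prior step is supplied by the classical identifiability of finite mixtures of exponential families (the paper invokes Barndorff-Nielsen's Corollary 3), and it is exactly what the open-set condition of assumption 4 is for. You instead assign assumption 4 to the permutation-upgrade stage, where it plays no role --- in the paper that upgrade (lemmas~\ref{lemma:k1} and~\ref{lemma:k2}) needs only the twice-differentiability (or non-monotonicity) of the sufficient statistics from assumption 3. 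So as written your mixture inversion is under-justified, and assumption 4 is left doing no work where it is actually needed. A smaller divergence: for the final claim on $p_{f_y}(y|\cdot)$, the paper does not deconvolve in $y$ but deconvolves the joint in $x$ with $y$ held fixed, then takes log-ratios over two values $y_1,y_2$ and integrates out $y_2$; your proposed $y$-deconvolution would additionally have to contend with the $x$-noise still folded into the joint.
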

		

		The bijectivity of $f_x$ and $f_y$ have been widely assumed in \citet{janzing2009identifying, peters2014causal, peters2017elements, khemakhem2020variational, teshima2020few} as a basic condition for identifiability. It naturally holds for $f_x$ to be bijective since the latent components $S,Z$, as high-level abstractions which can be viewed as embeddings in auto-encoder \citep{kramer1991nonlinear}, lies in lower-dimensional space compared with input $X$ which is supposed to have more variations, \textit{i.e.}, ($q_s + q_z < q_x$). For categorical $Y$, the $f_y$ which generates the classification result, \textit{i.e.}, $p(y=k|s) = [f_y]_k(s)/\left(\sum_k [f_y]_k(s) \right)$, will be shown later to be identifiable. 
		
		The containment of an open set in assumption (4) for $\{\left(\mathbf{T}^s([f^{-1}]_{\mathcal{S}}(x)),\mathbf{T}^z([f^{-1}]_{\mathcal{Z}}(x))\right); \mathcal{B}(x)>0\}$ implies that space expanded by sufficient statistics are dense in some open set, as a sufficient condition for the mixture distribution $P^e(C)$ and also $P^e(X,Y|c)$ to be identified.

		The diversity assumption (5) implies that \textbf{i)} $m \geq R$ and $m*R \geq \max(k_z*q_z,k_s*q_s)+1$; and that \textbf{ii)} different environments are diverse enough in terms of $S$-$Z$ correlation, as an almost a necessary for the invariant one to be identified (a different version is assumed in \cite{arjovsky2019invariant}). In supplement~\ref{appx:proofs-iden}, we will show that the \textbf{ii)} can hold unless the space of $\bm{\Gamma}$ belong to a zero-(Lebesgue) measure set. As noted in the formulation, a larger $m$ would be easier to satisfy the condition, which agrees with the intuition that more environments can provide more complementary information for the identification of the invariant mechanisms.

		\begin{table*}[!h]
			\centering
			\setlength{\tabcolsep}{5.5pt}
			\small
			\begin{tabular}{l|ll|ll|ll|ll|ll|ll}
				\toprule
				& \multicolumn{2}{c|}{Data \#1} & \multicolumn{2}{c|}{Data \#2} & \multicolumn{2}{c|}{Data \#3} & \multicolumn{2}{c|}{Data \#4} & \multicolumn{2}{c|}{Data \#5} & \multicolumn{2}{c}{Average} \\
				\cmidrule{2-13}
				&  $Z$ & $S$ & $Z$ & $S$ & $Z$ & $S$ & $Z$ & $S$ & $Z$ & $S$ & $Z$ & $S$  \\ 
				\midrule
				pool-LaCIM & 0.26 & 0.61 & 0.26 & 0.67 & 0.44 & 0.70 & 0.51 &  0.78 & 0.58 &  0.77 & 0.41 & 0.71   \\
				\midrule
				{LaCIM (\textbf{Ours}, $m=3$)} & 0.52 & 0.92  & 0.61 & 0.86 & 0.70 & 0.83 & 0.70 & 0.86 & 0.62 & 0.77 &  0.63 & \textbf{0.84}  \\
				LaCIM (\textbf{Ours}, $m=5$) & 0.61 & 0.85  & 0.77 & 0.85 & 0.72 & 0.80 & 0.72 & 0.79 & 0.69 & 0.85 &  \textbf{0.71}  & \textbf{0.84} \\
				\bottomrule
			\end{tabular}
			\label{table:sim-con-2} 
			\caption{MCC of identified latent variables. Average over 20 times for each data.} 
		\end{table*}
		\textbf{Extension to the general parameterization of LaCIM.} We extend the theorem~\ref{thm:iden} to general parameterization of LaCIM as long as its $\mbox{P}(S,Z|C=c) \in W^{r,2}(\mathcal{S} \times \mathcal{Z})$ (for some $r \geq 2$) and categorical $Y$, in the following theorem. This is accomplished by proving that any model in LaCIM can be approximated by a sequence of distributions with parameterization in $\mathcal{P}_{\exp}$, motivated by \citet{barron1991approximation} that the exponential family is dense in the set of distributions with bounded support, and in \citet{maddison2016concrete} that the continuous variable with multinomial logit model can be approximated by a series of distributions with \emph{i.i.d} Gumbel noise as the temperature converges to infinity.
		\begin{theorem}[Asymptotic $\sim_p$-identifiability]
			\label{thm:p-iden}
			Suppose the LaCIM satisfy that $p(x|s,z)$ and $p(y|s)$ are smooth w.r.t $s,z$ and $s$ respectively. For each $e$ and $c \in \mathcal{C}$, suppose $\mbox{P}^e(S,Z|c) \in W^{r,2}(\mathcal{S} \times \mathcal{Z})$ for some $r \geq 2$, we have that the LaCIM is asymptotically $\sim_p$-identifiable: $\forall \epsilon > 0$, $\exists \sim_p$-identifiable $\tilde{\mbox{P}}_{\theta} \in \mathcal{P}_{\exp}$, s.t. $d_{\mathrm{Pok}}(p^{e}(X,Y), \tilde{p}^e_\theta(X,Y)) < \epsilon, \forall e \in \mathcal{E}_{\mathrm{train}}$ \footnote{The $d_{\mathrm{Pok}}(p^1,p^e)$ denotes the Pokorov distance between $p^1$ and $p^2$, with $\lim_{n \to \infty} d_{\mathrm{Pok}}(\mu_n,\mu)$ $\to 0$ $\Longleftrightarrow \mu_n \overset{d}{\to} \mu$.}.
		\end{theorem}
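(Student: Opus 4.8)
The plan is to construct, for any target LaCIM satisfying the hypotheses, a sequence of models in $\mathcal{P}_{\exp}$ that converges to it in distribution on $(X,Y)$ in every training environment, and then to invoke Theorem~\ref{thm:iden} to guarantee that a tail member of that sequence is $\sim_p$-identifiable. The argument splits into three essentially independent approximation problems --- the latent conditionals $S,Z\mid C$, the mixing measure $P^e(C)$, and the categorical mechanism $p(y\mid s)$ --- whose errors I will combine through the triangle inequality for $d_{\mathrm{Pok}}$, using that $\mathcal{E}_{\mathrm{train}}$ is finite so that a single tolerance controls all environments uniformly.

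First I would approximate the latent part. Because conditioning on $C$ blocks the fork $S\leftarrow C\to Z$ while the collider at $X$ stays blocked, the target conditional factorizes as $\mathbb{P}^e(S,Z\mid c)=\mathbb{P}^e(S\mid c)\,\mathbb{P}^e(Z\mid c)$, matching the product form prescribed in $\mathcal{P}_{\exp}$. Since $\mathbb{P}^e(S,Z\mid c)\in W^{r,2}(\mathcal{S}\times\mathcal{Z})$ with $r\geq 2$, each marginal log-density is Sobolev-smooth, so by the exponential-family density approximation of \citet{barron1991approximation} I can choose sufficient statistics $\mathbf{T}^s,\mathbf{T}^z$ (e.g.\ a polynomial or spline basis) and natural parameters $\bm{\Gamma}^s_{c,d^e},\bm{\Gamma}^z_{c,d^e}$ so that, as the number of basis functions $k_s,k_z\to\infty$, the exponential-family densities converge to the targets in Kullback--Leibler divergence, hence in total variation. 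If $C$ is continuous I would additionally discretize $P^e(C)$ onto a finite set $\{c_1,\dots,c_R\}$, whose pushforward converges to the true mixing measure; together these yield convergence of the mixture $p^e(s,z)=\int p(s\mid c)p(z\mid c)\,dP^e(c)$ in distribution.

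Second, for the categorical output I would realize the multinomial-logit mechanism $p(y=k\mid s)\propto [f_y]_k(s)$ inside the ANM form $y=f_y(s)+\varepsilon_y$ through the Gumbel-max representation, and use the Concrete/Gumbel-softmax relaxation of \citet{maddison2016concrete}: taking the temperature to its limit yields a family of smooth mechanisms with i.i.d.\ Gumbel noise $\varepsilon_y$ converging in distribution to the target categorical law. Pushing the converging latent laws and $Y$-mechanism through the continuous maps $f_x,f_y$ (with $X=f_x(S,Z)+\varepsilon_x$, and $p(x\mid s,z),p(y\mid s)$ smooth by hypothesis), a continuous-mapping/Slutsky argument transfers convergence in distribution to the observables $(X,Y)$; since $d_{\mathrm{Pok}}$ metrizes weak convergence, $d_{\mathrm{Pok}}(p^e(X,Y),\tilde p^e_\theta(X,Y))<\epsilon$ can be achieved simultaneously for the finitely many $e\in\mathcal{E}_{\mathrm{train}}$.

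The main obstacle --- and the last step --- is to ensure the chosen approximant is not merely close but actually $\sim_p$-identifiable, i.e.\ that it meets assumptions (3)--(5) of Theorem~\ref{thm:iden} at the same time as the accuracy target. Assumptions (1)--(4) can be arranged by construction (Gumbel characteristic functions are nonzero almost everywhere; a polynomial or spline basis is linearly independent, twice differentiable, and its image contains a nonempty open set), but the diversity/full-rank condition (5) is delicate, since it constrains precisely the natural parameters $\bm{\Gamma}^t_{c,d^e}$ that are being tuned to fit the data. Here I would invoke the genericity remark following Theorem~\ref{thm:iden}: condition (5) fails only on a zero-Lebesgue-measure subset of parameter space, so within any weak neighborhood of the fitted approximant there is a parameter configuration that both stays within $\epsilon$ in $d_{\mathrm{Pok}}$ and satisfies (5). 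Selecting that configuration produces a $\sim_p$-identifiable $\tilde{\mathbb{P}}_\theta\in\mathcal{P}_{\exp}$ within $\epsilon$ of the target for every training environment, which is exactly the asserted asymptotic $\sim_p$-identifiability.
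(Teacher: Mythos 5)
Your proposal is correct and follows essentially the same route as the paper's proof: approximating $\mbox{P}^e(S,Z|c)$ by exponential families with polynomial sufficient statistics via \citet{barron1991approximation}, handling categorical $Y$ through the Gumbel/Concrete relaxation of \citet{maddison2016concrete} with temperature tending to its limit, and securing the diversity/full-rank condition by the genericity argument (the paper's Theorem~\ref{thm:assump-4-understand}, exploiting that the admissible $\bm{\Gamma}$'s form an open, hence non-null, set). The only cosmetic difference is that the paper establishes weak convergence of the observational law by a direct four-term estimate with compact truncation rather than your continuous-mapping/triangle-inequality packaging.
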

		
		
		\subsection{Learning and Inference} 
		\label{sec:vae}
		
		Guided by the identifiability result, we in this section introduce our learning method to identify the CI, \emph{i.e.}, $p(x|s,z)$ and $p(y|s)$ for prediction. Roughly speaking, we first introduce our learning method as a generative model guided by Fig.~\ref{fig:lacim} (c) to learn the CI during training phase, followed by inference method for prediction. For inference method, we first leverage the learned $p(x|s,z)$ for estimating the value of $Y$-causative factor (\emph{a.k.a} $S$) that is ensured to be able to not mix the information from $Y$-causative factor (\emph{a.k.a} $Z$), followed by $p(y|s)$ for prediction. 
		
		\subsubsection{Learning Method}

		To learn the CI $p(x|s,z),p(y|s)$ for invariant prediction, we implement the generative model to fit $\{p^e(x,y)\}_{e \in \mathcal{E}_{\mathrm{train}}}$. Specifically, we reformulate the objective of Variational Auto-Encoder (VAE), as a generative model proposed in \citep{kingma2014auto}, in supervised scenario. As a latent generative model on the unsupervised Bayesian network $Z \to X$, the VAE was proposed in \cite{kingma2014auto} for unsupervised generation of high-dimensional data (such as image) that can makes the traditional methods like Markov chain Monte Carlo (MCMC) intractable. Specifically, to make it tractable, the VAE introduces the variational distribution $q_\psi$ parameterized by $\psi$ to approximate the intractable posterior by maximizing the following \textbf{E}vidence \textbf{L}ower \textbf{B}ound (ELBO): 
		\begin{align}
		-\mathcal{L}_{\theta,\psi} = \mathbb{E}_{p(x)}\Big[ \mathbb{E}_{q_\psi(v|x)}\log{\frac{p_\theta(x,v)}{q_\psi(v|x)}}\Big], \nonumber
		\end{align}
		as a tractable surrogate of $\mathbb{E}_{p(x)}\log{p_{\theta}(x)}$. In details, the ELBO is less than and equal to $\mathbb{E}_{p(x)}\big[\log{p_\theta(x)}\big]$ and the equality can only be achieved when $q_\psi(v|x) \!=\! p_\theta(v|x)$. Therefore, maximizing the ELBO over $p_\theta$ and $q_{\psi}$ will drive \textbf{(i)} $q_\psi(v|x)$ to learn $p_\theta(v|x)$; \textbf{(ii)} $p_{\theta}$ to learn the ground-truth model $p$ (including $p_{\theta}(x|v)$ to learn $p(x|v)$).


		{In our scenario, we introduce the variational distribution $q^{e}_{\psi}(s,z|x,y)$ for each environment $e$. The corresponding ELBO for $e$ is 
			\begin{align*}
			-\mathcal{L}^e_{\theta,\psi} \!\!=\!\! \mathbb{E}_{p^e(x,y)} \big[ \mathbb{E}_{q^e_\psi(s,z|x,y)} \log{ \frac{p^{e}_{\theta}(x,y,s,z)}{q^e_\psi(s,z|x,y)}} \big].
			\end{align*}
			Similarly, minimizing $\mathcal{L}^e_{\theta,\psi}$ can drive $p_{\theta}(x|s,z),p_{\theta}(y|s)$ to learn the CI (\textit{i.e.} $p(x|s,z),p(y|s)$), and also $q_{\psi}^e(s,z|x,y)$ to learn $p^e_{\theta}(s,z|x,y)$. Therefore, the $q_\psi$ can inherit the properties of $p_\theta$. As $p^e_{\theta}(s,z|x,y) \!=\! \frac{p^e_{\theta}(s,z|x)p_{\theta}(y|s)}{p^e_{\theta}(y|x)}$ for our DAG in Fig.~\ref{fig:lacim}, we can similarly reparameterize $q^e_{\psi}(s,z|x,y)$ as  $\frac{q^e_{\psi}(s,z|x)q_{\psi}(y|s)}{q^e_{\psi}(y|x)}$. Since the goal of $q_\psi$ is to mimic the behavior of $p_\theta$, we can replace $q_{\psi}(y|s)$ with $p_\theta(y|s)$. Besides, according to Causal Markov Condition, we have that $p^e_{\theta}(x,y,s,z) = p_{\theta}(x|s,z)p^e_{\theta}(s,z)p_{\theta}(y|s)$, with $p_{\theta}(x|s,z),p_{\theta}(y|s)$ shared across all environments. The $\mathcal{L}^e_{\theta,\psi}$ can be rewritten as: 
			\begin{equation}
			\begin{aligned}
			\mathcal{L}^e_{\theta,\psi} &= \mathbb{E}_{p^e(x,y)}\Big[ -\log{q^e_{\psi}(y|x)}-\\
			\quad\quad& \mathbb{E}_{q^e_\psi(s,z|x)} \frac{p_{\theta}(y|s)}{q^e_\psi(y|x)} \log{ \frac{p_\theta(x|s,z)p^e_\theta(s,z)}{q^e_\psi(s,z|x)} } \Big],
			\end{aligned}
			\label{eq:csvae}
			\end{equation}
			where $q^e_\psi(y|x) = \int_{\mathcal{S}} q^e_{\psi}(s|x)p_{\theta}(y|s)ds$. We parameterize the prior model $p^e_\theta(s,z)$ and inference model $q^e_{\psi}(s,z|x)$ as $p_\theta(s,z|I^e)$ and $q_{\psi}(s,z|x,I^e)$, in which $I^e$ (of environment $e$) denotes the domain \emph{index} that can be represented by the one-hot encoded vector with length $m := |\mathcal{E}_{\mathrm{train}}|$. 
			The overall loss function is: 
			\begin{align}
			\label{eq:lacim-unpool}
			\mathcal{L}_{\theta,\psi} = \sum_{e \in \mathcal{E}_{\mathrm{train}}} \mathcal{L}^e_{\theta,\psi}.
			\end{align}
			The training datasets $\{\mathcal{D}^e\}_{e \in \mathcal{E}_{\mathrm{train}}}$ are applied to optimize the prior models $p(s,z|I^e)$, inference models $\{q_\psi(s,z|x,I^e)\}_e$, generative model $p_\theta(x|s,z)$ and predictive model $p_\theta(y|s)$. Particularly, the parameters of generative models $p_{\theta}(x|s,z),p_{\theta}(y|s)$ are shared among all environments, corresponding to the invariance property of CI across all domains.

			
			
			\subsubsection{Inference \& Test.} 	
			\label{sec:inference}
			According to Prop.~\ref{prop:CI}, the $p(x|do(s),do(z)) = p(x|s,z)$ and $p(y|s)$ are invariant across all domains. Therefore, for any test sample $x$ generated from $(s^{\star},z^{\star})$, we can leverage the learned $p_\theta(x|s,z)$ to estimate $s,z$, then apply the $p_\theta(y|s)$ for prediction. Specifically, we first optimize $s,z$ via
			\begin{equation}
			\label{eq:optimize}
			\max_{s,z} \log{p_\theta(x|s,z)} + \lambda_s \Vert s \Vert_2^2 + \lambda_z \Vert z \Vert_2^2,
			\end{equation}
			with hyperparameters $\lambda_s>0$ and $\lambda_z>0$, by adopting the strategy in \cite{schott2018towards} that we first sample $k$ from $\mathcal{N}(0,I)$ and select the one that maximizes the \eqref{eq:optimize} as initial point, then we implement Adam to optimize for $T$ iterations. The implementation details and optimization effect are shown in supplement~\ref{appx:optimize-sz}. 
			
			After obtaining the estimated $s^\star,z^\star$, we then implement the learned $p_\theta(y|s^\star)$ for prediction: $\tilde{y}:=\argmax_y p_\theta(y|s^\star)$.


			\section{Experiments}
			
			We evaluate LaCIM on synthetic data to verify the identifiability in theorem~\ref{thm:iden} and OOD challenges: object classification with sample selection bias (Non-I.I.D. Image dataset with Contexts (NICO)); Hand-Writing Recognition with confounding bias (Colored MNIST (CMNIST)); prediction of Alzheimer's Disease (Alzheimer’s Disease Neuroimaging Initiative (ADNI \url{www.loni.ucla.edu/ADNI}). 
			

			\subsection{Simulation}
			\label{exp-sim}


			To verify the identifiability claim and effectiveness of our learning method, we implement LaCIM on synthetic data. The domain index $I^e \in \mathbb{R}^m$ denotes the one-hot encoded vector with $m=5$. To verify the effectiveness of training on multiple diverse domains ($m>1$), we also implement LaCIM by pooling data from all $m$ domains together, namely pool-LaCIM for comparison. We randomly generate $m=5$ datasets (with generating process introduced in supplement~\ref{appx-exp-sim}) and run 20 times for each. We compute the metric mean correlation coefficient (MCC) adopted in \cite{khemakhem2020variational}, which measures the goodness of identifiability under permutation by introducing cost optimization to assign each learned component to the source component. This measurement is aligned with the goal of $\sim_p$-identifiability, which allows us to distinguish $S$ from $Z$. Table~\ref{table:sim-con-2} shows the superiority of our LaCIM over pool-LaCIM in terms of $S, Z$ under permutation, by means of multiple diverse experiments. Besides, we conduct LaCIM on $m=3,5$ with the same total number of samples. It yields that more environments can perform better; and that even $m=3$ still performs much better than pool-LaCIM. To illustrate the learning effect, we visualize the learned $Z$ in Fig.~\ref{fig:sim-visualize}, with $S$ left in supplement~\ref{appx-exp-sim} due to space limit.  

			\begin{figure}[ht]
				\vspace{-0.2cm}
				\centering
				\begin{tabular}{ccccccc}
					\includegraphics[width=0.14\textwidth]{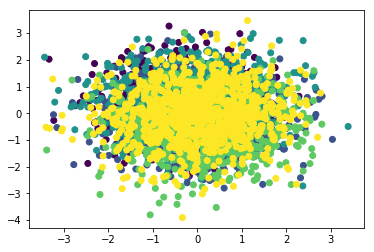} & \includegraphics[width=0.147\textwidth]{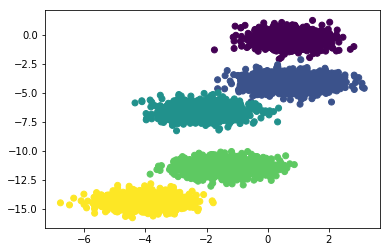} & \includegraphics[width=0.14\textwidth]{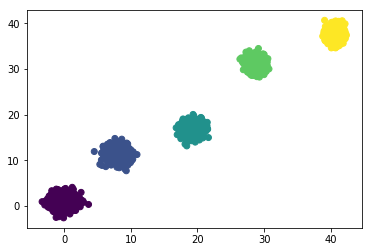} \\
					(a)  pool-LaCIM & (b) LaCIM & (c) $p_{\theta^\star}(z|D)$ 
				\end{tabular}
				\label{fig:sim-visualize}
				\caption{Estimated posterior by (a) pool-LaCIM; (b) LaCIM and (c) the ground-truth. As shown, the LaCIM can identify the $Z$ (up to permutation and point-wise transformation), which validates the \eqref{eq:z} in theorem~\ref{thm:iden}.}
			\end{figure}
			
			\begin{table*}[ht]
				\caption{Accuracy (\%) of OOD prediction. Average over ten runs.} 
				\label{table:acc} 
				\centering
				\setlength{\tabcolsep}{1.3pt}
				\renewcommand{\arraystretch}{1.2}
				\resizebox{\textwidth}{!}{
					\begin{tabular}{l|l|l|l|l|l|l|l|l|l}
						\toprule
						\multirow{3}{*}{\diagbox[trim=l,height=4.5\line]{Method}{Dataset}} & \multicolumn{4}{c|}{NICO} & \multicolumn{2}{c|}{CMNIST } & \multicolumn{3}{c}{ADNI $(m=2)$} \\
						\cmidrule{2-10}
						&  \multicolumn{2}{c|}{$m=8$} & \multicolumn{2}{c|}{$m=14$} & \multicolumn{2}{c|}{$m=2$} & $D$: Age & $D$: TAU & \multirow{2}{*}{\# Params} \\
						\cmidrule{2-9}
						&  ACC & \# Params & ACC & \# Params & ACC & \# Params & ACC & ACC &  \\
						\hline
						CE $X \to Y$ & $60.3 \pm 2.8$ & 18.08M & $59.3 \pm 2.1$ & 18.08M & $91.9 \pm 0.9$ & 1.12M & $62.1 \pm 3.2$ &  $64.3 \pm 1.0$ & 28.27M \\
						\hline
						DANN &  $58.9 \pm 1.7$ & 19.13M & $60.1 \pm 2.6$ & 26.49M & $84.8 \pm 0.7$ & 1.1M & $61.0 \pm 1.5$ & $65.2 \pm 1.1$ & 30.21M \\
						\hline 
						MMD-AAE & $60.8 \pm 3.4$ & 19.70M & $64.8 \pm 7.7$ & 19.70M & $92.5 \pm 0.8$ & 1.23M & $60.3 \pm 2.2$ & $65.2 \pm 1.5$ & 36.68M \\
						\hline
						{DIVA} &  {$58.8 \pm 3.4$} & 14.86M & {$58.1 \pm 1.4$} & 14.87M & {$86.1 \pm 1.0$} & 1.69M & {$61.8 \pm 1.8$} & {$64.8 \pm 0.8$} & 33.22M \\
						\hline
						IRM & $61.4 \pm 3.8$ & 18.08M &  $62.8 \pm 4.6$ & 18.08M & $92.9 \pm 1.2$ & 1.12M & $62.2 \pm 2.6$ &  $65.2 \pm 1.1$ & 28.27M\\
						\hline
						sVAE & $60.4 \pm 2.1$ & 18.25M & $64.3 \pm 1.2$ & 19.70M & $93.6 \pm 0.9$ & 0.92M & $62.7 \pm 2.5$  &  $66.6 \pm 0.8$ & 37.78M \\
						\midrule
						LaCIM (\textbf{Ours}) & \textbf{\bm{$63.2 \pm 1.7$}}  & 18.25M & \textbf{$\mathbf{66.4 \pm 2.2}$} & 19.70M & \textbf{$\mathbf{96.6 \pm 0.3}$} & 0.92M & \textbf{$\mathbf{63.8 \pm 1.1}$} & \textbf{$\mathbf{67.3 \pm 0.9}$} & 37.78M \\
						\bottomrule
					\end{tabular}
				}
			\end{table*}
			\subsection{Real-world OOD Challenge}
			\label{sec:expm-ood}

			We present our LaCIM's results on three OOD tasks.
			
			\textbf{Dataset.}  We describe the datasets as follows {(the $X$ denotes the input; the $Y$ denote the label)}:
			
			\textit{NICO}: we evaluate the cat/dog classification in ``Animal'' dataset in NICO, a benchmark for non-i.i.d problem in \cite{he2019towards}. Each animal is associated with ``grass",``snow" contexts. The $D$ denotes sampler's attributes. We consider two settings: $m=8$ and $m=14$. The $C,Z,S$ respectively denote the (time,whether) of sampling, the context and semantic shape of cat/dog.
			
			\textit{CMNIST}: We relabel the digits 0-4 and 5-9 as $y=0$ and $y=1$, based on MNIST. Then we color $p^e$ ($1-p^e$) of images with $y=0$ ($y=1$) as green and color others as red. We set $m=2$ with $p^{e_1} = 0.95, p^{e_2} = 0.99$. The $D$ can denote the attributes of the painter. We do not flip $y$ with $25\%$ like \cite{arjovsky2019invariant} \footnote{We conduct the flipping setting in supplementary~\ref{appx-exp-cmnist}.}, since doing so will cause the digit correlated rather than causally related to the label, which is beyond our scope. The $Z,S$ respectively represent the color and number. The $C$ can also denote (time,whether) for which the painter $D$ draws the number and color, \emph{e.g.}, the painter tends to draw red 0 more often than green 1 in the sunny morning.
			
			\textit{ADNI.} The $\mathcal{Y}:=\{0,1,2\}$, with 0,1,2 respectively denoting AD, Mild Cognitive Impairment and Normal Control. The $X$ is structural Magnetic resonance imaging. The $m=2$. The $D$ respectively denotes Age, TAU (a biomarker \cite{humpel2011cerebrospinal}). The $S$ ($Z$) denotes the disease-related (-unrelated) brain regions. The $C$ can be the hormone level that affects the brain structure development.
			
			\textbf{Compared Baselines.} We compare with (i) Cross-Entropy (CE) from $X \to Y$ (CE $X \to Y$), (ii) domain-adversarial neural network (DANN) for domain adaptation \cite{ganin2016domain}, (iii) Maximum Mean Discrepancy with Adversarial Auto-Encoder (MMD-AAE) for domain generalization \cite{li2018domain}, (iv) Domain Invariant Variational Autoencoders (DIVA) \cite{ilse2019diva}, (v) Invariant Risk Mnimization (IRM) \cite{arjovsky2019invariant},  (vi) Supervised VAE: our LaCIM implemented by VAE without disentangling $S,Z$ and we call it sVAE for simplicity. 
			
			
			\textbf{Implementation Details.} The network structures of $q^e_\psi(s,z|x)$, $p_\theta(x|s,z)$ and $p_\theta(y|s)$ for CMNIST, NICO and ADNI are introduced in supplement~\ref{appx-exp-cmnist},~\ref{appx-exp-nico},~\ref{appx:expm-ad}, Tab.~\ref{tab:framework},~\ref{tab:module}. We implement SGD as optimizer, with learning rate (lr) 0.5 and weight decay (wd) $1e$-$5$ for CMNIST; lr 0.01 with decaying 0.2$\times$ every 60 epochs, wd $5e$-$5$ for NICO and ADNI (wd is $2e$-$4$). The batch-size are set to 256, 30 and 4 for CMNIST, NICO, ADNI.

			\textbf{Main Results \& Discussions.} We report accuracy over three runs for each method. As shown in Tab.~\ref{table:acc} \footnote{On NICO, we implement ConvNet with Batch Balancing as a specifically benchmark in \cite{he2019towards}. The results are $60 \pm 1$ on $m=8$ and $62.33 \pm 3.06$ on $m=14$.} our LaCIM performs consistently better than others on all applications. The advantage over IRM and CE $X \to Y$ can be contributed to our learning method guided by the causal structure in Fig.~\ref{fig:lacim} and identification of true causal mechanisms. Further, the improvement over sVAE is benefited from our separation of $Y$-causative factor (\textit{a.k.a}, $S$) from others to avoid spurious correlation. Besides, as shown from results on NICO, a larger $m$ (with the total number of samples $n$ fixed) can bring further benefit, which may due to the easier satisfaction of the diversity condition in theorem~\ref{thm:iden}.

			\textbf{Interpretability.} We visualize learned $S$ (and also $Z$) as a side proof of interpretability. We consider two visualization methods, \emph{i.e.}, gradient method \cite{simonyan2013deep} and interpolation, respectively applied on NICO and CMNIST. 
			
			Specifically, for gradient method, we select the dimension of $S$ that has the highest correlation with $y$ among all dimension of $S$, and visualize the derivatives of such dimension of $S$ with respect to the image. For CE $x\to y$, we visualize the derivatives of predicted class scores with respect to the image. As shown in Fig.~\ref{fig:cat-dog-visualize}, LaCIM (the 3rd column) can identify more explainable semantic features than the CE $X \to Y$ which can learn the background information, which verifies the identifiability and effectiveness of the learning method. Supplement~\ref{appx-exp-nico} provides more results. 
			\begin{figure}[h!]
				\centering
				\begin{tabular}{ccccccc}
					& \includegraphics[width=0.2\textwidth]{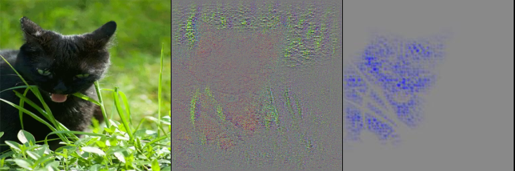}
					& \includegraphics[width=0.2\textwidth]{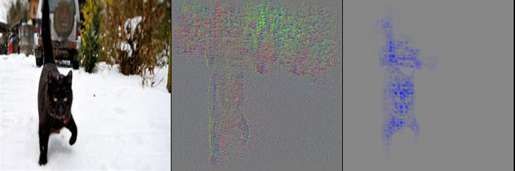}\\
					& (a)  Cat on grass & (b) Cat on snow \\
					& \includegraphics[width=0.2\textwidth]{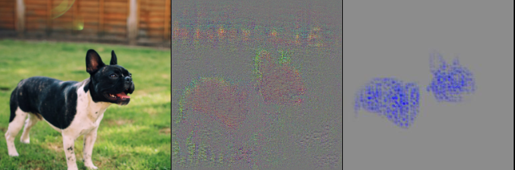}
					& \includegraphics[width=0.2\textwidth]{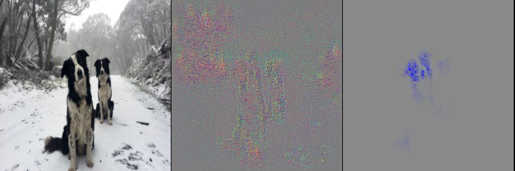} 
					\\
					& (c) Dog on grass & (d) Dog on snow
				\end{tabular}
				\label{fig:cat-dog-visualize}
				\caption{Visualization via gradient \cite{simonyan2013deep}. From the left to right: original image, CE $X \to Y$ and LaCIM.}
				\vspace{-0.2cm}
			\end{figure}
			
			For interpolation, we visualize the generated image by interpolating (one dimension of) $S$ (and $Z$) with fixed $Z$ (and $S$). As shown in Fig.~\ref{fig:vis}, the generated sequential images in 1st row looks more like ``7" from ``0" as $s$ increases; while the sequential images in the 2nd row changes from red to green as $z$ increases. This result can verify the disentanglement of learned $S$ and $Z$, \emph{i.e.}, which respectively capture the digit and color related features. 
			\begin{figure}
				\centering
				\includegraphics[width=0.3\textwidth]{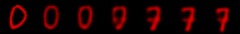}\\
				\includegraphics[width=0.3\textwidth]{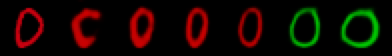}
				\caption{Interpolation of $S$ (and $Z$) with $Z$ (and $S$) fixed.}
				\label{fig:vis}
				\vspace{-0.35cm}
			\end{figure}

			\textbf{Results on Intervened Data.} We test the robustness of our model on intervened data generated from NICO. Each image is generated from a paired image (image A, image B): combining the scene of image A with the animal from image B. This is equivalent to intervention on the latent space. We generate 120 images. As shown in Tab~\ref{tab:interven}, our LaCIM can outperform others.
			\vspace{-0.4cm}
			\begin{table}[h!]
				\centering 
				\small
				\caption{ACC on intervened dataset from NICO.}
				\begin{tabular}{c|c|c|c}
					\hline
					Method & IRM   & DANN  & NCBB  \\
					\hline 
					\textbf{ACC}   & 50.00 & 49.17 & 49.17 \\
					\hline 
					Method & MMD-AAE & DIVA  & LaCIM (\textbf{Ours}) \\
					\hline 
					\textbf{ACC} &  49.17   & 50.00 & \textbf{55.00}        \\ \hline
				\end{tabular}
				\label{tab:interven}
			\end{table}
			

			
			\section{Conclusions \& Discussions}
			
			
			We propose identifying \emph{causal invariance} that is robust to a set of causal models augmented with an auxiliary domain variable that brings about the distribution shift, in order for out-of-distribution prediction. To model high-level concepts of sensory-level data, we introduce latent factors with explicit disentanglement with respect to the output. Under high degree of diversity of multiple domains, we can prove that the $Y$-causative factor can be disentangled from $Y$-non-causative factor and that the causal invariance can be identified. A possible drawback of our model lies in our requirement of the number of environments (which may be not satisfied in some scenarios) for identifiability, and the relaxation of which is left in the future work.

			\newpage
			\bibliographystyle{agsm}
			\begin{small}
				\bibliography{ref_causal}

@inproceedings{janzing2009identifying,
  title={Identifying confounders using additive noise models},
  author={Janzing, D and Peters, J and Mooij, JM and Sch{\"o}lkopf, B},
  booktitle={Proceedings of the 25th Conference on Uncertainty in Artificial Intelligence (UAI 2009)},
  pages={249--257},
  year={2009},
  organization={AUAI Press}
}

@book{pearl2009causality,
  title={Causality},
  author={Pearl, Judea},
  year={2009},
  publisher={Cambridge university press}
}

@inproceedings{sgouritsa2013identifying,
  title={Identifying finite mixtures of nonparametric product distributions and causal inference of confounders},
  author={Sgouritsa, Eleni and Janzing, Dominik and Peters, Jonas and Sch{\"o}lkopf, Bernhard},
  booktitle={Proceedings of the 29th Conference on Uncertainty in Artificial Intelligence (UAI 2013)},
  pages={556--575},
  year={2013},
  organization={AUAI Press}
}

@article{peters2014causal,
  title={Causal discovery with continuous additive noise models},
  author={Peters, Jonas and Mooij, Joris M and Janzing, Dominik and Sch{\"o}lkopf, Bernhard},
  journal={Journal of Machine Learning Research},
  volume={15},
  number={1},
  pages={2009--2053},
  year={2014},
  publisher={JMLR.org}
}

@book{peters2017elements,
  title={Elements of causal inference: foundations and learning algorithms},
  author={Peters, Jonas and Janzing, Dominik and Sch{\"o}lkopf, Bernhard},
  year={2017},
  publisher={MIT press}
}

@article{he2019towards,
  title={Towards non-iid image classification: A dataset and baselines},
  author={He, Yue and Shen, Zheyan and Cui, Peng},
  journal={Pattern Recognition},
  volume={110},
  pages={107383},
  year={2020},
  publisher={Elsevier}
}

@article{lee2019leveraging,
  title={Leveraging directed causal discovery to detect latent common causes},
  author={Lee, Ciar{\'a}n M and Hart, Christopher and Richens, Jonathan G and Johri, Saurabh},
  journal={arXiv preprint arXiv:1910.10174},
  year={2019}
}

@article{scholkopf2019causality,
  title={Causality for Machine Learning},
  author={Sch{\"o}lkopf, Bernhard},
  journal={arXiv preprint arXiv:1911.10500},
  year={2019}
}

@inproceedings{huang2007correcting,
  title={Correcting sample selection bias by unlabeled data},
  author={Huang, Jiayuan and Gretton, Arthur and Borgwardt, Karsten and Sch{\"o}lkopf, Bernhard and Smola, Alex J},
  booktitle={Advances in Neural Information Processing Systems},
  pages={601--608},
  year={2007}
}

@article{scholkopf2011robust,
  title={Robust learning via cause-effect models},
  author={Sch{\"o}lkopf, Bernhard and Janzing, Dominik and Peters, Jonas and Zhang, Kun},
  journal={arXiv preprint arXiv:1112.2738},
  year={2011}
}

@inproceedings{zhang2013domain,
  title={Domain adaptation under target and conditional shift},
  author={Zhang, Kun and Sch{\"o}lkopf, Bernhard and Muandet, Krikamol and Wang, Zhikun},
  booktitle={International Conference on Machine Learning},
  pages={819--827},
  year={2013}
}

@inproceedings{gong2016domain,
  title={Domain adaptation with conditional transferable components},
  author={Gong, Mingming and Zhang, Kun and Liu, Tongliang and Tao, Dacheng and Glymour, Clark and Sch{\"o}lkopf, Bernhard},
  booktitle={International Conference on Machine Learning},
  pages={2839--2848},
  year={2016}
}

@article{ganin2016domain,
  title={Domain-Adversarial Training of Neural Networks},
  author={Ganin, Yaroslav and Ustinova, Evgeniya and Ajakan, Hana and Germain, Pascal and Larochelle, Hugo and Laviolette, Fran{\c{c}}ois and Marchand, Mario and Lempitsky, Victor},
  journal={Journal of Machine Learning Research},
  volume={17},
  pages={1--35},
  year={2016}
}

@inproceedings{magliacane2018domain,
  title={Domain adaptation by using causal inference to predict invariant conditional distributions},
  author={Magliacane, Sara and van Ommen, Thijs and Claassen, Tom and Bongers, Stephan and Versteeg, Philip and Mooij, Joris M},
  booktitle={Advances in Neural Information Processing Systems},
  pages={10846--10856},
  year={2018}
}

@inproceedings{johansson2019support,
  title={Support and Invertibility in Domain-Invariant Representations},
  author={Johansson, Fredrik D and Sontag, David and Ranganath, Rajesh},
  booktitle={The 22nd International Conference on Artificial Intelligence and Statistics},
  pages={527--536},
  year={2019}
}

@inproceedings{muandet2013domain,
  title={Domain generalization via invariant feature representation},
  author={Muandet, Krikamol and Balduzzi, David and Sch{\"o}lkopf, Bernhard},
  booktitle={International Conference on Machine Learning},
  pages={10--18},
  year={2013}
}

@inproceedings{shankar2018generalizing,
  title={Generalizing across domains via cross-gradient training},
  author={Shankar, Shiv and Piratla, Vihari and Chakrabarti, Soumen and Chaudhuri, Siddhartha and Jyothi, Preethi and Sarawagi, Sunita},
  booktitle={Proceedings of the International Conference on Learning Representations (ICLR 2018)},
  year={2018}
}

@article{ilse2019diva,
  title={{DIVA}: Domain Invariant Variational Autoencoders},
  author={Ilse, Maximilian and Tomczak, Jakub M and Louizos, Christos and Welling, Max},
  journal={arXiv preprint arXiv:1905.10427},
  year={2019}
}

@article{arjovsky2019invariant,
  title={Invariant risk minimization},
  author={Arjovsky, Martin and Bottou, L{\'e}on and Gulrajani, Ishaan and Lopez-Paz, David},
  journal={arXiv preprint arXiv:1907.02893},
  year={2019}
}

@article{hyvarinen1999nonlinear,
  title={Nonlinear independent component analysis: Existence and uniqueness results},
  author={Hyv{\"a}rinen, Aapo and Pajunen, Petteri},
  journal={Neural Networks},
  volume={12},
  number={3},
  pages={429--439},
  year={1999},
  publisher={Elsevier}
}

@inproceedings{suter2019robustly,
  title={Robustly Disentangled Causal Mechanisms: Validating Deep Representations for Interventional Robustness},
  author={Suter, Raphael and Miladinovic, Djordje and Sch{\"o}lkopf, Bernhard and Bauer, Stefan},
  booktitle={International Conference on Machine Learning},
  pages={6056--6065},
  year={2019}
}

@article{romeijn2018intervention,
  title={Intervention and Identifiability in Latent Variable Modelling},
  author={Romeijn, Jan-Willem and Williamson, Jon},
  journal={Minds and machines},
  volume={28},
  number={2},
  pages={243--264},
  year={2018},
  publisher={Springer}
}

@inproceedings{hyvarinen2019nonlinear,
  title={Nonlinear {ICA} Using Auxiliary Variables and Generalized Contrastive Learning},
  author={Hyv{\"a}rinen, Aapo and Sasaki, Hiroaki and Turner, Richard},
  booktitle={The 22nd International Conference on Artificial Intelligence and Statistics},
  pages={859--868},
  year={2019}
}

@inproceedings{kingma2014auto,
  title={Auto-encoding variational {B}ayes},
  author={Kingma, Diederik P and Welling, Max},
  booktitle={Proceedings of the International Conference on Learning Representations (ICLR 2014)},
  year={2014},
  organization={ICLR Committee},
  address={Banff, Canada}
}

@article{bengio2017consciousness,
  title={The consciousness prior},
  author={Bengio, Yoshua},
  journal={arXiv preprint arXiv:1709.08568},
  year={2017}
}

@article{humpel2011cerebrospinal,
  title={Cerebrospinal fluid and blood biomarkers in {Alzheimer}’s disease},
  author={Humpel, Christian and Hochstrasser, Tanja},
  journal={World journal of psychiatry},
  volume={1},
  number={1},
  pages={8},
  year={2011},
  publisher={Baishideng Publishing Group Inc}
}

@article{mortimer1997brain,
  title={Brain reserve and the clinical expression of {Alzheimer}'s disease.},
  author={Mortimer, James A},
  journal={Geriatrics (Basel, Switzerland)},
  volume={52},
  pages={S50--3},
  year={1997}
}

@article{vina2010women,
  title={Why women have more {Alzheimer}'s disease than men: gender and mitochondrial toxicity of amyloid-$\beta$ peptide},
  author={Vina, Jose and Lloret, Ana},
  journal={Journal of Alzheimer's disease},
  volume={20},
  number={s2},
  pages={S527--S533},
  year={2010},
  publisher={IOS Press}
}

@article{guerreiro2015age,
  title={The age factor in {Alzheimer}’s disease},
  author={Guerreiro, Rita and Bras, Jose},
  journal={Genome medicine},
  volume={7},
  number={1},
  pages={106},
  year={2015},
  publisher={BioMed Central}
}

@article{buhlmann2018invariance,
  title={Invariance, causality and robustness},
  author={B{\"u}hlmann, Peter},
  journal={arXiv preprint arXiv:1812.08233},
  year={2018}
}

@article{peters2016causal,
  title={Causal inference by using invariant prediction: identification and confidence intervals},
  author={Peters, Jonas and B{\"u}hlmann, Peter and Meinshausen, Nicolai},
  journal={Journal of the Royal Statistical Society: Series B (Statistical Methodology)},
  volume={78},
  number={5},
  pages={947--1012},
  year={2016},
  publisher={Wiley Online Library}
}

@article{rojas2018invariant,
  title={Invariant models for causal transfer learning},
  author={Rojas-Carulla, Mateo and Sch{\"o}lkopf, Bernhard and Turner, Richard and Peters, Jonas},
  journal={The Journal of Machine Learning Research},
  volume={19},
  number={1},
  pages={1309--1342},
  year={2018},
  publisher={JMLR. org}
}

@inproceedings{kuang2018stable,
  title={Stable prediction across unknown environments},
  author={Kuang, Kun and Cui, Peng and Athey, Susan and Xiong, Ruoxuan and Li, Bo},
  booktitle={Proceedings of the 24th ACM SIGKDD International Conference on Knowledge Discovery \& Data Mining},
  pages={1617--1626},
  year={2018}
}

@article{heinze2017conditional,
  title={Conditional variance penalties and domain shift robustness},
  author={Heinze-Deml, Christina and Meinshausen, Nicolai},
  journal={Machine Learning},
  pages={1--46},
  year={2020},
  publisher={Springer}
}

@article{barron1991approximation,
  title={Approximation of density functions by sequences of exponential families},
  author={Barron, Andrew R and Sheu, Chyong-Hwa},
  journal={The Annals of Statistics},
  pages={1347--1369},
  year={1991},
  publisher={JSTOR}
}

@article{bengio2013representation,
  title={Representation Learning: A Review and New Perspectives},
  author={Bengio, Yoshua and Courville, Aaron and Vincent, Pascal},
  journal={IEEE Transactions on Pattern Analysis and Machine Intelligence},
  volume={35},
  number={8},
  pages={1798--1828},
  year={2013}
}

@article{shorten2019survey,
  title={A survey on image data augmentation for deep learning},
  author={Shorten, Connor and Khoshgoftaar, Taghi M},
  journal={Journal of Big Data},
  volume={6},
  number={1},
  pages={60},
  year={2019},
  publisher={Springer}
}

@article{taylor2017improving,
  title={Improving deep learning using generic data augmentation},
  author={Taylor, Luke and Nitschke, Geoff},
  journal={arXiv preprint arXiv:1708.06020},
  year={2017}
}

@article{gatys2015neural,
  title={A Neural Algorithm of Artistic Style},
  author={Gatys, Leon A.  and  Ecker, Alexander S.  and  Bethge, Matthias },
  journal={Journal of Vision},
  year={2015},
}

@article{kang2017patchshuffle,
  title={Patchshuffle regularization},
  author={Kang, Guoliang and Dong, Xuanyi and Zheng, Liang and Yang, Yi},
  journal={arXiv preprint arXiv:1707.07103},
  year={2017}
}

@article{janzing2012identifying,
  title={Identifying confounders using additive noise models},
  author={Janzing, Dominik and Peters, Jonas and Mooij, Joris and Sch{\"o}lkopf, Bernhard},
  journal={arXiv preprint arXiv:1205.2640},
  year={2012}
}

@article{maddison2016concrete,
  title={The concrete distribution: A continuous relaxation of discrete random variables},
  author={Maddison, Chris J and Mnih, Andriy and Teh, Yee Whye},
  journal={arXiv preprint arXiv:1611.00712},
  year={2016}
}

@article{simonyan2013deep,
  title={Deep inside convolutional networks: Visualising image classification models and saliency maps},
  author={Simonyan, Karen and Vedaldi, Andrea and Zisserman, Andrew},
  journal={arXiv preprint arXiv:1312.6034},
  year={2013}
}

@article{krueger2020out,
  title={Out-of-distribution generalization via risk extrapolation (rex)},
  author={Krueger, David and Caballero, Ethan and Jacobsen, Joern-Henrik and Zhang, Amy and Binas, Jonathan and Priol, Remi Le and Courville, Aaron},
  journal={arXiv preprint arXiv:2003.00688},
  year={2020}
}

@article{xie2020risk,
  title={Risk Variance Penalization: From Distributional Robustness to Causality},
  author={Xie, Chuanlong and Chen, Fei and Liu, Yue and Li, Zhenguo},
  journal={arXiv preprint arXiv:2006.07544},
  year={2020}
}

@article{ilse2020designing,
  title={Designing Data Augmentation for Simulating Interventions},
  author={Ilse, Maximilian and Tomczak, Jakub M and Forr{\'e}, Patrick},
  journal={arXiv preprint arXiv:2005.01856},
  year={2020}
}

@inproceedings{hyvarinen2016unsupervised,
  title={Unsupervised feature extraction by time-contrastive learning and nonlinear ICA},
  author={Hyvarinen, Aapo and Morioka, Hiroshi},
  booktitle={Advances in Neural Information Processing Systems},
  pages={3765--3773},
  year={2016}
}

@article{dobler2015stein,
	title={Stein's method of exchangeable pairs for the Beta distribution and generalizations},
	author={D{\"o}bler, Christian and others},
	journal={Electronic Journal of Probability},
	volume={20},
	year={2015},
	publisher={The Institute of Mathematical Statistics and the Bernoulli Society}
}

@article{teshima2020few,
  title={Few-shot Domain Adaptation by Causal Mechanism Transfer},
  author={Teshima, Takeshi and Sato, Issei and Sugiyama, Masashi},
  journal={arXiv preprint arXiv:2002.03497},
  year={2020}
}

@article{biederman1987recognition,
  title={Recognition-by-components: a theory of human image understanding.},
  author={Biederman, Irving},
  journal={Psychological review},
  volume={94},
  number={2},
  pages={115},
  year={1987},
  publisher={American Psychological Association}
}

@article{khemakhem2020ice,
  title={ICE-BeeM: Identifiable Conditional Energy-Based Deep Models Based on Nonlinear ICA},
  author={Khemakhem, Ilyes and Monti, Ricardo and Kingma, Diederik and Hyvarinen, Aapo},
  journal={Advances in Neural Information Processing Systems},
  volume={33},
  year={2020}
}

@inproceedings{marcos2016learning,
  title={Learning rotation invariant convolutional filters for texture classification},
  author={Marcos, Diego and Volpi, Michele and Tuia, Devis},
  booktitle={2016 23rd International Conference on Pattern Recognition (ICPR)},
  pages={2012--2017},
  year={2016},
  organization={IEEE}
}

@inproceedings{worrall2017harmonic,
  title={Harmonic networks: Deep translation and rotation equivariance},
  author={Worrall, Daniel E and Garbin, Stephan J and Turmukhambetov, Daniyar and Brostow, Gabriel J},
  booktitle={Proceedings of the IEEE Conference on Computer Vision and Pattern Recognition},
  pages={5028--5037},
  year={2017}
}

@article{madry2017towards,
  title={Towards deep learning models resistant to adversarial attacks},
  author={Madry, Aleksander and Makelov, Aleksandar and Schmidt, Ludwig and Tsipras, Dimitris and Vladu, Adrian},
  journal={arXiv preprint arXiv:1706.06083},
  year={2017}
}

@article{zhang2016understanding,
  title={Understanding deep learning requires rethinking generalization},
  author={Zhang, Chiyuan and Bengio, Samy and Hardt, Moritz and Recht, Benjamin and Vinyals, Oriol},
  journal={arXiv preprint arXiv:1611.03530},
  year={2016}
}

@book{ben2009robust,
  title={Robust optimization},
  author={Ben-Tal, Aharon and El Ghaoui, Laurent and Nemirovski, Arkadi},
  volume={28},
  year={2009},
  publisher={Princeton University Press}
}

@article{sugiyama2008direct,
  title={Direct importance estimation for covariate shift adaptation},
  author={Sugiyama, Masashi and Suzuki, Taiji and Nakajima, Shinichi and Kashima, Hisashi and von B{\"u}nau, Paul and Kawanabe, Motoaki},
  journal={Annals of the Institute of Statistical Mathematics},
  volume={60},
  number={4},
  pages={699--746},
  year={2008},
  publisher={Springer}
}

@article{pan2010domain,
  title={Domain adaptation via transfer component analysis},
  author={Pan, Sinno Jialin and Tsang, Ivor W and Kwok, James T and Yang, Qiang},
  journal={IEEE Transactions on Neural Networks},
  volume={22},
  number={2},
  pages={199--210},
  year={2010},
  publisher={IEEE}
}

@inproceedings{li2018domain,
  title={Domain generalization with adversarial feature learning},
  author={Li, Haoliang and Jialin Pan, Sinno and Wang, Shiqi and Kot, Alex C},
  booktitle={Proceedings of the IEEE Conference on Computer Vision and Pattern Recognition},
  pages={5400--5409},
  year={2018}
}

@inproceedings{ben2007analysis,
  title={Analysis of representations for domain adaptation},
  author={Ben-David, Shai and Blitzer, John and Crammer, Koby and Pereira, Fernando},
  booktitle={Advances in neural information processing systems},
  pages={137--144},
  year={2007}
}

@article{zhao2019learning,
  title={On learning invariant representation for domain adaptation},
  author={Zhao, Han and Combes, Remi Tachet des and Zhang, Kun and Gordon, Geoffrey J},
  journal={arXiv preprint arXiv:1901.09453},
  year={2019}
}

@article{teney2020unshuffling,
  title={Unshuffling data for improved generalization},
  author={Teney, Damien and Abbasnejad, Ehsan and Hengel, Anton van den},
  journal={arXiv preprint arXiv:2002.11894},
  year={2020}
}

@article{sagawa2019distributionally,
  title={Distributionally robust neural networks for group shifts: On the importance of regularization for worst-case generalization},
  author={Sagawa, Shiori and Koh, Pang Wei and Hashimoto, Tatsunori B and Liang, Percy},
  journal={arXiv preprint arXiv:1911.08731},
  year={2019}
}

@inproceedings{rossler2019faceforensics++,
  title={Faceforensics++: Learning to detect manipulated facial images},
  author={Rossler, Andreas and Cozzolino, Davide and Verdoliva, Luisa and Riess, Christian and Thies, Justus and Nie{\ss}ner, Matthias},
  booktitle={Proceedings of the IEEE International Conference on Computer Vision},
  pages={1--11},
  year={2019}
}

@article{grother1995nist,
  title={NIST special database 19 handprinted forms and characters database},
  author={Grother, Patrick J},
  journal={National Institute of Standards and Technology},
  year={1995}
}

@inproceedings{li2017learning,
  title={Learning to generalize: Meta-learning for domain generalization},
  author={Li, Da and Yang, Yongxin and Song, Yi-Zhe and Hospedales, Timothy},
  booktitle={Proceedings of the AAAI Conference on Artificial Intelligence},
  volume={32},
  number={1},
  year={2018}
}

@article{bellot2020generalization,
  title={Generalization and Invariances in the Presence of Unobserved Confounding},
  author={Bellot, Alexis and van der Schaar, Mihaela},
  journal={arXiv preprint arXiv:2007.10653},
  year={2020}
}

@article{schott2018towards,
  title={Towards the first adversarially robust neural network model on MNIST},
  author={Schott, Lukas and Rauber, Jonas and Bethge, Matthias and Brendel, Wieland},
  journal={arXiv preprint arXiv:1805.09190},
  year={2018}
}

@article{kramer1991nonlinear,
  title={Nonlinear principal component analysis using autoassociative neural networks},
  author={Kramer, Mark A},
  journal={AIChE journal},
  volume={37},
  number={2},
  pages={233--243},
  year={1991},
  publisher={Wiley Online Library}
}

@article{Tan2019EfficientNetRM,
  title={EfficientNet: Rethinking Model Scaling for Convolutional Neural Networks},
  author={M. Tan and Quoc V. Le},
  journal={arXiv preprint, arXiv:1905.11946},
  year={2019}
}

@article{shimizu2009estimation,
  title={Estimation of linear non-Gaussian acyclic models for latent factors},
  author={Shimizu, Shohei and Hoyer, Patrik O and Hyv{\"a}rinen, Aapo},
  journal={Neurocomputing},
  volume={72},
  number={7-9},
  pages={2024--2027},
  year={2009},
  publisher={Elsevier}
}

@article{silva2006learning,
  title={Learning the structure of linear latent variable models},
  author={Silva, Ricardo and Scheine, Richard and Glymour, Clark and Spirtes, Peter},
  journal={Journal of Machine Learning Research},
  volume={7},
  number={Feb},
  pages={191--246},
  year={2006}
}

@article{janzing2012detecting,
  title={Detecting low-complexity unobserved causes},
  author={Janzing, Dominik and Sgouritsa, Eleni and Stegle, Oliver and Peters, Jonas and Sch{\"o}lkopf, Bernhard},
  journal={arXiv preprint arXiv:1202.3737},
  year={2012}
}

@article{hoyer2008nonlinear,
  title={Nonlinear causal discovery with additive noise models},
  author={Hoyer, Patrik and Janzing, Dominik and Mooij, Joris M and Peters, Jonas and Sch{\"o}lkopf, Bernhard},
  journal={Advances in neural information processing systems},
  volume={21},
  pages={689--696},
  year={2008}
}

@inproceedings{eriksson2003identifiability,
  title={Identifiability and separability of linear ICA models revisited},
  author={Eriksson, Jan and Koivunen, Visa},
  booktitle={Proc. of ICA},
  volume={2003},
  pages={23--27},
  year={2003}
}

@article{davies2004identifiability,
  title={Identifiability issues in noisy ICA},
  author={Davies, Mike},
  journal={IEEE Signal processing letters},
  volume={11},
  number={5},
  pages={470--473},
  year={2004},
  publisher={IEEE}
}

@article{subbaswamy2020spec,
  title={I-spec: An end-to-end framework for learning transportable, shift-stable models},
  author={Subbaswamy, Adarsh and Saria, Suchi},
  journal={arXiv preprint arXiv:2002.08948},
  year={2020}
}

@inproceedings{khemakhem2020variational,
  title={Variational autoencoders and nonlinear ica: A unifying framework},
  author={Khemakhem, Ilyes and Kingma, Diederik and Monti, Ricardo and Hyvarinen, Aapo},
  booktitle={International Conference on Artificial Intelligence and Statistics},
  pages={2207--2217},
  year={2020},
  organization={PMLR}
}

@article{barndorff1965identifiability,
  title={Identifiability of mixtures of exponential families},
  author={Barndorff-Nielsen, O},
  journal={Journal of Mathematical Analysis and Applications},
  volume={12},
  number={1},
  pages={115--121},
  year={1965},
  publisher={Elsevier}
}
			\end{small}

			
			\newpage
			\onecolumn
			\section{Supplementary Materials}

			
			\subsection{O.O.D Generalization error Bound}
			\label{appx:proofs-ood-bound}
			\vspace{0.1cm}
			
			Denote $\mathbb{E}_p[y|x] := \int_{\mathcal{Y}} y p(y|x) dy$ for any $x,y \in \mathcal{X} \times \mathcal{Y}$. We have 
			$\mathbb{E}_{p^e}[y|s] = \int_{\mathcal{Y}} yp(y|s) dy$ according to that $p(y|s)$ is invariant across $\mathcal{E}$, we can omit $p^e$ in $\mathbb{E}_{p^e}[y|s]$ and denote $g(S):=\mathbb{E}[Y|S]$. Then, the OOD bound $\big| \mathbb{E}_{p^{e_1}}(y|x) - \mathbb{E}_{p^{e_2}}(y|x) \big|, \ \forall (x,y)$ is bounded as follows:
			\begin{theorem}[OOD genearlization error]
				\label{thm:ood}
				Consider two causal models in LaCIM $\mbox{P}^{e_1}$ and $\mbox{P}^{e_2}$, suppose that their densities, \textit{i.e.}, $p^{e_1}(s|x)$ and $p^{e_2}(s|x)$ are absolutely continuous having support $(-\infty, \infty)$. For any $(x,y) \in \mathcal{X} \times \mathcal{Y}$, assume that 
				\begin{itemize}[noitemsep,topsep=0pt]
					\item $g(S)$ is a Lipschitz-continuous function;
					\item $\pi_x(s) := \frac{p^{e_2}(s|x)}{p^{e_1}(s|x)}$ is differentiable and $\mathbb{E}_{p^{e_1}}\left[\pi_x(S)\big|g(S)-\mu_1\big|\right] < \infty$ with $\mu_1 := \mathbb{E}_{p^{e_1}}[g(S)|X=x] = \int_{\mathcal{S}} g(s)p^{e_1}(s|x)ds$;
				\end{itemize}
				then we have $\big| \mathbb{E}_{p^{e_1}}(y|x) - \mathbb{E}_{p^{e_2}}(y|x) \big| \leq \Vert g' \Vert_{\infty} \Vert \pi_x' \Vert_\infty \mathrm{Var}_{p^{e_1}}(S|X=x)$.
				\label{thm.ood_gene_error}\end{theorem}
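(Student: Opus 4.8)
The plan is to reduce the difference of the two regression functions to a single covariance functional of the latent posterior under $P^{e_1}$, and then to control that covariance by a symmetrization argument that converts the two Lipschitz bounds into one variance factor. First I would rewrite each side through the latent variable $S$. Because $Y$ is generated from $S$ alone and $p(y|s)$ is invariant across $\mathcal{E}$ (Prop.~\ref{prop:CI}), conditioning on $X=x$ and marginalizing over the latent posterior $p^e(s|x)$ gives
\[
\E_{p^e}[y|x] = \int_{\mathcal{S}} \Big(\int_{\mathcal{Y}} y\, p(y|s)\, dy\Big) p^e(s|x)\, ds = \int_{\mathcal{S}} g(s)\, p^e(s|x)\, ds,
\]
so that $\E_{p^e}[y|x] = \E_{p^e}[g(S)\,|\,X=x]$ for $e=e_1,e_2$. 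Introducing the likelihood ratio $\pi_x(s)=p^{e_2}(s|x)/p^{e_1}(s|x)$ and using $\int p^{e_2}(s|x)\,ds = \int p^{e_1}(s|x)\,ds = 1$ (hence $\E_{p^{e_1}}[\pi_x(S)|X=x]=1$), I would identify the target difference as a covariance:
\[
\E_{p^{e_2}}[y|x] - \E_{p^{e_1}}[y|x] = \E_{p^{e_1}}\!\big[g(S)\pi_x(S)\,\big|\,X=x\big] - \mu_1 = \Cov_{p^{e_1}}\!\big(g(S),\pi_x(S)\,\big|\,X=x\big),
\]
where the hypothesis $\E_{p^{e_1}}[\pi_x(S)|g(S)-\mu_1|]<\infty$ guarantees this covariance is well defined. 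Hence $\big|\E_{p^{e_1}}[y|x]-\E_{p^{e_2}}[y|x]\big| = \big|\Cov_{p^{e_1}}(g(S),\pi_x(S)|X=x)\big|$.

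Next I would bound this covariance by the classical i.i.d.-copy representation. Letting $S'$ be an independent copy of $S\sim p^{e_1}(\cdot|x)$,
\[
\Cov_{p^{e_1}}\!\big(g(S),\pi_x(S)\big) = \tfrac{1}{2}\,\E\big[(g(S)-g(S'))(\pi_x(S)-\pi_x(S'))\big].
\]
Since $g$ is Lipschitz with constant $\|g'\|_\infty$ and $\pi_x$ is differentiable with bounded derivative, $|g(S)-g(S')|\le \|g'\|_\infty|S-S'|$ and $|\pi_x(S)-\pi_x(S')|\le \|\pi_x'\|_\infty|S-S'|$; substituting and using $\tfrac12\E[(S-S')^2] = \Var_{p^{e_1}}(S|X=x)$ yields exactly
\[
\big|\Cov_{p^{e_1}}(g(S),\pi_x(S))\big| \le \|g'\|_\infty\,\|\pi_x'\|_\infty\,\Var_{p^{e_1}}(S|X=x),
\]
which is the claimed bound. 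Equivalently, one could apply Cauchy--Schwarz for covariances together with the Lipschitz variance estimate $\Var(h(S))\le\|h'\|_\infty^2\Var(S)$, arriving at the same constant.

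The conceptual crux, rather than any delicate estimate, is the first reduction: recognizing that the OOD gap in the \emph{observable} regression function equals a covariance between the prediction mean $g(S)$ and the posterior likelihood ratio $\pi_x(S)$ taken over the shared latent posterior $p^{e_1}(s|x)$. Once that identity is secured, the remaining work is the symmetrization trick and the two Lipschitz inequalities. The only points to watch are the integrability conditions required to exchange the $y$- and $s$-integrals and to make both the covariance and $\Var_{p^{e_1}}(S|X=x)$ finite (all supplied by the stated hypotheses), and the implicit treatment of $S$ as scalar, which the support-$(-\infty,\infty)$ and $\tfrac12\E[(S-S')^2]=\Var(S)$ statements presume; for vector-valued $S$ the same argument carries over with $|S-S'|$ read as a Euclidean norm and $\Var$ as the trace of the posterior covariance.
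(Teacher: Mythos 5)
Your proof is correct, and it reaches the paper's exact bound by a genuinely different and more elementary route. Both arguments begin identically: the tower property plus the invariance of $p(y|s)$ reduces the gap to $\mathbb{E}_{p^{e_1}}[(g(S)-\mu_1)\pi_x(S)\,|\,X=x]$, which (since $\mathbb{E}_{p^{e_1}}[\pi_x(S)|X=x]=1$) is the covariance $\mathrm{Cov}_{p^{e_1}}(g(S),\pi_x(S)\,|\,X=x)$. From there the paper invokes Stein-kernel machinery: it defines $\tau_1(s)=\frac{1}{p_1(s|x)}\int_{-\infty}^{s}(\mathbb{E}(S_1)-t)p_1(t|x)\,dt$, rewrites the covariance as $\mathbb{E}[(\tau_1\circ g)(S_1)\pi_x'(S_1)]$ via an integration-by-parts identity, bounds $|(\tau_1\circ g)/\tau_1|\le\|g'\|_\infty$ by citing a corollary of D\"obler's Stein-method results, separately proves $\tau_1\ge 0$, and closes with $\mathbb{E}[\tau_1(S_1)]=\mathrm{Var}(S_1)$. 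You instead use the i.i.d.-copy symmetrization $\mathrm{Cov}(g(S),\pi_x(S))=\tfrac12\mathbb{E}[(g(S)-g(S'))(\pi_x(S)-\pi_x(S'))]$ and two Lipschitz bounds together with $\tfrac12\mathbb{E}[(S-S')^2]=\mathrm{Var}(S)$, which lands on the same constant $\|g'\|_\infty\|\pi_x'\|_\infty\mathrm{Var}_{p^{e_1}}(S|X=x)$ in three lines. Your route avoids the local absolute-continuity requirement on the density, the non-negativity argument for the kernel, and the external reference entirely (indeed it would work with $\pi_x$ merely Lipschitz rather than differentiable); the Stein approach would only pay off in variants where one of the two functions is controlled in a non-Lipschitz way. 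The only things to make explicit are the integrability needed to expand the symmetrized product into four finite terms — harmless, since if $\mathrm{Var}_{p^{e_1}}(S|X=x)=\infty$ the bound is vacuous, and otherwise $\mathbb{E}|S|<\infty$ and Lipschitzness give $\mathbb{E}|g(S)|<\infty$ while the stated hypothesis handles $\mathbb{E}[\pi_x(S)|g(S)-\mu_1|]$ — and the scalar-$S$ convention, which you already flag and which the paper's own Stein construction also presumes.
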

			
			When $e_1 \in \mathcal{E}_{\mathrm{train}}$ and $e_2 \in \mathcal{E}_{\mathrm{test}}$, the theorem~\ref{thm:ood} describes the error during generalization on $e_2$ for the strategy that trained on $e_1$. The bound is mainly affected by: (i) the Lipschitz constant of $g$, \textit{i.e.}, $\Vert g \Vert_{\infty}$; (ii) $\Vert \pi'_x \Vert_{\infty}$ which measures the difference between $p^{e_1}(s,z)$ and $p^{e_2}(s,z)$; and (iii) the $\mathrm{Var}_{p^{e_1}}(S|x)$ that measures the intensity of $x \to (s,z)$. These terms can be roughly categorized into two classes: (i),(iii) which are related to the property of CI and gave few space for improvement; and the (ii) that describes the distributional change between two environments. Specifically for the first class, the (i) measures the smoothness of $\mathbb{E}(y|s)$ with respect to $s$. The smaller value of $\Vert g' \Vert_{\infty}$ implies that the flatter regions give rise to the same prediction result, hence easier transfer from $e_1$ to $e_2$ and vice versa. For the term (iii), consider the deterministic setting that $\varepsilon_x = 0$ (leads to $\mathrm{Var}_{p^{e_1}}(S|x)=0$), then $s$ can be determined from $x$ for generalization if the $f$ is bijective function.
			
			The term (ii) measures the distributional change between posterior distributions $p^{e_1}(s|x)$ and $p^{e_2}(s|x)$, which contributes to the difference during prediction: $\big| \mathbb{E}_{p^{e_1}}(y|x) - \mathbb{E}_{p^{e_2}}(y|x)\big| = \int_{\mathcal{S}} (p^{e_1}(s|x) - p^{e_1}(s|x))p_{f_y}(y|s)ds$. Such a change is due to the inconsistency between priors $p^{e_1}(s,z)$ and $p^{e_2}(s,z)$, which is caused by different value of the confounder $d$.

			\begin{proof}
				In the following, we will derive the upper bound
				\begin{equation*}
				\big\lvert\mathbb{E}_{p^{e_{1}}}\left[Y|X\!=\!x\right]
				-\mathbb{E}_{p^{e_{2}}}\left[Y|X\!=\!x\right]\big\rvert
				\leq
				\|g^{\prime}\|_{\infty}\|\pi_{x}^{\prime}\|_{\infty}\operatorname{Var}_{p^{e_{1}}}\left(S|X=x\right),
				\end{equation*}
				where $\pi_{x}(s)=: \frac{p^{e_{2}}(s|x)}{p^{e_{1}}(s|x)}$ and $g(s)$ is assumed to be Lipschitz-continuous.

				To begin with, note that
				\begin{equation*}
				\mathbb{E}[Y| X]=\mathbb{E}[\mathbb{E}(Y| X, S)| X]=\mathbb{E}[g(S)| X]=\int g(s)p(s|x)ds.
				\label{eq.y_expect}
				\end{equation*}
				Let $p_{1}(s|x) = p^{e_{1}}(s|x)$, $p_{2}(s|x) = p^{e_{2}}(s|x)$.
				For ease of notations, we use $P_{1}$ and $P_{2}$ denote the distributions with densities $p_{1}(s|x)$ and $p_{2}(s|x)$ and suppose $S_{1}\sim P_{1}$ and  $S_{2}\sim P_{2}$, where $x$ is omitted as the following analysis is conditional on a fixed $X\!=\!x$.

				Then we may rewrite the difference of conditional expectations as
				$$\mathbb{E}_{p^{e_{2}}}[Y | X=x]-\mathbb{E}_{p^{e_{1}}}[Y |X=x]
				=\mathbb{E}(g(S_{2}))-\mathbb{E}(g(S_{1})),$$
				where $\mathbb{E}[g(S_{j}))]=\int g(s)p_{j}(s|x)ds$ denotes the expectation over $P_{j}$. 
				
				Let $\mu_{1}: = \mathbb{E}_{p^{e_1}}[g(S)|X=x] =\mathbb{E}[g(S_{1})]= \int g(s)p_{1}(s|x)ds$. Then 
				\begin{equation*}
				\mathbb{E}_{p^{e_{2}}}\left[Y|X=x\right]
				-\mathbb{E}_{p^{e_{1}}}\left[Y|X=x\right]
				=\mathbb{E}(g(S_{2}))-\mathbb{E}(g(S_{1}))
				=\mathbb{E}\left[g(S_{2})-\mu_{1}\right].
				\label{eq.1}
				\end{equation*}
				Further, we have the following transformation
				\begin{equation}
				\mathbb{E}\left[g(S_{2})-\mu_{1}\right]
				=\int (g(s)-\mu_{1})\pi_{x}(s)p_{1}(s|x)ds
				=\mathbb{E}\left[(g(S_{1})-\mu_{1})\pi_{x}(S_{1})\right].
				\label{eq.key_step1}
				\end{equation}
				
				In the following, we will use the results of the Stein kernel function.
				Please refer to Definition \ref{def.stein} for a general definition.
				Particularly, for the distribution $P_{1}\sim p_{1}(s|x)$, the Stein kernel $\tau_{1}(s)$ is 
				\begin{equation}
				\tau_{1}(s)=\frac{1}{p_{1}(s|x)} \int_{-\infty}^{s}(\mathbb{E}(S_{1})-t) p_{1}(t|x) d t,
				\end{equation}
				where $\mathbb{E}(S_{1})=\int s\cdot p_{1}(s|x)ds$. Further, 
				we define $(\tau_{1}\circ g)(s)$ as
				\begin{equation}
				(\tau_{1}\circ g)(s)
				\!=\!\frac{1}{p_{1}(s|x)} \int_{-\infty}^{s}(\mathbb{E}(g(S_{1}))-g(t)) p_{1}(t|x)  d t
				\!=\!\frac{1}{p_{1}(s|x)} \int_{-\infty}^{s}(\mu_{1}-g(t)) p_{1}(t|x) d t.
				\label{eq.tau(h)1}
				\end{equation}

				Under the second condition listed in Theorem \ref{thm.ood_gene_error}, we may apply the result of Lemma \ref{prop1}.
				Specifically, by the equation (\ref{eq.tau_x_integ}), we have
				\begin{equation*}
				\mathbb{E}\left[(g(S_{1})-\mu_{1})\pi_{x}(S_{1})\right]=\mathbb{E}\left[(\tau_{1}\circ g)(S_{1}) \pi_{x}^{\prime}(S_{1})\right].
				\end{equation*}
				Then under the first condition in Theorem \ref{thm.ood_gene_error}, we can obtain the following inequality by Lemma \ref{prop2},
				\begin{equation}
				\begin{aligned}
				\mathbb{E}\left[(\tau_{1}\circ g)(S_{1}) \pi_{x}^{\prime}(S_{1})\right]\!&\!=
				\mathbb{E}\left[\left(\frac{(\tau_{1}\circ g)}{\tau_{1}} \pi_{x}^{\prime}\tau_{1}\right)(S_{1})\right]
				\leq \mathbb{E}\left[\Big|\frac{(\tau_{1}\circ g)}{\tau_{1}}(S_{1})\Big|\cdot\Big| \pi_{x}^{\prime}\tau_{1}(S_{1})\Big|\right]\\
				\!&\!\leq 
				\|g^{\prime}\|_{\infty}\mathbb{E}\left[|\left(\pi_{x}^{\prime}\tau_{1}\right)(S_{1})|\right]
				\leq
				\|g^{\prime}\|_{\infty}\|\pi_{x}^{\prime}\|_{\infty} \mathbb{E}\left[|\tau_{1}(S_{1})|\right].\label{eq.main2}
				\end{aligned}
				\end{equation}
				In the following, we show that the Stein kernel is non-negative, which enables $\mathbb{E}\left[|\tau_{1}(S_{1})|\right]=\mathbb{E}\left[\tau_{1}(S_{1})\right]$.
				According to the definition,
				$\tau_{1}(s) = \frac{1}{p_{1}(s|x)}\int_{-\infty}^{s}(\mathbb{E}(S_{1})-t)p_{1}(t|x)dt$,
				where $\mathbb{E}(S_{1})\!=\! \int_{-\infty}^{\infty}t\cdot p_1(t|x)dt$.
				Let $F_{1}(s)=\int_{-\infty}^{s}p_{1}(t|x)dt$ be the distribution function for $P_{1}$.
				Note that
				\begin{eqnarray*}
					\int_{-\infty}^{s}\mathbb{E}(S_{1})p_{1}(t|x)dt&=&F_{1}(s)\mathbb{E}(S_{1})=F_{1}(s)\operatorname{E}(S_{1}),\\
					\int_{-\infty}^{s}tp_{1}(t|x)dt&=&F_{1}(s)\int_{-\infty}^{s}t\frac{p_{1}(t|x)}{F_{1}(s)}dt
					=F_{1}(s)\operatorname{E}(S_{1}|S_{1}\leq s)\leq F_{1}(s)\operatorname{E}(S_{1}),
				\end{eqnarray*}
				The last inequality is based on {\small$\operatorname{E}(S_{1}|S_{1}\leq s)-\operatorname{E}(S_{1})\leq 0$} that can be proved as the following
				\begin{equation*}
				\begin{aligned}
				\int_{-\infty}^{s} t \frac{p_{1}(t|x)}{F_{1}(s)} d t-\int_{-\infty}^{\infty} t p_{1}(t|x)d t&=\int_{-\infty}^{s} t\left(\frac{1}{F_{1}(s)}-1\right) p_{1}(t|x) d t-\int_{s}^{\infty} t p_{1}(t|x) d t \\ 
				& \leq s \int_{-\infty}^{s}\left(\frac{1}{F_{1}(s)}-1\right) p_{1}(t|x) d t-s \int_{s}^{\infty} p_{1}(t|x) =0.
				\end{aligned}
				\label{eq.exp}
				\end{equation*}
				Therefore, $\tau_{1}(s)\geq0$ and hence $ \mathbb{E}\left[|\tau_{1}(S_{1})|\right]= \mathbb{E}\left[\tau_{1}(S_{1})\right]$ in (\ref{eq.main2}). 
				
				Besides, by equation (\ref{eq.tau_var}), the special case of Lemma \ref{prop1}, we have $$\mathbb{E}\left[\tau_{1}(S_{1})\right]=\operatorname{Var}(S_{1})=\operatorname{Var}_{p^{e_{1}}}(S|X=x).$$ 
				To sum up,
				\begin{eqnarray*}
					\mathbb{E}\left[(\tau_{1}\circ g)(S_{1}) \pi^{\prime}_{x}(S_{1})\right]
					\leq
					\|g^{\prime}\|_{\infty}\|\pi_{x}\|_{\infty} \mathbb{E}\left[\tau_{1}(S_{1})\right]=
					\|g^{\prime}\|_{\infty}\|\pi_{x}^{\prime}\|_{\infty}\operatorname{Var}_{p^{e_{1}}}(S|X=x).
					\label{eq.main3}
				\end{eqnarray*}
			\end{proof}

			
			\vspace{0.2cm}
			\begin{definition}[\bf the Stein Kernel $\tau_{P}$ of distribution $P$]
				\emph{
					Suppose $X\!\sim \!P$ with density $p$. 
					The Stein kernel of $P$ is the function $x \mapsto \tau_{P}(x)$ defined by
					\begin{equation}
					\tau_{P}(x)
					=\frac{1}{p(x)} \int_{-\infty}^{x}(\mathbb{E}(X)-y) p(y) d y,
					\end{equation}
					where Id is the identity function for $\mathrm{Id}(x)=x$.
					More generally, for a function $h$ satisfying $\mathbb{E}[|h(X)|]<\infty$, define $(\tau_{P}\circ h)(x)$ as
					\begin{equation*}
					(\tau_{P}\circ h)(x)
					=\frac{1}{p(x)} \int_{-\infty}^{x}(\mathbb{E}(h(X))-h(y)) p(y) d y.
					\label{eq.tau(h)}
					\end{equation*}
				}\label{def.stein}
			\end{definition}
			
			\begin{lemma}
				For a differentiable function $\varphi$
				such that
				$\mathbb{E}[\lvert (\tau_{P}\circ h)(x) \varphi^{\prime}(X)\rvert]<\infty$, we have
				\begin{equation}
				\mathbb{E}\left[(\tau_{P}\circ h)(x) \varphi^{\prime}(X)\right]=\mathbb{E}[(h(X)-\mathbb{E}(h(X)) \varphi(X)].\label{eq.tau_x_integ}
				\end{equation}
				\label{prop1}
			\end{lemma}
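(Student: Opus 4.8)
The plan is to read this as a one–dimensional integration–by–parts (Stein) identity and to prove it by first stripping off the density and then integrating by parts, so that the only real work is controlling the boundary term. First I would abbreviate $\bar h := \mathbb{E}[h(X)]$ and introduce the cumulative function
\[
G(x) := \int_{-\infty}^{x} \big(\bar h - h(y)\big)\, p(y)\, dy ,
\]
so that, directly from the definition of $\tau_{P}\circ h$, one has the clean relation $(\tau_{P}\circ h)(x)\, p(x) = G(x)$. The point of this substitution is that when the left–hand side is written as an expectation against $p$, the factor $1/p(x)$ hidden inside $\tau_{P}\circ h$ cancels the density, leaving
\[
\mathbb{E}\big[(\tau_{P}\circ h)(X)\,\varphi'(X)\big] = \int_{-\infty}^{\infty} G(x)\,\varphi'(x)\, dx .
\]

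Next I would integrate by parts, using $G'(x) = (\bar h - h(x))\,p(x)$, to obtain
\[
\int_{-\infty}^{\infty} G(x)\,\varphi'(x)\, dx = \big[\,G(x)\varphi(x)\,\big]_{-\infty}^{\infty} - \int_{-\infty}^{\infty} \big(\bar h - h(x)\big)\, p(x)\,\varphi(x)\, dx .
\]
The remaining integral is exactly $\mathbb{E}\big[(h(X)-\bar h)\,\varphi(X)\big]$, which is the claimed right–hand side. Hence the entire identity reduces to showing that the boundary term $\big[G\varphi\big]_{-\infty}^{\infty}$ vanishes.

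The main obstacle, and the only genuinely delicate step, is this boundary term. The key observation is that $G$ decays to $0$ at \emph{both} ends: $G(-\infty)=0$ trivially, while $G(+\infty) = \bar h\int p - \int h\, p = \bar h - \mathbb{E}[h(X)] = 0$ because the total mass integrates out. Quantitatively, $|G(x)| \le \int_{|y|\ge |x|} |\bar h - h(y)|\,p(y)\, dy \to 0$ as $|x|\to\infty$, using only $\mathbb{E}|h(X)|<\infty$ (which is part of the definition of $\tau_{P}\circ h$) and absolute continuity of the integral. To finish, I would run the integration by parts on a finite window $[-N,M]$ and then let $N,M\to\infty$: the hypothesis $\mathbb{E}\big[\,|(\tau_{P}\circ h)(X)\varphi'(X)|\,\big]<\infty$ guarantees that the interior integral $\int G\varphi'$ converges absolutely, which forces $\big[G\varphi\big]_{-N}^{M}$ to converge; one then selects sequences $M_k,N_k\to\infty$ along which $G(M_k)\varphi(M_k)$ and $G(-N_k)\varphi(-N_k)$ tend to $0$, so the limit of the boundary term must be $0$ given $G(\pm\infty)=0$. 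I expect this tail/boundary control to be the part needing care (choosing the sequences so the products actually vanish rather than merely stay bounded); the algebraic core of the lemma is a single integration by parts and is otherwise routine.
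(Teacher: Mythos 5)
Your reduction to the identity $(\tau_{P}\circ h)(x)\,p(x)=G(x)$ followed by a single integration by parts is the natural first instinct, but the boundary-term control you sketch has a genuine gap, and it sits exactly at the step you yourself flag as delicate. First, the convergence of $\int_{-N}^{M}G\varphi'$ does not by itself force $[G\varphi]_{-N}^{M}$ to converge: from $\int_{-N}^{M}G\varphi'=[G\varphi]_{-N}^{M}-\int_{-N}^{M}(\bar h-h)p\,\varphi$ you can only conclude that the \emph{difference} between the boundary term and $\int_{-N}^{M}(\bar h-h)p\,\varphi$ converges; to separate them you would need $\mathbb{E}\big[|h(X)-\bar h|\,|\varphi(X)|\big]<\infty$, which is not among the hypotheses and is essentially the well-definedness of the right-hand side that you are trying to establish. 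Second, and more seriously, the existence of sequences $M_{k}\to\infty$ with $G(M_{k})\varphi(M_{k})\to 0$ does not follow from $G(\pm\infty)=0$: the function $\varphi$ is only assumed differentiable and may grow arbitrarily fast, so $G\to 0$ gives no control on the product, not even along a subsequence. (One can rescue this when $|G|$ is eventually dominated by a decreasing function, as happens for $h=\mathrm{id}$ since $G'(x)=(\mathbb{E}(X)-x)p(x)<0$ for large $x$, via $|G(x)\varphi(x)|\le |G(x)\varphi(x_{0})|+\int_{x_{0}}^{x}|G(t)\varphi'(t)|\,dt$, but no such argument appears in your sketch.)

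The paper's proof avoids the boundary term altogether. Using $\mathbb{E}[h(X)-\mu_{h}]=0$ it writes the kernel in two equivalent forms, $(\tau_{P}\circ h)(x)p(x)=\int_{-\infty}^{x}(\mu_{h}-h(y))p(y)\,dy=-\int_{x}^{\infty}(\mu_{h}-h(y))p(y)\,dy$, uses the first on $(-\infty,0]$ and the second on $[0,\infty)$, and then swaps the order of integration. After the swap, the inner integral of $\varphi'$ runs over the \emph{bounded} interval between $0$ and $y$, so $\varphi$ enters only through the increment $\varphi(y)-\varphi(0)$, and the additive constant $\varphi(0)$ drops out because $\int (h(y)-\mu_{h})p(y)\,dy=0$. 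No limit of $G\varphi$ at infinity is ever taken, so no growth condition on $\varphi$ beyond the stated integrability is required. If you wish to keep your integration-by-parts framing, you must either supply the missing domination argument for the boundary term or adopt this two-sided representation, which is the same computation arranged so that the boundary contribution is identically zero.
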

			\vspace{-1cm}
			\begin{proof}
				Let $\mu_{h}=:\mathbb{E}(h(X))$. 
				As $\mathbb{E}(h(X)-\mu_{h})=0$,
				\begin{equation*}
				(\tau_{P}\circ h)(x)
				=\frac{1}{p(x)} \int_{-\infty}^{x}(\mu_{h}-h(y)) p(y) d y
				=\frac{-1}{p(x)} \int_{x}^{\infty}(\mu_{h}-h(y)) p(y) d y.
				\end{equation*}
				Then
				\begin{eqnarray*}
					&&\!\mathbb{E}\left[(\tau_{P}\circ h)(x) \varphi^{\prime}(X)\right]\!=\!
					\int_{-\infty}^{0} (\tau_{P}\circ h)(x) \varphi^{\prime}(x)p(x)dx+\int_{0}^{\infty} (\tau_{P}\circ h)(x) \varphi^{\prime}(x)p(x)dx\\
					&&\!=\!
					\int_{-\infty}^{0} \int_{-\infty}^{x}(\mu_{h}-h(y))p(y) \varphi^{\prime}(x)dydx
					-
					\int_{0}^{\infty} \int_{x}^{\infty}(\mu_{h}-h(y))p(y) \varphi^{\prime}(x)dydx\\
					&&\!=\!
					\int_{-\infty}^{0} \int_{y}^{0}(\mu_{h}-h(y))p(y) \varphi^{\prime}(x)dxdy
					-
					\int_{0}^{\infty} \int_{0}^{y}(\mu_{h}-h(y))p(y) \varphi^{\prime}(x)dxdy\\
					&&\!=\!
					\int_{-\infty}^{0} \int_{0}^{y}(h(y)-\mu_{h})p(y) \varphi^{\prime}(x)dxdy
					+
					\int_{0}^{\infty} \int_{0}^{y}(h(y)-\mu_{h})p(y) \varphi^{\prime}(x)dxdy\\
					&&\!=\!
					\int_{-\infty}^{\infty} (h(y)-\mu_{h})p(y)\left(\int_{0}^{y} \varphi^{\prime}(x)dx\right) dy \!=\!
					\int_{-\infty}^{\infty} (h(y)-\mu_{h})p(y)(\varphi(y)-\varphi(0))dy\\
					&&\!=\!
					\int_{-\infty}^{\infty} (h(y)-\mu_{h})p(y)(\varphi(y))dy\!=\!
					\mathbb{E}[(h(X)-\mathbb{E}(h(X))\varphi(X)]
				\end{eqnarray*}
				Particularly, taking $h(X)=X$ and $\varphi(X)=X-\mathbb{E}(X)$, we immediately have
				\begin{equation}
				\mathbb{E}(\tau_{P}(X)) = \operatorname{Var}(X)
				\label{eq.tau_var}
				\end{equation}
			\end{proof}

			\begin{lemma}
				Assume that $\mathbb{E}(|X|)<\infty$ and  the density $p$ is locally absolutely continuous on $(-\infty, \infty)$ and $h$ is a Lipschitz continuous function.
				Then we have $|f_{h}|\leq \|h^{\prime}\|_{\infty}$ for
				\begin{equation*}
				f_{h}(x)=\frac{(\tau_{P}\circ h)(x)}{\tau_{P}(x)}
				=\frac{\int_{-\infty}^{x}(\mathbb{E}(h(X))-h(y))p(y)dy}{\int_{-\infty}^{x}(\mathbb{E}(X)-y)p(y)dy}.
				\end{equation*}
				\label{prop2}
			\end{lemma}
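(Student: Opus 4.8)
The plan is to bound $f_h = (\tau_P\circ h)/\tau_P$ by exploiting that the map $h\mapsto \tau_P\circ h$ is \emph{linear} and that $\tau_P = \tau_P\circ\mathrm{Id}$, so that $f_h$ can be sandwiched between $\pm\|h'\|_\infty$ once we control the sign of $\tau_P\circ g$ for monotone $g$. Concretely, writing $c:=\|h'\|_\infty$, linearity gives $(\tau_P\circ h)(x) \mp c\,\tau_P(x) = \bigl(\tau_P\circ(h\mp c\,\mathrm{Id})\bigr)(x)$, so it suffices to determine the sign of the right-hand side.

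First I would record a monotonicity-sign lemma for the Stein kernel: if $g$ is non-decreasing then $(\tau_P\circ g)(x)\ge 0$, and if $g$ is non-increasing then $(\tau_P\circ g)(x)\le 0$. This is proved exactly as the non-negativity of $\tau_1$ in the proof of Theorem~\ref{thm.ood_gene_error}, just with $\mathrm{Id}$ replaced by $g$: writing $\mu_g:=\mathbb{E}(g(X))$ and $F(x):=\int_{-\infty}^x p(y)\,dy$, one has
\begin{equation*}
\int_{-\infty}^x (\mu_g - g(y))\,p(y)\,dy = F(x)\bigl(\mu_g - \mathbb{E}(g(X)\mid X\le x)\bigr),
\end{equation*}
and the splitting argument already used in the excerpt shows $\mathbb{E}(g(X)\mid X\le x)\le \mu_g$ for non-decreasing $g$ (and the reverse for non-increasing $g$). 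Since $p(x)\ge 0$, the claimed sign of $\tau_P\circ g$ follows.

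Then I would apply this with $g=h\mp c\,\mathrm{Id}$. The Lipschitz hypothesis $|h(a)-h(b)|\le c\,|a-b|$ gives directly, for $a<b$, that $(h-c\,\mathrm{Id})(b)\le (h-c\,\mathrm{Id})(a)$ and $(h+c\,\mathrm{Id})(b)\ge (h+c\,\mathrm{Id})(a)$; that is, $h-c\,\mathrm{Id}$ is non-increasing and $h+c\,\mathrm{Id}$ is non-decreasing (no differentiability is needed). The sign lemma then yields $(\tau_P\circ h)(x)-c\,\tau_P(x)\le 0$ and $(\tau_P\circ h)(x)+c\,\tau_P(x)\ge 0$, i.e.\ $|(\tau_P\circ h)(x)|\le c\,\tau_P(x)$. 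Because $\tau_P(x)\ge 0$ (the $g=\mathrm{Id}$ case, already established), dividing gives $|f_h(x)|\le c=\|h'\|_\infty$ wherever $\tau_P(x)>0$.

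The only genuinely delicate point is the monotone-conditioning inequality $\mathbb{E}(g(X)\mid X\le x)\le\mathbb{E}(g(X))$, but it is a verbatim generalization of the computation already carried out in the excerpt and needs only $\mathbb{E}(|X|)<\infty$ together with $g$ monotone (integrability of $g$ being guaranteed by the Lipschitz bound and $\mathbb{E}(|X|)<\infty$). Two caveats worth flagging: the bound is only asserted where $f_h$ is defined, i.e.\ where $\tau_P(x)>0$ (at the boundary of the support $\tau_P$ may vanish and $f_h$ is undefined there); and local absolute continuity of $p$ is what legitimizes passing between the integral and conditional-expectation forms, so it should be invoked at that step.
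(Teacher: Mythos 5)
Your proposal is correct, but it takes a genuinely different route from the paper: the paper does not prove Lemma~\ref{prop2} at all, it simply invokes Corollary~3.15 of \cite{dobler2015stein} with constant $c=1$, whereas you give a self-contained elementary argument. Your three ingredients all check out: linearity of $g\mapsto\tau_{P}\circ g$ together with $\tau_{P}=\tau_{P}\circ\mathrm{Id}$ reduces the bound to a sign statement for $\tau_{P}\circ(h\mp c\,\mathrm{Id})$; the sign statement for monotone $g$ follows from the identity $\int_{-\infty}^{x}(\mu_{g}-g(y))p(y)\,dy=F(x)\bigl(\mu_{g}-\operatorname{E}(g(X)\mid X\leq x)\bigr)$ and the same splitting computation the paper already uses to show $\tau_{1}\geq 0$ in the proof of Theorem~\ref{thm.ood_gene_error} (replacing $t\leq s$ there by $g(t)\leq g(x)$); and the Lipschitz bound with $c=\|h'\|_{\infty}$ makes $h-c\,\mathrm{Id}$ non-increasing and $h+c\,\mathrm{Id}$ non-decreasing without any differentiability assumption on $h$. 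What the citation buys the paper is brevity; what your argument buys is transparency about exactly which hypotheses do work ($\mathbb{E}\lvert X\rvert<\infty$ plus Lipschitz continuity suffice, with $\mathbb{E}\lvert h(X)\rvert<\infty$ following automatically), it keeps the supplement self-contained by reusing a computation already on the page, and it correctly flags that the conclusion holds only where $\tau_{P}(x)>0$, which is guaranteed on the interior of the support and in particular everywhere under the full-support assumption of Theorem~\ref{thm.ood_gene_error}.
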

			\begin{proof}
				This is a special case of Corollary 3.15 in \cite{dobler2015stein}, taking the constant $c=1$.
			\end{proof}

			

			\subsection{Proof of the Equivalence of Definition~\ref{p-identifiable}} 
			\label{appx:proofs-pid-equiv}
			
			\begin{proposition}
				\label{prop:p-equi-rela}
				The binary relation $\sim_p$ defined in Def.~\ref{p-identifiable} is an equivalence relation.
			\end{proposition}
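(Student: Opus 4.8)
The plan is to verify directly the three defining properties of an equivalence relation---reflexivity, symmetry, and transitivity---for $\sim_p$, exploiting two structural facts: that the witnessing pairs $(M_s,a_s)$ and $(M_z,a_z)$ in Def.~\ref{p-identifiable} range over the group of invertible affine maps whose linear part is a permutation matrix, and that the third defining condition \eqref{eq:y_1} is a plain equality of conditional densities. Throughout, I would track carefully that the defining equations place the quantities of the \emph{second} argument of the relation (the ``tilde'' side) on the left and those of the first argument on the right. For reflexivity, I would take $M_s=M_z=I$ (the identity permutation) and $a_s=a_z=0$; then \eqref{eq:s} and \eqref{eq:z} reduce to the identities $\mathbf{T}^s([f_x^{-1}]_{\mathcal{S}}(x))=\mathbf{T}^s([f_x^{-1}]_{\mathcal{S}}(x))$ and its $z$-analogue, and \eqref{eq:y_1} becomes the tautology $p_{f_y}=p_{f_y}$, so $\theta\sim_p\theta$.

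For symmetry, suppose $\theta\sim_p\tilde\theta$ is witnessed by $(M_s,a_s),(M_z,a_z)$. Since a permutation matrix is invertible with inverse $M_s^{-1}=M_s^{\mathsf T}$ again a permutation matrix, I would invert \eqref{eq:s} to obtain $\mathbf{T}^s([f_x^{-1}]_{\mathcal{S}}(x))=M_s^{-1}\tilde{\mathbf{T}}^s([\tilde f_x^{-1}]_{\mathcal{S}}(x))-M_s^{-1}a_s$, which is exactly the condition witnessing $\tilde\theta\sim_p\theta$ with the pair $(M_s^{-1},-M_s^{-1}a_s)$, and symmetrically for $z$. Condition \eqref{eq:y_1} transfers because equality of functions is symmetric. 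For transitivity, suppose $\theta\sim_p\tilde\theta$ via $(M_s,a_s)$ and $\tilde\theta\sim_p\hat\theta$ via $(\hat M_s,\hat a_s)$; substituting the first relation into the second gives $\hat{\mathbf{T}}^s([\hat f_x^{-1}]_{\mathcal{S}}(x))=\hat M_s M_s\,\mathbf{T}^s([f_x^{-1}]_{\mathcal{S}}(x))+(\hat M_s a_s+\hat a_s)$, and since $\hat M_s M_s$ is again a permutation matrix, the pair $(\hat M_s M_s,\ \hat M_s a_s+\hat a_s)$ witnesses the $s$-part of $\theta\sim_p\hat\theta$ (likewise for $z$). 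Condition \eqref{eq:y_1} chains by transitivity of equality.

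I do not expect a genuine obstacle here: the whole argument reduces to the observation that permutation matrices are closed under inverse and product (i.e.\ form a group), so the associated affine reparameterizations of the sufficient statistics inherit this group structure, which in turn delivers reflexivity (identity), symmetry (inverse), and transitivity (composition); condition \eqref{eq:y_1} contributes nothing beyond the fact that equality of conditional densities is itself an equivalence relation. The only point requiring minor care, rather than difficulty, is keeping the direction of the tilde versus non-tilde equations straight when inverting and composing, so that the constructed pairs land on the correct side of the defining identities.
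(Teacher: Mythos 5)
Your proposal is correct and follows essentially the same route as the paper's own proof: verify reflexivity with the identity permutation and zero offsets, symmetry via the inverse permutation, and transitivity via composition, with \eqref{eq:y_1} handled by the fact that equality of densities is itself an equivalence relation. If anything, your symmetry step is slightly more careful than the paper's, since you correctly record the offset as $-M_s^{-1}a_s$ rather than $-a_s$.
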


			\begin{proof}
				The equivalence relation should satisfy three properties as follows:
				\begin{itemize}
					\item \emph{Reflexive} property: The $\theta \sim_p \theta$ with $M_z$, $M_s$ being identity matrix and $a_s$, $a_z$ being 0. 
					\item \emph{Symmtric} property: If $\theta \sim_p \tilde{\theta}$, then there exists block permutation matrices $M_z$ and $M_s$ such that 
					\begin{align}
					& \mathbf{T}^s([f_x]_{\mathcal{S}}^{-1}(x)) = M_s \tilde{\mathbf{T}}^s([\tilde{f}_x]_{\mathcal{S}}^{-1}(x)) + a_s, \;
					\mathbf{T}^z([f_x]_{\mathcal{Z}}^{-1}(x)) = M_z \tilde{\mathbf{T}}^z([\tilde{f}_x]_{\mathcal{Z}}^{-1}(x)) + a_z, \nonumber \\
					& p_{f_y}(y|[f_x]_{\mathcal{S}}^{-1}(x)) = p_{\tilde{f}_y}(y|[\tilde{f}_x]_{\mathcal{S}}^{-1}(x)). \nonumber
					\end{align}
					The we have $M_s^{-1}$ and $M_z^{-1}$ are also block permutation matrices and such that:
					\begin{align}
					& \tilde{\mathbf{T}}^s([\tilde{f}_x]_{\mathcal{S}}^{-1}(x)) = M_s^{-1}\mathbf{T}^s([f_x]_{\mathcal{S}}^{-1}(x)) + (-a_s), \  \tilde{\mathbf{T}}^s([\tilde{f}_x]_{\mathcal{Z}}^{-1}(x)) = M_z^{-1} \mathbf{T}^s([f_x]_{\mathcal{Z}}^{-1}(x)) + (-a_z), \nonumber \\
					& p_{\tilde{f}_y}(y|[\tilde{f}_x]_{\mathcal{S}}^{-1}(x)) = p_{f_y}(y|[f_x]_{\mathcal{S}}^{-1}(x)). \nonumber
					\end{align}
					Therefore, we have $\tilde{\theta} \sim_p \theta$. 
					\item \emph{Transitive} property: if $\theta_1 \sim_p \theta_2$ and $\theta_2 \sim_p \theta_3$ with $\theta_i := \{f^i_x,f^i_y,\mathbf{T}^{s,1},\mathbf{T}^{z,1}, \bm{\Gamma}^{s,i},\bm{\Gamma}^{z,i}\}$, then we have 
					\begin{align}
					& \mathbf{T}^{s,1}((f^1_{x,s})^{-1}(x)) = M^1_s \mathbf{T}^{s,2}((f^2_{x,s})^{-1}(x)) + a^1_s, \nonumber \\
					& \mathbf{T}^{z,1}((f^1_{x,z})^{-1}(x)) = M^1_z \mathbf{T}^{z,2}((f^2_{x,z})^{-1}(x)) + a^2_z, \nonumber \\
					& \mathbf{T}^{s,2}((f^2_{x,s})^{-1}(x)) = M^2_s \mathbf{T}^{s,3}((f^3_{x,s})^{-1}(x)) + a^2_s, \nonumber \\
					& \mathbf{T}^{z,2}((f^2_{x,z})^{-1}(x)) = M^2_z \mathbf{T}^{z,3}((f^3_z)^{-1}(x)) + a^3_{x,z} \nonumber
					\end{align}
					for block permutation matrices $M^1_s, M^1_z, M^2_s, M^2_z$ and vectors $a^1_s, a^2_s,a^1_z, a^2_z$. 
					Then we have
					\begin{align}
					& \mathbf{T}^{s,1}((f^1_{x,s})^{-1}(x)) = M^2_s M^1_s \mathbf{T}^{s,3}((f^3_{x,s})^{-1}(x)) + (M^2_s a^1_s) + a^2_s, \nonumber \\
					& \mathbf{T}^{z,1}((f^1_{x,z})^{-1}(x)) = M^2_z M^1_z \mathbf{T}^{z,3}((f^3_{x,z})^{-1}(x)) + (M^2_z a^1_z) + a^2_z. \nonumber 
					\end{align}
					Besides, it is apparent that
					\begin{align}
					p_{f^1_y}(y|(f^1_{x})_s^{-1}(x)) = p_{f^2_y}(y|(f^2_{x})_s^{-1}(x)) = p_{f^3_y}(y|(f^3_{x})_s^{-1}(x)). 
					\end{align}
					Therefore, we have $\theta_1 \sim_p \theta_3$ since $M^2_s M^1_s$ and $M^2_z M^1_z$ are also permutation matrices. 
				\end{itemize}
				With above three properties satisfied, we have that $\sim_p$ is a equivalence relation. 
			\end{proof}

			
			\subsection{Proof of Theorem~\ref{thm:iden}}
			\label{appx:proofs-iden}

			{In the following, we write $p^e(x,y)$ as $p(x,y|d^e)$ and also $\Gamma^{t=s,z}_{c,d^e} := \Gamma^{t=s,z}(c,d^e), A^t_{c,d^e,i} = A^t_i(c,d^e)$ for $t=s,z$. To prove the theorem~\ref{thm:iden}, we first prove the theorem~\ref{thm:iden-new} for the simplest case when $c|d^e := d^e$, then we generalize to the case when $\mathcal{C} := \cup_r \{c_r\}$. The overall roadmap is as follows: we first prove the $\sim_A$-identifiability in theorem~\ref{thm:iden-a}, and the combination of which with lemma~\ref{lemma:k1},~\ref{lemma:k2} give theorem~\ref{thm:iden-new} in the simplest case when $c|d^e = d^e$. Then we generalize the case considered in theorem~\ref{thm:iden-new} to the more general case when $\mathcal{C} :=  \cup_r \{c_r\}$. }
			
			\begin{theorem}[$\sim_p$-identifiability]
				\label{thm:iden-new}
				For $\theta$ in the LaCIM $p^e_\theta(x,y) \in \mathcal{P}_{\exp}$ for any $e \in \mathcal{E}_{\mathrm{train}}$, we assume that {\bf(1)} the CI satisfies that $f_x$, $f'_x$ and $f''_x$ are continuous and that $f_x, f_y$ are bijective; {\bf(2)} that the $\{T^t_{i,j}\}_{j \in [k_t]}$ are linearly indepndent and $T^t_{i,j}$ are twice differentiable for any $t=s,z, i \in [q_t], j \in [k_t]$; {\bf(3)} the exogenous variables satisfy that the characteristic functions of $\varepsilon_x,\varepsilon_y$ are almost everywhere nonzero; {\bf(4)} the number of environments, \textit{i.e.}, $m \geq \max(q_s*k_s, q_z*k_z)+1$ and $ \left[\bm{\Gamma}^{t=s,z}_{d^{e_2}} - \bm{\Gamma}^{t=s,z}_{d^{e_1}},...,\bm{\Gamma}^{t=s,z}_{d^{e_m}} - \bm{\Gamma}^{t=s,z}_{d^{e_1}} \right]$ have full column rank for both $t=s$ and $t=z$, we have that the parameters $\theta:=\{f_x,f_y, \mathbf{T}^s, \mathbf{T}^z\}$ are $\sim_p$ identifiable.
			\end{theorem}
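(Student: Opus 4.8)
The plan is to port the nonlinear-ICA / iVAE identifiability argument of \citet{khemakhem2020variational} to our supervised, two-group latent setting, letting the label channel $Y$ supply the asymmetry that separates $S$ from $Z$; write $v=(s,z)$ and treat the environment index $d^e$ as the auxiliary conditioning variable. First I would strip off the exogenous noise. Marginalizing $y$ gives $p^e_\theta(x)=p^e_{\tilde\theta}(x)$ for every $e\in\mathcal{E}_{\mathrm{train}}$, and since $x=f_x(v)+\varepsilon_x$ with $f_x$ bijective this density is the convolution of $p_{\varepsilon_x}$ with the pushforward of $p^e(s,z)$ through $f_x$. Passing to Fourier transforms and invoking assumption (3), that the characteristic function of $\varepsilon_x$ is a.e.\ nonzero, lets me cancel $\widehat{p_{\varepsilon_x}}$ and conclude that the volume-corrected pushforwards agree, $p^e(f_x^{-1}(x))\,|\det J_{f_x^{-1}}(x)| = \tilde p^e(\tilde f_x^{-1}(x))\,|\det J_{\tilde f_x^{-1}}(x)|$ for all $x,e$. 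The identical deconvolution in $y$, applied to $p^e_\theta(y\mid x)$ through the nonvanishing characteristic function of $\varepsilon_y$, transfers the label equality onto the clean $f_y$-channel.

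Next I would take logarithms of the pushforward identity and substitute the factorized exponential-family prior $p^e(s,z)=p_{\mathbf{T}^z,\bm{\Gamma}^z_{d^e}}(z)\,p_{\mathbf{T}^s,\bm{\Gamma}^s_{d^e}}(s)$. The Jacobian term and the base measures $B_i$ do not depend on $e$, so subtracting the identity for a reference environment $e_1$ from that for each $e_k$ cancels them and leaves, for each $k$, an equality between $\langle\mathbf{T}(f_x^{-1}(x)),\bm{\Gamma}_{d^{e_k}}-\bm{\Gamma}_{d^{e_1}}\rangle$ and its tilde counterpart up to an $x$-independent constant. Stacking these $m-1$ identities produces a linear system in $\mathbf{T}(f_x^{-1}(x))$ whose coefficients are the natural-parameter differences; assumption (4), which needs $m\ge\max(q_sk_s,q_zk_z)+1$ environments and full column rank of the difference matrix, makes this system solvable and yields the affine relation $\mathbf{T}(f_x^{-1}(x)) = A\,\tilde{\mathbf{T}}(\tilde f_x^{-1}(x)) + c$ with $A$ invertible. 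This is the $\sim_A$ stage.

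The heart of the argument is upgrading this pooled affine relation to the block, permutation-level statement of \eqref{eq:s}--\eqref{eq:y_1}. Let $h:=\tilde f_x^{-1}\circ f_x$ be the induced latent bijection, so $\tilde v=h(v)$, written in blocks as $h=([h]_{\mathcal{S}},[h]_{\mathcal{Z}})$. The clean $Y$-identity forces $\tilde f_y([h(v)]_{\mathcal{S}})$ to agree with a bijective reparameterization of $f_y(s)$, which depends on $s$ alone; since $f_y,\tilde f_y$ are bijective (assumption (1)), $[h(v)]_{\mathcal{S}}$ is therefore a function of $s$ only. Differentiating the $\mathcal{S}$-block of the affine relation in $z$ and using the linear independence of $\{T^s_{i,j}\}$ (assumption (2)) together with invertibility of $\partial_z[h]_{\mathcal{Z}}$ then annihilates the cross term, giving $A_{sz}=0$ and hence \eqref{eq:s} up to the structure of the $\mathcal{S}$-block. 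The two case lemmas that the roadmap invokes ($k_t=1$ and $k_t\ge2$), which rely on continuity of $f_x,f_x',f_x''$ and twice-differentiability of the sufficient statistics, promote the invertible $A$ to a generalized permutation; combined with $A_{sz}=0$ and a count of the $q_sk_s$ versus $q_zk_z$ coordinates this forces the block-diagonal permutation $A=\mathrm{diag}(M_s,M_z)$, so that $A_{zs}=0$ as well and \eqref{eq:z} follows, while the $Y$-identity directly yields \eqref{eq:y_1}. Thus $\theta\sim_p\tilde\theta$.

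I expect this last step, the disentanglement of $S$ from $Z$, to be the real obstacle. The iVAE machinery only certifies a single permutation acting on the pooled statistics $(\mathbf{T}^s,\mathbf{T}^z)$, whereas the entire content of our claim is that no $Z$-coordinate leaks into the recovered $S$ and conversely; the only leverage is the structural asymmetry that $Y$ is a function of $S$ alone. The delicate part is thus converting the informal statement ``the label sees only $s$'' into the algebraic identities $A_{sz}=0$ and $A_{zs}=0$ with full rigor -- controlling the relevant Jacobians, ensuring the sufficient-statistic maps have full rank at almost every $x$, and handling the $k_t=1$ versus $k_t\ge2$ cases separately so that the affine map is pinned down to a permutation rather than merely an invertible linear transformation.
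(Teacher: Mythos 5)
Your skeleton (deconvolution via a.e.\ nonzero characteristic functions, then environment differences to extract an affine relation between sufficient statistics, then an upgrade to a permutation) matches the paper's, but there is a genuine gap at the pivotal $\sim_A$ stage. You derive the \emph{pooled} affine relation $\mathbf{T}(f_x^{-1}(x)) = A\,\tilde{\mathbf{T}}(\tilde f_x^{-1}(x)) + c$ from the $x$-marginal alone and claim assumption (4) makes the stacked linear system solvable. It does not: the $x$-marginal gives only $\bm{\overline{\Gamma}}^{s,\top}\mathbf{T}^s + \bm{\overline{\Gamma}}^{z,\top}\mathbf{T}^z = \bm{\overline{\tilde\Gamma}}^{s,\top}\tilde{\mathbf{T}}^s + \bm{\overline{\tilde\Gamma}}^{z,\top}\tilde{\mathbf{T}}^z + \tilde b$, and inverting this for the concatenated vector $(\mathbf{T}^s,\mathbf{T}^z)$ requires the $(m-1)\times(q_sk_s+q_zk_z)$ matrix $[\bm{\overline{\Gamma}}^{s,\top}\ \bm{\overline{\Gamma}}^{z,\top}]$ to have full column rank, hence $m\ge q_sk_s+q_zk_z+1$. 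Assumption (4) only guarantees that each block $\bm{\overline{\Gamma}}^s$, $\bm{\overline{\Gamma}}^z$ \emph{separately} has full column rank, with $m\ge\max(q_sk_s,q_zk_z)+1$, which can be far too few environments for your pooled system. The paper avoids this by deconvolving \emph{three} distributions --- the $x$-marginal, the $y$-marginal, and the joint $(x,y)$ --- to obtain the three identities \eqref{eq:equi-g}, \eqref{eq:equi-f}, \eqref{eq:equi-fg}; because the $y$-marginal involves only the $s$-statistics and the joint mixes $\mathbf{T}^s(f_y^{-1}(y))$ with $\mathbf{T}^z([f_x^{-1}]_{\mathcal{Z}}(x))$, subtracting them isolates separate affine relations for $\mathbf{T}^s$ and $\mathbf{T}^z$, each needing only its own block rank condition. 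This subtraction also yields the block structure ($A_{sz}=A_{zs}=0$) as a byproduct, which is why the paper never needs your later differentiation argument.

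Two smaller issues. First, your ``deconvolution in $y$ applied to $p^e_\theta(y\mid x)$'' is not well posed: the conditional is a ratio of convolutions, not a convolution, so the Fourier cancellation must be run on the joint $p^e(x,y)$ against the joint noise $(\varepsilon_x,\varepsilon_y)$, as the paper does in \eqref{eq:iden-a-step1-xy-1}--\eqref{eq:iden-a-step1-xy-9} and again in \eqref{eq:iden-y-step1}--\eqref{eq:iden-y-step6}. Second, your mechanism for disentanglement (show $[h(v)]_{\mathcal{S}}$ is a function of $s$ alone via the $Y$-channel, then differentiate the $\mathcal{S}$-block in $z$ to kill $A_{sz}$, and use the generalized-permutation structure to kill $A_{zs}$) is a plausible alternative route and correctly identifies $Y$'s asymmetric role as the source of separation; but as written it sits downstream of the unjustified pooled inversion, so the argument does not close under the stated hypotheses. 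If you strengthen assumption (4) to a joint rank condition your route likely goes through; to prove the theorem as stated you need the three-identity subtraction.
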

			
			To prove theorem~\ref{thm:iden-new}, We first prove the $\sim_A$-identifiability that is defined as follows:
			
			\begin{definition}[$\sim_A$-identifiability]
				\label{A-identifiable}
				The definition is the same with the one defined in~\ref{p-identifiable}, with $M_s,M_z$ being invertible matrices which are not necessarily to be the permutation matrices in Def.~\ref{p-identifiable}.
			\end{definition}
			\begin{proposition}
				\label{prop:s-equi-rela}
				The binary relation $\sim_A$ defined in Def.~\ref{A-identifiable} is an equivalence relation. \end{proposition}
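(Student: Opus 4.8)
The plan is to verify the three defining properties of an equivalence relation---reflexivity, symmetry, and transitivity---following the same argument as in Proposition~\ref{prop:p-equi-rela} but with the class of permutation matrices replaced by the class of invertible matrices. The one structural fact driving all three cases is that the invertible matrices form a group under multiplication: they contain the identity, are closed under inversion, and are closed under products. These are exactly the closure properties that the three equivalence-relation axioms consume, so the proof of Proposition~\ref{prop:p-equi-rela} goes through \emph{essentially verbatim}; the broadening from permutation matrices to invertible matrices changes nothing, since permutation matrices were used there only through these same closure properties.

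For reflexivity I would take $M_s = M_z = I$ and $a_s = a_z = 0$; since the identity is invertible, the relations in Def.~\ref{A-identifiable} hold trivially with $\tilde\theta = \theta$. For symmetry, suppose $\theta \sim_A \tilde\theta$ via invertible $M_s, M_z$ and vectors $a_s, a_z$, so that $\tilde{\mathbf{T}}^s([\tilde{f}_x^{-1}]_{\mathcal{S}}(x)) = M_s \mathbf{T}^s([f_x^{-1}]_{\mathcal{S}}(x)) + a_s$ (and analogously for $z$), together with the matching equality of the $p_{f_y}$ terms. Multiplying each affine identity on the left by $M_s^{-1}$ (resp.\ $M_z^{-1}$) and rearranging yields $\mathbf{T}^s([f_x^{-1}]_{\mathcal{S}}(x)) = M_s^{-1}\tilde{\mathbf{T}}^s([\tilde{f}_x^{-1}]_{\mathcal{S}}(x)) - M_s^{-1} a_s$; since $M_s^{-1}$ is again invertible, this witnesses $\tilde\theta \sim_A \theta$.

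For transitivity, given $\theta_1 \sim_A \theta_2$ and $\theta_2 \sim_A \theta_3$, I would compose the two affine maps, substituting the $\mathbf{T}^{s,2}$-relation into the $\mathbf{T}^{s,1}$-relation to obtain an affine identity linking $\mathbf{T}^{s,1}$ to $\mathbf{T}^{s,3}$ with matrix $M_s^2 M_s^1$ and offset $M_s^2 a_s^1 + a_s^2$ (and analogously for $z$), while the three $p_{f_y}$ equalities chain directly; as the product of invertible matrices is invertible, this gives $\theta_1 \sim_A \theta_3$. I anticipate no genuine obstacle, the only point requiring a moment of care being the symmetry step, where inverting the affine relation $y = Mx + a$ into $x = M^{-1}y - M^{-1}a$ is exactly where invertibility of $M$ (rather than mere squareness) is used---and this is precisely why the broadened matrix class in Def.~\ref{A-identifiable}, namely the general linear group, still yields a bona fide equivalence relation.
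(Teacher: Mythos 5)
Your proof is correct and follows essentially the same route as the paper, which simply cites the proof of Proposition~\ref{prop:p-equi-rela} and relies on the same group-closure properties (identity, inverses, products) of the matrix class. If anything, your symmetry step is slightly more careful than the paper's own write-up, since you correctly record the transformed offset as $-M_s^{-1}a_s$ rather than $-a_s$.
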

			\begin{proof}
				The proof is similar to that of proposition~\ref{prop:p-equi-rela}.
			\end{proof}
			The following theorem states that any LaCIM that belongs to $\mathcal{P}_{\exp}$ is $\sim_A$-identifiable. 
			

			\begin{theorem}[$\sim_A$-identifiability]
				\label{thm:iden-a}
				For $\theta$ in the LaCIM $p^e_\theta(x,y) \in \mathcal{P}_{\exp}$ for any $e \in \mathcal{E}_{\mathrm{train}}$, 
				{we assume {\bf(1)} the CI satisfies that $f_x,f_y$ are bijective; {\bf(2)} that the $\{T^t_{i,j}\}_{j \in [k_t]}$ are linearly indepndent and $T^t_{i,j}$ are differentiable for any $t=s,z, i \in [q_t], j \in [k_t]$; {\bf(3)} the exogenous variables satisfy that the characteristic functions of $\varepsilon_x,\varepsilon_y$ are almost everywhere nonzero; {\bf(4)} the number of environments, \textit{i.e.}, $m \geq \max(q_s*k_s, q_z*k_z)+1$ and 
					$\left[[\bm{\Gamma}^{t}_{d^{e_2}} - \bm{\Gamma}^{t}_{d^{e_1}}]^\mathsf{T},...,[\bm{\Gamma}^{t}_{d^{e_m}} - \bm{\Gamma}^{t}_{d^{e_1}}]^\mathsf{T}\right]^\mathsf{T}$ have full column rank for $t=s,z$, we have that the parameters $\{f_x,f_y, \mathbf{T}^s, \mathbf{T}^z\}$ are $\sim_p$ identifiable.}
			\end{theorem}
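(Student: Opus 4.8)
The plan is to extend the conditional nonlinear-ICA identifiability of \citet{khemakhem2020variational} from a single latent block to our two-block supervised model, and then to exploit the asymmetric role of $S$---that $S$ alone generates $Y$---to split the combined indeterminacy into separate $S$- and $Z$-blocks. I work in the base case $c\,|\,d^e=d^e$, so that the prior factorizes as $p^e(s,z)=p_{\mathbf{T}^s,\bm{\Gamma}^s_{d^e}}(s)\,p_{\mathbf{T}^z,\bm{\Gamma}^z_{d^e}}(z)$, and I start from the hypothesis $p^e_\theta(x,y)=p^e_{\tilde\theta}(x,y)$ for all $e\in\mathcal{E}_{\mathrm{train}}$. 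First I would strip off both exogenous noises by deconvolution. Marginalizing $y$, the law $p^e(x)=\int p_{\varepsilon_x}(x-f_x(s,z))\,p^e(s,z)\,ds\,dz$ is a convolution of $\varepsilon_x$ with the pushforward of $p^e(s,z)$ under $f_x$; since the characteristic function of $\varepsilon_x$ is a.e.\ nonzero (assumption 3), passing to Fourier transforms cancels the noise factor and forces the pushforward densities to agree, $\bar p^e_{f_x}=\bar p^e_{\tilde f_x}$ on $\mathcal{X}$. Applying the same deconvolution to the \emph{joint} law---whose noise factor is the a.e.\ nonzero product $\hat p_{\varepsilon_x}\hat p_{\varepsilon_y}$---equates the pushforward measures of $p^e(s,z)$ under $(s,z)\mapsto(f_x(s,z),f_y(s))$; these are supported on graphs over $x$, and by bijectivity of $f_x$ and full support of the prior, matching the two graphs gives the noise-free $Y$-consistency relation $f_y([f_x^{-1}]_{\mathcal{S}}(x))=\tilde f_y([\tilde f_x^{-1}]_{\mathcal{S}}(x))$ for all $x$, which is exactly \eqref{eq:y_1}.

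Next I would expand the $x$-channel equality. Taking logarithms of $\bar p^e_{f_x}=\bar p^e_{\tilde f_x}$, applying the change-of-variables formula through the bijections $f_x,\tilde f_x$, and inserting the exponential-family form, I difference the resulting identity at $e_\ell$ against $e_1$ so that the environment-independent base-measure and log-Jacobian terms cancel, leaving
\begin{equation*}
\bigl\langle\mathbf{T}^s([f_x^{-1}]_{\mathcal{S}}(x)),\Delta\bm{\Gamma}^s_\ell\bigr\rangle+\bigl\langle\mathbf{T}^z([f_x^{-1}]_{\mathcal{Z}}(x)),\Delta\bm{\Gamma}^z_\ell\bigr\rangle=\bigl\langle\tilde{\mathbf{T}}^s([\tilde f_x^{-1}]_{\mathcal{S}}(x)),\Delta\tilde{\bm{\Gamma}}^s_\ell\bigr\rangle+\bigl\langle\tilde{\mathbf{T}}^z([\tilde f_x^{-1}]_{\mathcal{Z}}(x)),\Delta\tilde{\bm{\Gamma}}^z_\ell\bigr\rangle+\kappa_\ell,
\end{equation*}
with $\Delta\bm{\Gamma}^t_\ell:=\bm{\Gamma}^t_{d^{e_\ell}}-\bm{\Gamma}^t_{d^{e_1}}$ and $\kappa_\ell$ a constant. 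This is the two-block analogue of the core identity in conditional ICA, but it still couples the $S$- and $Z$-sufficient statistics.

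The decisive step is the block separation. From \eqref{eq:y_1} and injectivity of $\tilde f_y$: viewing $\tilde s:=[\tilde f_x^{-1}]_{\mathcal{S}}(x)$ as a function of the true latent $(s,z)=f_x^{-1}(x)$, the identity $\tilde f_y(\tilde s)=f_y(s)$ has a right side independent of $z$, so $\tilde s$ cannot depend on $z$; hence $[\tilde f_x^{-1}]_{\mathcal{S}}(x)$ is a function of $[f_x^{-1}]_{\mathcal{S}}(x)$ alone. For the $Z$-block I differentiate the displayed identity once in an $s$-coordinate and once in a $z$-coordinate: the pure-$s$, pure-$z$, and $\kappa_\ell$ terms vanish under the mixed partial, and the $\tilde s$-term vanishes because $\tilde s$ is now $z$-free, leaving $\partial_{s}\partial_{z}\bigl\langle\tilde{\mathbf{T}}^z([\tilde f_x^{-1}]_{\mathcal{Z}}(x)),\Delta\tilde{\bm{\Gamma}}^z_\ell\bigr\rangle=0$ for every $\ell$. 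The block-$z$ full column rank in assumption (4) then forces all these mixed partials of $\tilde{\mathbf{T}}^z\circ[\tilde f_x^{-1}]_{\mathcal{Z}}$ to vanish, and with the linear independence of $\{T^z_{i,j}\}$ this makes $[\tilde f_x^{-1}]_{\mathcal{Z}}(x)$ a function of $[f_x^{-1}]_{\mathcal{Z}}(x)$ alone. With both blocks pure, the differenced identity decouples into an $s$-only and a $z$-only system; stacking over $\ell=2,\dots,m$ and inverting via the block-wise full column rank (which requires $m\geq\max(q_sk_s,q_zk_z)+1$) yields $\mathbf{T}^s([f_x^{-1}]_{\mathcal{S}}(x))=M_s\tilde{\mathbf{T}}^s([\tilde f_x^{-1}]_{\mathcal{S}}(x))+a_s$ and its $z$-analogue with $M_s,M_z$ invertible, i.e.\ \eqref{eq:s} and \eqref{eq:z}. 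Together with \eqref{eq:y_1} this is precisely $\sim_A$.

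The hard part is exactly this block separation: standard conditional ICA only identifies the combined latent up to an invertible map that may scramble $S$ and $Z$, and disentangling them is what is new here. Making the mixed-partial argument rigorous requires care---justifying the differentiations (hence the differentiability of the $T^t_{i,j}$ and of $f_x$), applying each full-rank hypothesis to the correct $(m-1)\times(q_tk_t)$ matrix of $\bm{\Gamma}$-differences, and verifying that "vanishing mixed partials of $\tilde{\mathbf{T}}^z\circ h_Z$" together with linear independence genuinely forces $h_Z$ to be $s$-free rather than merely additively separable. The remaining refinement of the invertible $M_s,M_z$ to permutation matrices ($\sim_A\Rightarrow\sim_p$) is handled separately in Lemmas~\ref{lemma:k1},~\ref{lemma:k2}, which is why only first-order differentiability is needed at this stage.
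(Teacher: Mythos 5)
Your deconvolution backbone matches the paper's, but the decisive block-separation step is carried out by a different mechanism than the one the paper uses, and that mechanism has a gap you yourself flag but cannot close as stated. The paper never differentiates in order to separate the blocks. Instead it deconvolves \emph{three} distributions --- the $x$-marginal, the $y$-marginal, and the joint $(x,y)$ --- and, crucially, for the joint it uses the hybrid inverse $h(x,y)=\bigl([f_x^{-1}]_{\mathcal{Z}}(x),\,f_y^{-1}(y)\bigr)$, so that the resulting identity \eqref{eq:equi-fg} carries $\mathbf{T}^s$ evaluated through $f_y^{-1}(y)$ while $\mathbf{T}^z$ is evaluated through $[f_x^{-1}]_{\mathcal{Z}}(x)$. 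Subtracting the pure-$y$ identity \eqref{eq:equi-g} from \eqref{eq:equi-fg} kills the $s$-block and leaves a decoupled $z$-relation; subtracting that from the $x$-marginal identity \eqref{eq:equi-f} leaves a decoupled $s$-relation. The separation is purely algebraic and never has to establish that $[\tilde f_x^{-1}]_{\mathcal{Z}}$ is $s$-free as an intermediate step.

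Your route instead shows that $[\tilde f_x^{-1}]_{\mathcal{S}}$ is $z$-free via the graph identity and injectivity of $\tilde f_y$ (this part is sound, and is a reasonable shortcut to \eqref{eq:y_1}), and then tries to show $[\tilde f_x^{-1}]_{\mathcal{Z}}$ is $s$-free by annihilating mixed partials. The gap: $\partial_{s_a}\partial_{z_b}\bigl\langle\tilde{\mathbf{T}}^z(\tilde z(s,z)),\Delta\tilde{\bm{\Gamma}}^z_\ell\bigr\rangle=0$ for all $\ell$, even after the rank condition is applied componentwise, only yields that each $\tilde T^z_{i,j}(\tilde z_i(s,z))$ is additively separable, $u_{i,j}(s)+v_{i,j}(z)$; it does not make $\tilde z_i$ itself $s$-free. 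Take $k_z=1$ and $\tilde T^z_{i,1}(t)=t$: then $\tilde z_i(s,z)=u(s)+v(z)$ satisfies your conclusion while genuinely depending on $s$. Closing this requires exactly the kind of nondegeneracy argument on the sufficient-statistics curve that the paper defers to lemmas~\ref{lemma:k1} and~\ref{lemma:k2}, so it cannot be waved through at this stage --- and it is unnecessary, since the target relations \eqref{eq:s}--\eqref{eq:z} concern $\tilde{\mathbf{T}}^z\circ[\tilde f_x^{-1}]_{\mathcal{Z}}$, not $[\tilde f_x^{-1}]_{\mathcal{Z}}$ itself. Two smaller points: your rank argument is applied to the $\tilde{\bm{\Gamma}}$-differences, whereas assumption (4) is stated for the unperturbed $\bm{\Gamma}$ (one must require $\tilde\theta$ to satisfy the hypotheses as well); and the full-column-rank inversion only delivers $\mathbf{T}^s=M_s\tilde{\mathbf{T}}^s+a_s$ with $M_s$ a pseudoinverse product --- its invertibility is a separate claim, which the paper proves by exhibiting $k_s$ points at which the Jacobians of the sufficient statistics are jointly invertible via Lemma 3 of \citet{khemakhem2020variational}, a step you omit.
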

			
			\begin{proof}
				Suppose that $\theta=\{f_x,f_y,\mathbf{T}^s,\mathbf{T}^z\}$ and $\tilde{\theta} =  \{\tilde{f}_x,\tilde{g}_y,\tilde{\mathbf{T}}^s,\tilde{\mathbf{T}}^z\}$ share the same observational distribution for each environment $e \in \mathcal{E}_{\mathrm{train}}$, \textit{i.e.}, 
				\begin{align}
				\label{eq:marginal}
				p_{f_x,f_y,\mathbf{T}^s,\bm{\Gamma}^s,\mathbf{T}^z,\bm{\Gamma}^z}(x,y|d^e) = p_{\tilde{f}_x,\tilde{f}_y,\tilde{\mathbf{T}}^s,\tilde{\bm{\Gamma}}^s,\tilde{\mathbf{T}}^z,\tilde{\bm{\Gamma}}^z}(x,y|d^e).
				\end{align}
				Then we have 
				\begingroup
				\allowdisplaybreaks
				\begin{align}
				& p_{f_x,f_y,\mathbf{T}^s,\bm{\Gamma}^s,\mathbf{T}^z,\bm{\Gamma}^z}(x|d^e) = p_{\tilde{f}_x,\tilde{f}_y,\tilde{\mathbf{T}}^s,\tilde{\bm{\Gamma}}^s,\tilde{\mathbf{T}}^z,\tilde{\bm{\Gamma}}^z}(x|d^e) \label{eq:iden-a-step1-x-1} \\
				\Longrightarrow & \int_{\mathcal{S} \times \mathcal{Z}} p_{f_x}(x|s,z)p_{\mathbf{T}^s,\bm{\Gamma}^s,\mathbf{T}^z,\bm{\Gamma}^z}(s,z|d^e) dsdz = \int_{\mathcal{S} \times \mathcal{Z}} p_{\tilde{f}_x}(x|s,z)p_{\tilde{\mathbf{T}}^s,\tilde{\bm{\Gamma}}^s,\tilde{\mathbf{T}}^z,\tilde{\bm{\Gamma}}^z}(s,z|d^e) dsdz  \label{eq:iden-a-step1-x-2} \\
				\Longrightarrow & \ \  \int_{\mathcal{X}} p_{\varepsilon_x}(x-\bar{x})p_{\mathbf{T}^s,\bm{\Gamma}^s,\mathbf{T}^z,\bm{\Gamma}^z}(f_x^{-1}(\bar{x})|d^e) \mathrm{vol}J_{f_x^{-1}}(\bar{x}) d\bar{x} \label{eq:iden-a-step1-x-3} \\
				& \ \ \ \ \ \ \ \ \ \ \ \ \ \ \ \ \ \ \ \ \ \ = \int_{\mathcal{X} } p_{\varepsilon_x}(x-\bar{x}) p_{\tilde{\mathbf{T}}^s,\tilde{\bm{\Gamma}}^s,\tilde{\mathbf{T}}^z,\tilde{\bm{\Gamma}}^z}(\tilde{f}_x^{-1}(\bar{x})|d^e) \mathrm{vol}J_{\tilde{f}_x^{-1}}(\bar{x}) d\bar{x} \label{eq:iden-a-step1-x-4} \\
				\Longrightarrow & \ \ \int_{\mathcal{X}} \tilde{p}_{\mathbf{T}^s,\bm{\Gamma}^s,\mathbf{T}^z,\bm{\Gamma}^z,f_x}(\bar{x}|d^e)p_{\varepsilon_x}(x-\bar{x})d\bar{x} = \int_{\mathcal{X}} \tilde{p}_{\tilde{\mathbf{T}}^s,\tilde{\bm{\Gamma}}^s,\tilde{\mathbf{T}}^z,\tilde{\bm{\Gamma}}^z,\tilde{f}_x}(\bar{x}|d^e)p_{\varepsilon_x}(x-\bar{x})d\bar{x} \label{eq:iden-a-step1-x-5} \\
				\Longrightarrow & \ \  (\tilde{p}_{\mathbf{T}^s,\bm{\Gamma}^s,\mathbf{T}^z,\bm{\Gamma}^z,f_x} * p_{\varepsilon_x})(x|d^e) = (\tilde{p}_{\tilde{\mathbf{T}}^s,\tilde{\bm{\Gamma}}^s,\tilde{\mathbf{T}}^z,\tilde{\bm{\Gamma}}^z,\tilde{f}_x}) * p_{\varepsilon_x}(x|d^e) \label{eq:iden-a-step1-x-6} \\
				\Longrightarrow & \ \ 
				F[\tilde{p}_{\mathbf{T}^s,\bm{\Gamma}^s,\mathbf{T}^z,\bm{\Gamma}^z,f_x}](\omega)\varphi_{\varepsilon_x}(\omega) = F[\tilde{p}_{\tilde{\mathbf{T}}^s,\tilde{\bm{\Gamma}}^s,\tilde{\mathbf{T}}^z,\tilde{\bm{\Gamma}}^z,\tilde{f}_x}](\omega)\varphi_{\varepsilon_x}(\omega) \label{eq:iden-a-step1-x-7} \\
				\Longrightarrow & \ \  F[\tilde{p}_{\mathbf{T}^s,\bm{\Gamma}^s,\mathbf{T}^z,\bm{\Gamma}^z,f_x}](\omega) = F[\tilde{p}_{\tilde{\mathbf{T}}^s,\tilde{\bm{\Gamma}}^s,\tilde{\mathbf{T}}^z,\tilde{\bm{\Gamma}}^z,\tilde{f}_x}](\omega) \label{eq:iden-a-step1-x-8} \\
				\Longrightarrow & \ \  \tilde{p}_{\mathbf{T}^s,\bm{\Gamma}^s,\mathbf{T}^z,\bm{\Gamma}^z,f_x}(x|d^e) = \tilde{p}_{\tilde{\mathbf{T}}^s,\tilde{\bm{\Gamma}}^s,\tilde{\mathbf{T}}^z,\tilde{\bm{\Gamma}}^z,\tilde{f}_x}(x|d^e) \label{eq:iden-a-step1-x-9}
				\end{align}
				\endgroup
				where $\mathrm{vol}J_f(X) := \det(J_f(X))$ for any square matrix $X$ and function $f$ with ``$J$" standing for the Jacobian. The $\tilde{p}_{\mathbf{T}^s,\bm{\Gamma}^s,\mathbf{T}^z,\bm{\Gamma}^z,f_x}(x)$ in \eqref{eq:iden-a-step1-x-5} is denoted as $ p_{\mathbf{T}^s,\bm{\Gamma}^s,\mathbf{T}^z,\bm{\Gamma}^z}(f_x^{-1}(x|d^e) \mathrm{vol}J_{f^{-1}}(x)$. The '*' in \eqref{eq:iden-a-step1-x-6} denotes the convolution operator. The $F[\cdot]$ in \eqref{eq:iden-a-step1-x-7} denotes the Fourier transform, where $\phi_{\varepsilon_x}(\omega) = F[p_{\varepsilon_x}](\omega)$. Since we assume that the $\varphi_{\varepsilon_x}(\omega)$ is non-zero almost everywhere, we can drop it to get \eqref{eq:iden-a-step1-x-9}. Similarly, we have that:
				\begingroup
				\allowdisplaybreaks
				\begin{align}
				& p_{f_y,\mathbf{T}^s,\bm{\Gamma}^s}(y|d^e) = p_{\tilde{f}_y,\tilde{\mathbf{T}}^s,\tilde{\bm{\Gamma}}^s}(y|d^e) \label{eq:iden-a-step1-y-1} \\
				\Longrightarrow & \int_{\mathcal{S}} p_{f_y}(y|s)p_{\mathbf{T}^s,\bm{\Gamma}^s}(s|d^e) ds = \int_{\mathcal{S}} p_{\tilde{f}_y}(y|s)p_{\tilde{\mathbf{T}}^s,\tilde{\bm{\Gamma}}^s}(s|d^e) ds  \label{eq:iden-a-step1-y-2} \\
				\Longrightarrow & \ \  \int_{\mathcal{Y}} p_{\varepsilon_y}(y-\bar{y})p_{\mathbf{T}^s,\bm{\Gamma}^s}(f_y^{-1}(\bar{y})|d^e) \mathrm{vol}J_{f_y^{-1}}(\bar{y}) d\bar{y} \label{eq:iden-a-step1-y-3} \\
				& \ \ \ \ \ \ \ \ \ \ \ \ \ \ \ \ \ \ \ \ \ \ = \int_{\mathcal{Y}} p_{\varepsilon_y}(y-\bar{y}) p_{\tilde{\mathbf{T}}^s,\tilde{\bm{\Gamma}}^s}(\tilde{f}_y^{-1}(\bar{y})|d^e) \mathrm{vol}J_{\tilde{g}^{-1}}(\bar{y}) d\bar{y} \label{eq:iden-a-step1-y-4} \\
				\Longrightarrow & \ \ \int_{\mathcal{S}} \tilde{p}_{\mathbf{T}^s,\bm{\Gamma}^s,f_y}(\bar{y}|d^e)p_{\varepsilon_y}(y-\bar{y})d\bar{y} = \int_{\mathcal{S}} \tilde{p}_{\tilde{\mathbf{T}}^s,\tilde{\bm{\Gamma}}^s,\tilde{f}_y}(\bar{y}|d^e)p_{\varepsilon_y}(y-\bar{y})d\bar{y} \label{eq:iden-a-step1-y-5} \\
				\Longrightarrow & \ \  (\tilde{p}_{\mathbf{T}^s,\bm{\Gamma}^s,f_y} * p_{\varepsilon_y})(y|d^e) = (\tilde{p}_{\tilde{\mathbf{T}}^s,\tilde{\bm{\Gamma}}^s,\tilde{f}_y} * p_{\varepsilon_y})(y|d^e) \label{eq:iden-a-step1-y-6} \\
				\Longrightarrow & \ \ 
				F[\tilde{p}_{\mathbf{T}^s,\bm{\Gamma}^s,f_y}](\omega)\varphi_{\varepsilon_y}(\omega) = F[\tilde{p}_{\tilde{\mathbf{T}}^s,\tilde{\bm{\Gamma}}^s,\tilde{f}_y}](\omega)\varphi_{\varepsilon_y}(\omega) \label{eq:iden-a-step1-y-7} \\
				\Longrightarrow & \ \  F[\tilde{p}_{\mathbf{T}^s,\bm{\Gamma}^s,f_y}](\omega) = F[\tilde{p}_{\tilde{\mathbf{T}}^s,\tilde{\bm{\Gamma}}^s,\tilde{f}_y}](\omega) \label{eq:iden-a-step1-y-8} \\
				\Longrightarrow & \ \  \tilde{p}_{\mathbf{T}^s,\bm{\Gamma}^s,f_y}(y) = \tilde{p}_{\tilde{\mathbf{T}}^s,\tilde{\bm{\Gamma}}^s,\tilde{f}_y}(y) \label{eq:iden-a-step1-y-9}, 
				\end{align}
				\endgroup
				and that
				\begingroup
				\allowdisplaybreaks
				\begin{align}
				& p_{f_x,f_y\mathbf{T}^s,\bm{\Gamma}^s,\mathbf{T}^z,\bm{\Gamma}^z}(x,y|d^e) = p_{\tilde{f}_x,\tilde{f}_y,\tilde{\mathbf{T}}^s,\tilde{\bm{\Gamma}}^s,\tilde{\mathbf{T}}^z,\tilde{\bm{\Gamma}}^z}(x,y|d^e) \label{eq:iden-a-step1-xy-1} \\
				\Longrightarrow & \int_{\mathcal{S} \times \mathcal{Z}} p_{f_x}(x|s,z)p_{f_y}(y|s)p_{\mathbf{T}^s,\bm{\Gamma}^s,\mathbf{T}^z,\bm{\Gamma}^z}(s,z|d^e) dsdz \nonumber \\
				& \ \ \ \ \ \ \ \ \ \ \ \ \ \ \ \ \ \ \ \ \ \  = \int_{\mathcal{S} \times \mathcal{Z}} p_{\tilde{f}}(x|s,z)p_{\tilde{f}_y}(y|s)p_{\tilde{\mathbf{T}}^s,\tilde{\bm{\Gamma}}^s,\tilde{\mathbf{T}}^z,\tilde{\bm{\Gamma}}^z}(s,z|d^e) dsdz  \label{eq:iden-a-step1-xy-2} \\
				\Longrightarrow & \ \  \int_{\mathcal{V}} p_{\varepsilon}(v-\bar{v})p_{\mathbf{T}^s,\bm{\Gamma}^s,\mathbf{T}^z,\bm{\Gamma}^z}(h^{-1}(\bar{v})|d^e) \mathrm{vol}J_{h^{-1}}(\bar{v}) d\bar{v} \label{eq:iden-a-step1-xy-3} \\
				& \ \ \ \ \ \ \ \ \ \ \ \ \ \ \ \ \ \ \ \ \ \ = \int_{\mathcal{V}} p_{\varepsilon}(v-\bar{v}) p_{\tilde{\mathbf{T}}^s,\tilde{\bm{\Gamma}}^s,\tilde{\mathbf{T}}^z,\tilde{\bm{\Gamma}}^z}(\tilde{h}^{-1}(\bar{v})|d^e) \mathrm{vol}J_{\tilde{h}^{-1}}(\bar{v}) d\bar{v} \label{eq:iden-a-step1-xy-4} \\
				\Longrightarrow & \ \ \int_{\mathcal{S} \times \mathcal{Z}} \tilde{p}_{\mathbf{T}^s,\bm{\Gamma}^s,\mathbf{T}^z,\bm{\Gamma}^z,h,c}(\bar{v}|d)p_{\varepsilon}(v-\bar{v})d\bar{v} = \int_{\mathcal{S} \times \mathcal{Z}} \tilde{p}_{\tilde{\mathbf{T}}^s,\tilde{\bm{\Gamma}}^s,\tilde{\mathbf{T}}^z,\tilde{\bm{\Gamma}}^z,\tilde{h},d^e}(\bar{v}|d^e)p_{\varepsilon}(v-\bar{v})d\bar{v} \label{eq:iden-a-step1-xy-5} \\
				\Longrightarrow & \ \  (\tilde{p}_{\mathbf{T}^s,\bm{\Gamma}^s,\mathbf{T}^z,\bm{\Gamma}^z,h} * p_{\varepsilon})(v) = (\tilde{p}_{\tilde{\mathbf{T}}^s,\tilde{\bm{\Gamma}}^s,\tilde{\mathbf{T}}^z,\tilde{\bm{\Gamma}}^z,\tilde{h}} * p_{\varepsilon})(v) \label{eq:iden-a-step1-xy-6} \\
				\Longrightarrow & \ \ 
				F[\tilde{p}_{\mathbf{T}^s,\bm{\Gamma}^s,\mathbf{T}^z,\bm{\Gamma}^z,h}](\omega)\varphi_{\varepsilon}(\omega) = F[\tilde{p}_{\tilde{\mathbf{T}}^s,\tilde{\bm{\Gamma}}^s,\tilde{\mathbf{T}}^z,\tilde{\bm{\Gamma}}^z,\tilde{h}}](\omega)\varphi_{\varepsilon}(\omega) \label{eq:iden-a-step1-xy-7} \\
				\Longrightarrow & \ \  F[\tilde{p}_{\mathbf{T}^s,\bm{\Gamma}^s,\mathbf{T}^z,\bm{\Gamma}^z,h}](\omega) = F[\tilde{p}_{\tilde{\mathbf{T}}^s,\tilde{\bm{\Gamma}}^s,\tilde{\mathbf{T}}^z,\tilde{\bm{\Gamma}}^z,\tilde{h}}](\omega) \label{eq:iden-a-step1-xy-8} \\
				\Longrightarrow & \ \  \tilde{p}_{\mathbf{T}^s,\bm{\Gamma}^s,\mathbf{T}^z,\bm{\Gamma}^z,h}(v) = \tilde{p}_{\tilde{\mathbf{T}}^s,\tilde{\bm{\Gamma}}^s,\tilde{\mathbf{T}}^z,\tilde{\bm{\Gamma}}^z,h}(v) \label{eq:iden-a-step1-xy-9}, 
				\end{align}
				\endgroup
				where $v := [x^{\top},y^{\top}]^{\top}$, $\varepsilon := [\varepsilon_x^{\top},\varepsilon_y^{\top}]^{\top}$, $h(v) = [[f_x]_{\mathcal{Z}}^{-1}(x)^{\top},f_y^{-1}(y)^{\top}]^{\top}$. According to \eqref{eq:iden-a-step1-y-9}, we have
				\begingroup
				\allowdisplaybreaks
				\begin{align}
				& \log{\mathrm{vol}J_{f_y}(y)} + \sum_{i=1}^{q_s} \left( \log{B_i(f_{y,i}^{-1}(y)}) - \log{A_i(d^e)} + \sum_{j=1}^{k_s} T^s_{i,j}(f_{y,i}^{-1}(y))\Gamma^s_{i,j}(d^e)  \right) \nonumber \\
				& \quad \quad \quad \quad =  \log{\mathrm{vol}J_{\tilde{f}_y}(y)} + \sum_{i=1}^{q_s} \left( \log{\tilde{B}_i(\tilde{f}_{y,i}^{-1}(y)}) - \log{\tilde{A}_i(d^e)} + \sum_{j=1}^{k_s} \tilde{T}^s_{i,j}(\tilde{f}_{y,i}^{-1}(y))\tilde{\Gamma}^s_{i,j}(d^e)  \right) \label{eq:log}
				\end{align}
				Suppose that the assumption (4) holds, then we have 
				\begin{align}
				\langle \mathbf{T}^s(f_y^{-1}(y)),\bm{\overline{\Gamma}}^s(d^{e_k})\rangle + \sum_i \log{ \frac{A_i(d^{e_1})}{A_i(d^{e_k})} } = \langle \tilde{\mathbf{T}}^s(\tilde{f}_y^{-1}(y)),\bm{\overline{\tilde{\Gamma}}}^s(d^{e_k})\rangle + \sum_i \log{ \frac{\tilde{A}_i(d^{e_1})}{\tilde{A}_i(d^{e_k})} }
				\end{align}
				\endgroup
				for all $k \in [m]$, where $\bm{\bar{\Gamma}}(d) = \bm{\Gamma}(d) - \bm{\Gamma}(d^{e_1})$. Denote $\tilde{b}_s(k) = \sum_i \frac{\tilde{A}^s_i(d^{e_1})A^s_i(d^{e_k})}{\tilde{A}^s_i(d^{e_k})A^s_i(d^{e_1})}$ for $k \in [m]$, then we have
				\begin{align}
				\label{eq:equi-g}
				\bm{\overline{\Gamma}}^{s,\top} \mathbf{T}^s(f_y^{-1}(y)) = \bm{\overline{\tilde{\Gamma}}}^{s,\top} \tilde{\mathbf{T}}^s(\tilde{f}_y^{-1}(y)) + \tilde{b}_s,
				\end{align}
				Similarly, from \eqref{eq:iden-a-step1-x-9} and \eqref{eq:iden-a-step1-xy-9}, there exists $\tilde{b}_z,\tilde{b}_s$ such that 
				\begin{align}
				\label{eq:equi-f}
				\bm{\overline{\Gamma}}^{s,\top} \mathbf{T}^s([f_x]_{\mathcal{S}}^{-1}(x)) + \bm{\overline{\Gamma}}^{z,\top} \mathbf{T}^z([f_x]_{\mathcal{Z}}^{-1}(x)) = \bm{\overline{\tilde{\Gamma}}}^{s,\top} \tilde{\mathbf{T}}^s([\tilde{f}_x]_{\mathcal{S}}^{-1}(x)) + \bm{\overline{\tilde{\Gamma}}}^{z,\top} \tilde{\mathbf{T}}^z([\tilde{f}_x]_{\mathcal{Z}}^{-1}(x)) + \tilde{b}_z + \tilde{b}_s, 
				\end{align}
				where $\tilde{b}_z(k) = \sum_i \frac{\tilde{A}^z_i(d^{e_1})A^z_i(d^{e_k})}{\tilde{A}^z_i(d^{e_k})A^z_i(d^{e_1})}$ for $k \in [m]$; and that, 
				\begin{align}
				\label{eq:equi-fg}
				\bm{\overline{\Gamma}}^{s,\top} \mathbf{T}^s(f_y^{-1}(y)) + \bm{\overline{\Gamma}}^{z,\top} \mathbf{T}^z([f_{x}^{-1}]_{\mathcal{Z}}(x)) = \bm{\overline{\tilde{\Gamma}}}^{s,\top} \tilde{\mathbf{T}}^s(\tilde{f}_y^{-1}(y)) + \bm{\overline{\tilde{\Gamma}}}^{z,\top} \tilde{\mathbf{T}}^z([\tilde{f}_{x}^{-1}]_{\mathcal{Z}}(x)) + \tilde{b}_z + \tilde{b}_s.
				\end{align}
				Substituting \eqref{eq:equi-g} to \eqref{eq:equi-f} and \eqref{eq:equi-fg}, we have that 
				\begin{align}
				\bm{\overline{\Gamma}}^{z,\top} \mathbf{T}^z([f_{x}^{-1}]_{\mathcal{Z}}(y)) = \bm{\overline{\tilde{\Gamma}}}^{z,\top} \tilde{\mathbf{T}}^z([\tilde{f}_{x}^{-1}]_{\mathcal{Z}}(y)) + \tilde{b}_z, \ \bm{\overline{\Gamma}}^{s,\top} \mathbf{T}^s([f_{x}^{-1}]_{\mathcal{S}}(y)) = \bm{\overline{\tilde{\Gamma}}}^{s,\top} \tilde{\mathbf{T}}^s([\tilde{f}_{x}^{-1}]_{\mathcal{S}}(y)) + \tilde{b}_s.
				\end{align}
				According to assumption (4), the $\bm{\overline{\Gamma}}^{s,\top}$ and $\bm{\overline{\Gamma}}^{z,\top}$ have full column rank. Therefore, we have that 
				\begingroup
				\allowdisplaybreaks
				\begin{align}
				& \mathbf{T}^z([f_{x}^{-1}]_{\mathcal{Z}}(x)) = \left( \bm{\overline{\Gamma}}^{z} \bm{\overline{\Gamma}}^{z,\top} \right)^{-1} \bm{\overline{\tilde{\Gamma}}}^{z,\top} \tilde{\mathbf{T}}^z([\tilde{f}_{x}^{-1}]_{\mathcal{Z}}(x)) + \left( \bm{\overline{\Gamma}}^{z} \bm{\overline{\Gamma}}^{z,\top} \right)^{-1}\tilde{b}_z \label{eq:invert-1} \\
				& \mathbf{T}^s([f_{x}^{-1}]_{\mathcal{S}}(x)) = \left( \bm{\overline{\Gamma}}^{s} \bm{\overline{\Gamma}}^{s,\top} \right)^{-1} \bm{\overline{\tilde{\Gamma}}}^{s,\top} \tilde{\mathbf{T}}^s([\tilde{f}_{x}^{-1}]_{\mathcal{S}}(x)) + \left( \bm{\overline{\Gamma}}^{s} \bm{\overline{\Gamma}}^{s,\top} \right)^{-1}\tilde{b}_s. \label{eq:invert-2} \\
				& \mathbf{T}^s(f_y^{-1}(y)) = \left( \bm{\overline{\Gamma}}^{s} \bm{\overline{\Gamma}}^{s,\top} \right)^{-1} \bm{\overline{\tilde{\Gamma}}}^{s,\top} \tilde{\mathbf{T}}^s(\tilde{f}_y^{-1}(y)) + \left( \bm{\overline{\Gamma}}^{s} \bm{\overline{\Gamma}}^{s,\top} \right)^{-1}\tilde{b}_s. \label{eq:invert-3} 
				\end{align}
				\endgroup
				Denote $M_z := \left( \bm{\overline{\Gamma}}^{z} \bm{\overline{\Gamma}}^{z,\top} \right)^{-1} \bm{\overline{\tilde{\Gamma}}}^{z,\top}$, $M_s := \left( \bm{\overline{\Gamma}}^{s} \bm{\overline{\Gamma}}^{s,\top} \right)^{-1} \bm{\overline{\tilde{\Gamma}}}^{s,\top}$ and $a_s = \left( \bm{\overline{\Gamma}}^{s} \bm{\overline{\Gamma}}^{s,\top} \right)^{-1}\tilde{b}_s$, $a_z = \left( \bm{\overline{\Gamma}}^{z} \bm{\overline{\Gamma}}^{z,\top} \right)^{-1}\tilde{b}_z$. The left is to prove that $M_z$ and $M_s$ are invertible matrices. Denote $\bar{x} = f^{-1}(x)$. Applying the \citep[Lemma 3]{khemakhem2020variational} we have that there exists $k_s$ points $\bar{x}^1,...,\bar{x}^{k_s}$, $\tilde{\bar{x}}^1,...,\tilde{\bar{x}}^{k_z}$ such that 
				$\left( (\mathbf{T}^{s})'_i(\bar{x}^1),...,(\mathbf{T}^{s})'_i(\bar{x}^{k_s}) \right)$ for each $i \in [q_s]$ and $\left( (\mathbf{T}^{z})'_i(\tilde{\bar{x}}^1,...,(\mathbf{T}^{z})'_i(\tilde{\bar{x}}^{k_z}) \right)$ for each $i \in [q_t]$ are linearly independent. By differentiating \eqref{eq:invert-1} and \eqref{eq:invert-2} for each $\bar{x}^i$ with $i \in [q_s]$ and $\tilde{\bar{x}}^i$ with $i \in [q_z]$ respectively, we have that
				\begin{align}
				\left( J_{\mathbf{T}^s}(\bar{x}^1),...,J_{\mathbf{T}^s}(\bar{x}^{k_s}) \right) = M_s \left( J_{\mathbf{T}^s \circ \tilde{f}_x^{-1} \circ f_x}(\bar{x}^1),...,J_{\mathbf{T}^s \circ \tilde{f}_x^{-1} \circ f}(\bar{x}^{k_s}) \right) \\
				\left( J_{\mathbf{T}^z}(\tilde{\bar{x}}^1),...,J_{\mathbf{T}^z}(\tilde{\bar{x}}^{k_z}) \right) = M_z \left( J_{\mathbf{T}^z \circ \tilde{f}_x^{-1} \circ f_x}(\tilde{\bar{x}}^1),...,J_{\mathbf{T}^z \circ \tilde{f}_x^{-1} \circ f_x}(\tilde{\bar{x}}^{k_z}) \right).
				\end{align}
				The linearly independence of $\left( (\mathbf{T}^{s})'_i(\bar{x}^1),...,(\mathbf{T}^{s})'_i(\bar{x}^{k_s}) \right), \left( (\mathbf{T}^{z})'_i(\tilde{\bar{x}}^1,...,(\mathbf{T}^{z})'_i(\tilde{\bar{x}}^{k_z}) \right)$ imply that the $\left( J_{\mathbf{T}^s}(\bar{x}^1),...,J_{\mathbf{T}^s}(\bar{x}^{k_s}) \right)$ and $\left( J_{\mathbf{T}^z}(\tilde{\bar{x}}^1),...,J_{\mathbf{T}^z}(\tilde{\bar{x}}^{k_z}) \right)$ are invertible, which implies the invertibility of matrix $M_s$ and $M_z$. The rest is to prove $p_{f_y}(y|[f_x]_{\mathcal{S}}^{-1}(x)) = p_{\tilde{f}_y}(y|[\tilde{f}_x]_{\mathcal{S}}^{-1}(x))$. This can be shown by applying \eqref{eq:iden-a-step1-xy-2} again. Specifically, according to \eqref{eq:iden-a-step1-xy-2}, we have that 
				\begin{align}
				& \int_{\mathcal{X}} p_{\varepsilon_x}(x-\bar{x})p(y|[f_x]_{\mathcal{S}}^{-1}(\bar{x}))
				p_{\mathbf{T}^s,\bm{\Gamma}^s,\mathbf{T}^z,\bm{\Gamma}^z}(f^{-1}(\bar{x})|d^e) \mathrm{vol}J_{f^{-1}}(\bar{x}) d\bar{x} \nonumber \\
				& \quad \quad \quad \quad \quad   = \int_{\mathcal{X}} p_{\varepsilon_x}(x-\bar{x})p(y|[\tilde{f}_x]_{\mathcal{S}}^{-1}(\bar{x}))
				p_{\mathbf{T}^s,\bm{\Gamma}^s,\mathbf{T}^z,\bm{\Gamma}^z}(\tilde{f}^{-1}(\bar{x})|d^e) \mathrm{vol}J_{\tilde{f}^{-1}}(\bar{x}) d\bar{x}.  \label{eq:iden-f-y-step1-xy-1}
				\end{align}
				Denote $l_{\mathbf{T}^s,\bm{\Gamma}^s,\mathbf{T}^z,\bm{\Gamma}^z,f_y,f_x,y}(x) := p_{f_y}(y|[f_x]_{\mathcal{S}}^{-1}(\bar{x}))
				p_{\mathbf{T}^s,\bm{\Gamma}^s,\mathbf{T}^z,\bm{\Gamma}^z}(f^{-1}(\bar{x})|d^e) \mathrm{vol}J_{f_x^{-1}}(\bar{x})$, we have 
				\begin{align}
				& \int_{\mathcal{X}} p_{\varepsilon_x}(x-\bar{x}) l_{\mathbf{T}^s,\bm{\Gamma}^s,\mathbf{T}^z,\bm{\Gamma}^z,f_y,f_x,y}(\bar{x}) d\bar{x}  = \int_{\mathcal{X}} p_{\varepsilon_x}(x-\bar{x})l_{\mathbf{\tilde{T}}^s,\bm{\tilde{\Gamma}}^s,\mathbf{\tilde{T}}^z,\bm{\tilde{\Gamma}}^z,\tilde{f}_y,\tilde{f}_x,y}(\bar{x}) d\bar{x} \label{eq:iden-y-step1} \\
				\Longrightarrow & (l_{\mathbf{T}^s,\bm{\Gamma}^s,\mathbf{T}^z,\bm{\Gamma}^z,f_y,f_x,y} * p_{\varepsilon_x})(x|d^e) = (l_{\mathbf{\tilde{T}}^s,\bm{\tilde{\Gamma}}^s,\mathbf{\tilde{T}}^z,\bm{\tilde{\Gamma}}^z,\tilde{f}_y,\tilde{f}_x,y} * p_{\varepsilon_x})(x|d^e) \label{eq:iden-y-step2} \\
				\Longrightarrow & F[l_{\mathbf{\tilde{T}}^s,\bm{\tilde{\Gamma}}^s,\mathbf{\tilde{T}}^z,\bm{\tilde{\Gamma}}^z,\tilde{f}_y,\tilde{f}_x,y}](\omega)\varphi_{\varepsilon_x}(\omega) = F[l_{\mathbf{T}^s,\bm{\Gamma}^s,\mathbf{T}^z,\bm{\Gamma}^z,f_y,f_x,y}](\omega)\varphi_{\varepsilon_x}(\omega)  \label{eq:iden-y-step3} \\
				\Longrightarrow & F[l_{\mathbf{T}^s,\bm{\Gamma}^s,\mathbf{T}^z,\bm{\Gamma}^z,f_y,f_x,y}](\omega) = F[l_{\mathbf{\tilde{T}}^s,\bm{\tilde{\Gamma}}^s,\mathbf{\tilde{T}}^z,\bm{\tilde{\Gamma}}^z,\tilde{f}_y,\tilde{f}_x,y}](\omega) \label{eq:iden-y-step4} \\
				\Longrightarrow & l_{\mathbf{T}^s,\bm{\Gamma}^s,\mathbf{T}^z,\bm{\Gamma}^z,f_y,f_x,y}(x) = l_{\mathbf{\tilde{T}}^s,\bm{\tilde{\Gamma}}^s,\mathbf{\tilde{T}}^z,\bm{\tilde{\Gamma}}^z,\tilde{f}_y,\tilde{f}_x,y}(x) \label{eq:iden-y-step5} \\
				\Longrightarrow & p_{f_y}(y|[f_x]_{\mathcal{S}}^{-1}(x))
				p_{\mathbf{T}^s,\bm{\Gamma}^s,\mathbf{T}^z,\bm{\Gamma}^z}(f^{-1}(x)|d^e) \mathrm{vol}J_{f_x^{-1}}(x) \nonumber \\
				& \quad \quad \quad \quad  = p_{\tilde{f}_y}(y|[\tilde{f}_x]_{\mathcal{S}}^{-1}(x)) p_{\mathbf{\tilde{T}}^s,\bm{\tilde{\Gamma}}^s,\mathbf{\tilde{T}}^z,\bm{\tilde{\Gamma}}^z}(\tilde{f}^{-1}(x)|d^e) \mathrm{vol}J_{\tilde{f}_x^{-1}}(x). \label{eq:iden-y-step6}
				\end{align}
				Taking the $\log$ transformation on both sides of \eqref{eq:iden-y-step6}, we have that 
				\begin{align}
				& \log{p_{f_y}(y|[f_x]_{\mathcal{S}}^{-1}(x))} + \log{p_{\mathbf{T}^s,\bm{\Gamma}^s,\mathbf{T}^z,\bm{\Gamma}^z}(f^{-1}(x)|d^e)} + \log{\mathrm{vol}J_{f_x^{-1}}(x)} \nonumber \\
				& \quad \quad \quad \quad = \log{p_{\tilde{f}_y}(y|[\tilde{f}_x]_{\mathcal{S}}^{-1}(x))} + \log{p_{\mathbf{\tilde{T}}^s,\bm{\tilde{\Gamma}}^s,\mathbf{\tilde{T}}^z,\bm{\tilde{\Gamma}}^z}(\tilde{f}^{-1}(x)|d^e)} + \log{\mathrm{vol}J_{\tilde{f}_x^{-1}}(x)}. \label{eq:log-y}
				\end{align}
				Subtracting \eqref{eq:log-y} with $y_2$ from \eqref{eq:log-y} with $y_1$, we have 
				\begin{align}
				& \frac{p_{f_y}(y_2|[f_x]_{\mathcal{S}}^{-1}(x))}{p_{f_y}(y_1|[f_x]_{\mathcal{S}}^{-1}(x))} = \frac{p_{\tilde{f}_y}(y_2|[\tilde{f}_x]_{\mathcal{S}}^{-1}(x))}{p_{\tilde{f}_y}(y_1|[\tilde{f}_x]_{\mathcal{S}}^{-1}(x))} \\
				\Longrightarrow & \int_{\mathcal{Y}} \frac{p_{f_y}(y_2|[f_x]_{\mathcal{S}}^{-1}(x))}{p_{f_y}(y_1|[f_x]_{\mathcal{S}}^{-1}(x))} dy_2 = \int_{\mathcal{Y}} \frac{p_{\tilde{f}_y}(y_2|[\tilde{f}_x]_{\mathcal{S}}^{-1}(x))}{p_{\tilde{f}_y}(y_1|[\tilde{f}_x]_{\mathcal{S}}^{-1}(x))} dy_2 \\
				\Longrightarrow & p_{f_y}(y_1|[f_x]_{\mathcal{S}}^{-1}(x)) = p_{\tilde{f}_y}(y_1|[\tilde{f}_x]_{\mathcal{S}}^{-1}(x)),
				\end{align}
				for any $y_1 \in \mathcal{Y}$. This completes the proof. 
			\end{proof}
			
			\paragraph{Understanding the assumption {\bf(4)} in Theorem~\ref{thm:iden-a} and~\ref{thm:iden-new}.} Recall that we assume the $D$ in LaCIM is the source variable for generating data in corresponding domain. Here we also use the $\mathcal{D}$ to denote the space of $D$, then we have the following theoretical conclusion that the as long as the image set of $\mathcal{D}$ is not included in any sets with Lebesgue measure 0, the assumption (4) holds. This conclusion means that the assumption {\bf(4)} holds generically. For more general conclusion of assumption (4) with $\left[[\bm{\Gamma}^{t}_{d^{e_2}} - \bm{\Gamma}^{t}_{d^{e_1}}]^\mathsf{T},...,[\bm{\Gamma}^{t}_{d^{e_m}} - \bm{\Gamma}^{t}_{d^{e_1}}]^\mathsf{T}\right]^\mathsf{T}$ replaced by $\left[[\bm{\Gamma}^{t}_{c_2,d^{e_1}} - \bm{\Gamma}^{t}_{c_1,d^{e_1}}]^\mathsf{T},...,[\bm{\Gamma}^{t}_{c_R, d^{e_m}} - \bm{\Gamma}^{t}_{c_1,d^{e_1}}]^\mathsf{T}\right]^\mathsf{T}$, we have the similar conclusion with the set $\mathcal{D}$ replaced by $\mathcal{C} \otimes \mathcal{D}$. 
			
			\begin{theorem}
				\label{thm:assump-4-understand}
				Denote $h^{t=s,z}(d) := \left( \Gamma^t_{1,1}(d)-\Gamma^t_{1,1}(d^{e_1}),...,\Gamma^t_{q_t,k_t}(d)-\Gamma^t_{1,1}(d^{e_1})\right)^{\top}$, $h(\mathcal{D}) :=  h^s(\mathcal{S}) \oplus h^z(\mathcal{Z}) \subset \mathbb{R}^{q_z*k_z} \oplus  \mathbb{R}^{q_s*k_s}$, then assumption (4) holds if $h(\mathcal{D})$ is not included in any zero-measure set of $\mathbb{R}^{q_z*k_z} \oplus \mathbb{R}^{q_s*k_s}$. Denote $r_s := q_s * k_s$ and $r_z := q_z * k_z$. 
			\end{theorem}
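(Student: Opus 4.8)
The plan is to recast assumption (4) as a full-rank condition, express its \emph{failure} as the vanishing of a nontrivial polynomial in the natural parameters, and then use the non-degeneracy hypothesis on $h(\mathcal{D})$ to push the parameters off the (Lebesgue-null) failure locus. Concretely, I would write $h^t(d) := \bm{\Gamma}^t_d - \bm{\Gamma}^t_{d^{e_1}} \in \mathbb{R}^{r_t}$ for $t=s,z$, and let $G^t$ denote the matrix of assumption (4), whose rows are $h^t(d^{e_2}),\dots,h^t(d^{e_m})$. Since $m-1 \geq \max(r_s,r_z)$, each $G^t$ has at least $r_t$ rows, and $G^t$ has full column rank iff these vectors span $\mathbb{R}^{r_t}$. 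Equivalently, letting $P_t$ be the sum of squares of the $r_t\times r_t$ minors of $G^t$, full rank of $G^t$ is the same as $P_t \neq 0$, where $P_t$ is a polynomial in the coordinates of the points $\bm{\Gamma}^t_{d^{e_k}}$.

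Next I would package the two rank conditions into the single polynomial $F := P_s\, P_z$, viewed as a function of the joint images $h(d^{e_k}) = (h^s(d^{e_k}), h^z(d^{e_k})) \in \mathbb{R}^{r_s+r_z}$ for $k \in [m]$. The key point is that $F$ is not the zero polynomial on the ambient space $(\mathbb{R}^{r_s+r_z})^m$: one can force $P_s \neq 0$ by sending $r_s$ of the $s$-blocks to linearly independent vectors, and independently force $P_z \neq 0$, so neither factor vanishes identically and their product does not either. Because the zero set of a nonzero polynomial on a Euclidean space has Lebesgue measure zero, the bad configurations $\{F = 0\}$ — exactly those where $G^s$ or $G^z$ loses rank — form a Lebesgue-null subset of $(\mathbb{R}^{r_s+r_z})^m$.

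Finally I would invoke the hypothesis that $h(\mathcal{D})$ lies in no measure-zero set, i.e.\ that it has positive outer measure. Then $h(\mathcal{D})^m$ has positive outer measure in $(\mathbb{R}^{r_s+r_z})^m$ (outer measure is super-multiplicative over products), so it cannot be contained in the null set $\{F=0\}$. Hence some tuple $d^{e_1},\dots,d^{e_m}\in\mathcal{D}$ yields $F\neq 0$, that is, both $G^s$ and $G^z$ have full column rank, so assumption (4) is satisfiable; moreover, since the bad set is null, assumption (4) holds for generic domain values and fails only on a measure-zero set, which is the asserted ``holds unless a zero-measure set'' conclusion. The passage between $\bm{\Gamma}^t_d$ and $h^t(d)$ is harmless, since they differ by the fixed vector $\bm{\Gamma}^t_{d^{e_1}}$ and Lebesgue measure is translation invariant.

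The step I expect to be the main obstacle is precisely the transition from ``measure-zero in the ambient Euclidean space'' to ``cannot contain the realizable configurations.'' The realizable parameter tuples are constrained to the image $h(\mathcal{D})$, which could a priori be a thin or curved set, so one must argue carefully that a positive-outer-measure image genuinely escapes the polynomial null locus — this is where super-multiplicativity of outer measure for $h(\mathcal{D})^m$ is needed — while simultaneously controlling the $s$- and $z$-rank conditions with one shared collection of environments, which is exactly what motivates collapsing them into the single product polynomial $F$ rather than treating $s$ and $z$ in isolation.
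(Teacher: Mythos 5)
Your proof is correct, but it takes a genuinely different route from the paper's. The paper argues by contradiction through a greedy/maximality device: it lets $u$ be the largest integer for which some common choice of environments simultaneously achieves rank $\min(u,r_s)$ for the $s$-block and $\min(u,r_z)$ for the $z$-block, and shows that if $u$ falls short of $\max(r_s,r_z)$ then \emph{every} $d\in\mathcal{D}$ must have $h^s(d)$ or $h^z(d)$ trapped in a fixed proper subspace, so that $h(\mathcal{D})$ itself is contained in a finite union of sets of the form $(\text{proper subspace})\oplus\mathbb{R}^{r}$, which is Lebesgue-null --- directly contradicting the hypothesis. This stays entirely inside $\mathbb{R}^{r_s+r_z}$ and uses nothing beyond the nullity of proper subspaces. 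You instead encode full column rank of each $G^t$ as non-vanishing of the sum of squares of its maximal minors, observe that $F=P_sP_z$ is a nonzero polynomial on the $(m-1)$-fold product space (note the matrices have $m-1$ rows, indexed by $e_2,\dots,e_m$, so it is $h(\mathcal{D})^{m-1}$ rather than $h(\mathcal{D})^m$ that matters), and conclude via nullity of polynomial zero sets. Your approach buys the stronger ``generic'' conclusion --- almost every admissible tuple of environments works, not merely that one exists --- and handles the joint $s$/$z$ constraint cleanly through the single polynomial $F$. The price is the one non-elementary ingredient you correctly flag: transferring positive outer measure from $h(\mathcal{D})$ to $h(\mathcal{D})^{m-1}$. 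Super-multiplicativity of Lebesgue outer measure over Cartesian products, $\lambda^*(A\times B)\ge\lambda^*(A)\lambda^*(B)$, is true (prove it via measurable hulls and Fubini) but should be justified or cited rather than asserted; alternatively you can avoid it entirely by a Fubini induction on slices of the closed null set $\{F=0\}$: if $h(\mathcal{D})^{k}\subseteq\{F=0\}$ and $h(\mathcal{D})^{k-1}$ escapes every null set, pick a $(k-1)$-tuple in $h(\mathcal{D})^{k-1}$ whose slice of $\{F=0\}$ is null and conclude $h(\mathcal{D})$ lies in that null slice, a contradiction. With that step made precise, your argument is a complete and valid alternative proof.
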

			
			\begin{proof}
				With loss of generality, we assume that $r_s \leq r_z$. Denote $Q$ as the set of integers $q$ such that there exists $d^{e_2},...,d^{q+1}$ that the $\mathrm{rank}([h^z(d^{e_2}),...,h^z(d^{e_{q+1}})]) = \min(q,r_z)$ and $\mathrm{rank}([h^s(d^{e_2}),...,h^s(d^{e_{q+1}})]) = \min(q,r_s)$. Denote $u:= \max(Q)$. We discuss two possible cases for $u$, respectively:
				\begin{itemize}
					\item Case 1. $u < r_s \leq r_z$. Then there exists $d^{e_2},...,d^{e_{u+1}}$ s.t. $h^z(d^{e_2}),...,h^z(d^{e_{u+1}})$ and $h^s(d^{e_2}),...,h^s(d^{e_{u+1}})$ are linearly independent. Then $\forall c$, we have $h^z(d) \in L(h^z(d^{e_2}),...,h^z(d^{e_{u+1}}))$ or $h^s(d) \in L(h^s(d^{e_2}),...,h^s(d^{e_{u+1}}))$. Therefore, so we have
					$h^z(d) \oplus h^s(d) \in \left[ L(h^z(d^{e_2}),...,h^z(d^{e_{u+1}})) \oplus \mathbb{R}^{r_s} \right] \cup  \left[\mathbb{R}^{r_z} \oplus  L(h^s(d^{e_2}),...,h^s(d^{e_{u+1}}))  \right] $, which has measure 0 in $\mathbb{R}^{r_z} \oplus   \mathbb{R}^{r_s}$. 
					\item Case 2. $r_s \leq u < r_z$. Then there exists $d^{e_2},...,d^{e_{u+1}}$ s.t. $h^z(d^{e_2}),...,h^z(d^{e_{u+1}})$ are linearly independent and $rank([h^s(d^{e_{1}}),...,h^s(d^{e_{u}})]) = r_s$. Then $\forall c$, we have $h^z(d) \in L(h^z(d^{e_{1}}),...,h^z(d^{e_{u+1}}))$, which means that $h^z(d) \oplus h^s(d) \in  L(h^z(d^{e_{1}}),...,h^z(d^{e_{u+1}})) \oplus \mathbb{R}^{r_s}$, which has measure 0 in $\mathbb{R}^{r_z} \oplus \mathbb{R}^{r_s}$. 
				\end{itemize}
				The above two cases are contradict to the assumption that $h(\mathcal{D})$ is not included in any zero-measure set of $\mathbb{R}^{r_z} \oplus \mathbb{R}^{r_s}$. 
			\end{proof}

			\begin{lemma}
				\label{lemma:k2}
				Consider the cases when $k_s \geq 2$. Then suppose the assumptions in theorem~\ref{thm:iden-a} are satisfied. Further assumed that 
				\begin{itemize}
					\item The sufficient statistics $\mathbf{T}^s_{i,j}$ are twice differentiable for each $i \in [q_s]$ and $j \in [k_s]$.
					\item $f_y$ is twice differentiable. 
				\end{itemize}
				Then we have $M_s$ in theorem~\ref{thm:iden-a} is block permutation matrix.
			\end{lemma}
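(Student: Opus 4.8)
The plan is to start from the $\sim_A$-identifiability already established in Theorem~\ref{thm:iden-a} and promote the invertible matrix $M_s$ of \eqref{eq:invert-2} to a block permutation matrix, using the two new regularity hypotheses ($T^s_{i,j}$ and $f_y$ twice differentiable) together with $k_s\geq 2$. Recall \eqref{eq:invert-2} reads, for all $x$,
\begin{equation*}
\mathbf{T}^s([f_x^{-1}]_{\mathcal S}(x)) = M_s\,\tilde{\mathbf{T}}^s([\tilde f_x^{-1}]_{\mathcal S}(x)) + a_s .
\end{equation*}
The first step is to reduce this to an identity between functions of the latent $s$ alone. Writing $s:=[f_x^{-1}]_{\mathcal S}(x)$ and $\tilde s:=[\tilde f_x^{-1}]_{\mathcal S}(x)$, the left-hand side depends only on $s$, while $x$ (hence $\tilde s$) ranges over all of $f_x(\mathcal S\times\mathcal Z)$ as $z$ varies. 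Since $M_s$ is invertible and $\tilde{\mathbf T}^s$ is injective (its components are linearly independent by assumption (2) of Theorem~\ref{thm:iden-a}), the right-hand side being independent of $z$ forces $\tilde s$ to be a function of $s$ only; I would write $\tilde s=\phi(s)$ with $\phi$ a diffeomorphism of $\mathcal S$ (smoothness inherited from the bijectivity and differentiability of $f_x,\tilde f_x$). The identity becomes $\mathbf T^s(s)=M_s\,\tilde{\mathbf T}^s(\phi(s))+a_s$.

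Next I would exploit the block structure of the sufficient statistics: block $i$ of $\mathbf T^s$ is $(T^s_{i,1}(s_i),\dots,T^s_{i,k_s}(s_i))$, depending on the single coordinate $s_i$. Taking the second cross-partial $\partial^2_{s_a s_b}$ with $a\neq b$ annihilates every entry of the left-hand side, and invertibility of $M_s$ then yields, for each component $i'$ and each statistic index $l'$,
\begin{equation*}
(\tilde T^s_{i',l'})''(\phi_{i'})\,\partial_{s_a}\phi_{i'}\,\partial_{s_b}\phi_{i'} + (\tilde T^s_{i',l'})'(\phi_{i'})\,\partial^2_{s_a s_b}\phi_{i'} = 0 .
\end{equation*}
Fixing $i'$, this is a linear system in the two-vector $(\partial_{s_a}\phi_{i'}\,\partial_{s_b}\phi_{i'},\ \partial^2_{s_a s_b}\phi_{i'})$ whose coefficient matrix stacks the rows $\big((\tilde T^s_{i',l'})'',(\tilde T^s_{i',l'})'\big)$ over $l'\in[k_s]$. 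This is exactly where $k_s\geq 2$ and twice differentiability are needed: with at least two rows, the system has only the trivial solution provided the coefficient matrix has rank $2$, i.e. provided some pair $l_1,l_2$ has nonvanishing Wronskian $(\tilde T^s_{i',l_1})'(\tilde T^s_{i',l_2})''-(\tilde T^s_{i',l_1})''(\tilde T^s_{i',l_2})'$. Since $\tilde T^s_{i',l_1},\tilde T^s_{i',l_2}$ are linearly independent twice-differentiable functions, their Wronskian is not identically zero, so it is nonzero on a dense set; combined with an argument in the spirit of \citep[Lemma~3]{khemakhem2020variational} (as already invoked in the proof of Theorem~\ref{thm:iden-a}) this secures the rank-$2$ condition at the relevant arguments. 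I therefore conclude $\partial_{s_a}\phi_{i'}\,\partial_{s_b}\phi_{i'}=0$ for all $a\neq b$.

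Finally I would convert this into the permutation structure. The vanishing of all products $\partial_{s_a}\phi_{i'}\partial_{s_b}\phi_{i'}$ ($a\neq b$) means each row of the Jacobian $J_\phi$ has at most one nonzero entry; since $\phi$ is a diffeomorphism its Jacobian is everywhere invertible, so each row has exactly one nonzero entry and the nonzero pattern is a permutation $\sigma$, giving $\phi_{i'}(s)=\bar\phi_{i'}(s_{\sigma(i')})$. Hence block $i'$ of $\tilde{\mathbf T}^s(\phi(s))$ depends only on $s_{\sigma(i')}$. Writing $M_s$ in $k_s\times k_s$ blocks $M_s^{(i,i')}$ and matching coordinate dependence in $\mathbf T^s(s)=M_s\tilde{\mathbf T}^s(\phi(s))+a_s$—the $i$-th block row of the left-hand side depends only on $s_i$—forces every nonzero block $M_s^{(i,i')}$ to satisfy $\sigma(i')=i$, i.e. $i'=\sigma^{-1}(i)$; thus each block row of $M_s$ has a single nonzero block, so $M_s$ is a block permutation matrix, establishing Lemma~\ref{lemma:k2}. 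I expect the main obstacle to be the non-degeneracy step: rigorously guaranteeing the rank-$2$ (Wronskian) condition at the points $\phi_{i'}(s)$ for enough $s$, and propagating the pointwise conclusion $\partial_{s_a}\phi_{i'}\partial_{s_b}\phi_{i'}=0$ to an everywhere statement by continuity, so that the permutation structure of $J_\phi$ is global rather than merely generic.
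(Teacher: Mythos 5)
Your argument is, in substance, a reconstruction of the proof of \citep[Theorem 2]{khemakhem2020variational}; the paper's own proof of this lemma is a one-line invocation of that theorem, but applied to the $Y$-equation \eqref{eq:invert-3}, $\mathbf{T}^s(f_y^{-1}(y)) = M_s \tilde{\mathbf{T}}^s(\tilde{f}_y^{-1}(y)) + a_s$, with $f_x,A,b,\mathbf{T},x$ replaced by $f_y,M_s,a_s,\mathbf{T}^s,y$. You instead run the argument on the $X$-equation \eqref{eq:invert-2}. The cross-derivative/Wronskian machinery you deploy is exactly the right one, and the non-degeneracy worry you flag at the end is handled in the cited reference (their lemma shows that linear independence plus twice differentiability forces the $k_s\times 2$ matrix stacking $\big((\tilde T^s_{i',l})',(\tilde T^s_{i',l})''\big)$ to have rank $2$ on a suitable set); note that ``linearly independent $\Rightarrow$ Wronskian not identically zero'' is false in general, so you do need that lemma rather than the Wronskian heuristic.

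The choice of equation is not cosmetic, and it is where your proof has genuine gaps relative to the lemma as stated. First, differentiating $\tilde{\mathbf{T}}^s\circ\phi$ twice requires $\phi=[\tilde f_x^{-1}]_{\mathcal{S}}\circ f_x$ to be $C^2$, i.e.\ twice differentiability of $f_x$ and $\tilde f_x$; but theorem~\ref{thm:iden-a} only assumes $f_x$ bijective, and the lemma's extra hypothesis is that $f_y$ (not $f_x$) is twice differentiable. That hypothesis is there precisely because the intended route is the $Y$-equation, where $\tilde f_y^{-1}\circ f_y$ is directly a $C^2$ bijection of $\mathcal{S}$ and no marginalization over $z$ is needed. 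Your proof therefore uses smoothness that the lemma does not grant (it is granted later in theorem~\ref{thm:iden-new}, so the conclusion survives in context, but the proof does not establish the lemma under its stated assumptions). Second, your reduction ``the right-hand side is independent of $z$, hence $\tilde s=\phi(s)$'' is justified by injectivity of $\tilde{\mathbf{T}}^s$, which does not follow from linear independence of its components (e.g.\ $t\mapsto(t^2,t^4)$ has linearly independent, non-injective coordinates); one must instead argue via $J_{\tilde{\mathbf{T}}^s}(\tilde s)\,\partial\tilde s/\partial z=0$ and the fact that the derivative vectors $\big((\tilde T^s_{i,1})',\dots,(\tilde T^s_{i,k_s})'\big)$ cannot vanish on an open set, then extend by continuity. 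Both issues disappear if you simply apply your argument to \eqref{eq:invert-3} as the paper does.
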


			\begin{proof}
				Directly applying \citep[Theorem 2]{khemakhem2020variational} with $f_x,A,b,\mathbf{T},x$ replaced by $f_y,M_s,a_s,\mathbf{T}^s,y$.
			\end{proof}

			\begin{lemma}
				\label{lemma:k1}
				Consider the cases when $k_s =1$. Then suppose the assumptions in theorem~\ref{thm:iden-a} are satisfied. Further assumed that 
				\begin{itemize}
					\item The sufficient statistics $\mathbf{T}^s_{i}$ are not monotonic for $i \in [q_s]$.
					\item $g$ is smooth.
				\end{itemize}
				Then we have $M_s$ in theorem~\ref{thm:iden-a} is block permutation matrix.
			\end{lemma}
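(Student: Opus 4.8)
The plan is to mirror the route of the companion Lemma~\ref{lemma:k2}, replacing the $k\geq 2$ result of \citet{khemakhem2020variational} by its $k=1$ counterpart. I inherit from the proof of Theorem~\ref{thm:iden-a} the relation \eqref{eq:invert-3}, i.e.\ $\mathbf{T}^s(f_y^{-1}(y)) = M_s\,\tilde{\mathbf{T}}^s(\tilde f_y^{-1}(y)) + a_s$, together with the fact established there that $M_s$ is invertible. Since $k_s=1$, each block $\mathbf{T}^s_i$ collapses to a single scalar statistic $T^s_i$, so $\mathbf{T}^s$ has a diagonal Jacobian, $M_s$ is an ordinary invertible $q_s\times q_s$ matrix, and ``block permutation'' reduces to ``generalized permutation'' (a permutation composed with a nonzero diagonal scaling). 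The entire task is therefore to promote invertibility of $M_s$ to this one-nonzero-per-row-and-column structure.

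First I would put $s=f_y^{-1}(y)$ and $\tilde s=\tilde f_y^{-1}(y)=h(s)$, where $h:=\tilde f_y^{-1}\circ f_y$ is a diffeomorphism by bijectivity of $f_y,\tilde f_y$ (Theorem~\ref{thm:iden-a}) and by the smoothness hypothesis (the assumption that $g:=f_y$ is smooth). Setting $B:=M_s^{-1}$ and absorbing $a_s$ into a constant, \eqref{eq:invert-3} becomes the separability identity $\tilde T^s_j(h_j(s))=\sum_i B_{ji}T^s_i(s_i)+\mathrm{const}$, and differentiating in $s_a$ yields $(\tilde T^s_j)'(h_j(s))\,\partial_a h_j(s)=B_{ja}\,(T^s_a)'(s_a)$ for all $a,j$; in matrix form $\tilde D(s)\,J_h(s)=B\,D(s)$, where $D,\tilde D$ are the diagonal Jacobians of $\mathbf{T}^s,\tilde{\mathbf{T}}^s$ and $J_h$ is the everywhere-invertible Jacobian of $h$.

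The key step is to read the permutation structure off the zeros of the diagonal factors. Taking determinants gives $\det J_h(s)=\det(B)\,\prod_i (T^s_i)'(s_i)\big/\prod_j (\tilde T^s_j)'(h_j(s))$, which must stay finite and nonzero for all $s$. Non-monotonicity supplies, for each $i$, a point $s_i^{\ast}$ with $(T^s_i)'(s_i^{\ast})=0$ and a sign change there, so the numerator vanishes on the union of axis-aligned hyperplanes $\bigcup_i\{s_i=s_i^{\ast}\}$; finiteness and non-vanishing of $\det J_h$ then force the denominator to vanish on exactly the same set, i.e.\ each hyperplane $\{s_i=s_i^{\ast}\}$ must coincide with a single sheet $h_j^{-1}(\text{crit})$, which makes each $h_j$ depend on only one coordinate $s_{\sigma(j)}$ and hence makes $B$ (thus $M_s$) a generalized permutation matrix. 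I expect the genuine obstacle to be making this ``coincidence of sheets'' step rigorous: the tempting symmetric-mixed-partial computation on $h$ is automatically satisfied and yields nothing, so non-monotonicity is truly indispensable here and the sign change, not merely the vanishing, of $(T^s_i)'$ is what rules out mixing. Rather than redo this delicate real-analytic argument, I would invoke the $k=1$ identifiability theorem of \citet{khemakhem2020variational} directly, under the substitution $(f_x,A,b,\mathbf{T},x)\mapsto(f_y,M_s,a_s,\mathbf{T}^s,y)$ exactly as in Lemma~\ref{lemma:k2}, whose hypotheses are met because $f_y$ is bijective and smooth and the $T^s_i$ are non-monotonic.
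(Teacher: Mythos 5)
Your proposal ends by invoking the $k=1$ identifiability result of \citet{khemakhem2020variational} (their Theorem~3) under the substitution $(f_x,A,b,\mathbf{T},x)\mapsto(f_y,M_s,a_s,\mathbf{T}^s,y)$, which is exactly the paper's entire proof of this lemma. The preliminary sketch of the underlying zero-set/non-monotonicity argument is a reasonable gloss but is not load-bearing, so the proposal is correct and takes essentially the same route as the paper.
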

			
			\begin{proof}
				Directly applying \citep[Theorem 3]{khemakhem2020variational} with $f_x,A,b,\mathbf{T},x$ replaced by $f_y,M_s,a_s,\mathbf{T}^s,y$.
			\end{proof}

			\begin{proof}[Proof of Theorem~\ref{thm:iden-new}]
				According to theorem~\ref{thm:iden-a}, there exist invertible matrices $M_s$ and $M_z$ such that 
				\begin{align}
				& \mathbf{T}(f_x^{-1}(x)) = A \tilde{\mathbf{T}}(\tilde{f}_x^{-1}(x)) + b \nonumber \\
				& \mathbf{T}^s([f_{x}^{-1}]_{\mathcal{S}}(x)) = M_s \tilde{\mathbf{T}}^s([\tilde{f}_{x}^{-1}]_{\mathcal{S}}(x)) + a_s. \nonumber \\
				& \mathbf{T}^s(f_y^{-1}(y)) = M_s \tilde{\mathbf{T}}^s(\tilde{f}_y^{-1}(y)) + a_s, \nonumber
				\end{align}
				where $\mathbf{T} = [\mathbf{T}^{s,\top}, \mathbf{T}^{z,\top}]^{\top}$, and 
				\begin{align}
				A =  \left( \begin{array}{cc} M_s & 0 \\ 0 & M_z \end{array} \right).
				\end{align}
				By further assuming that the sufficient statistics $\mathbf{T}^s_{i,j}$ are twice differentiable for each $i \in [q_s]$ and $j \in [k_s]$ for $k_s \geq 2$ and not monotonic for $k_s = 1$. Then we have that $M_s$ is block permutation matrix. By further assuming that $\mathbf{T}^z_{i,j}$ are twice differentiable for each $i \in [n_z]$ and $j \in [k_z]$ for $k_z \geq 2$ and not monotonic for $k_z = 1$ and applying the lemma~\ref{lemma:k2} and~\ref{lemma:k1} respectively, we have that $A$ is block permutation matrix. Therefore, $M_z$ is also a block permutation matrix. 
			\end{proof}
			
			\begin{proof}[Proof of Theorem~\ref{thm:iden}]
				We consider the general case when $\mathcal{C} := \cup_{r=1}^R \{c_r\}_{r=[R]}$. We have that 
				\begin{equation}
				\sum_{r=1}^{R}p_{\theta}(x,y|c_r)\operatorname{P}(C\!=\!c_r|d^e) = \sum_{r=1}^{R}p_{\tilde{\theta}}(x,y|c_r)\operatorname{P}(C\!=\!c_R|d^e).
				\label{eq.sum}
				\end{equation}
				The \eqref{eq:iden-a-step1-x-9} for each $e$ here can be replaced by 
				\begin{align}
				\label{eq:sum}
				\sum_{r=1}^R p(c_r|d^e)\tilde{p}_{\mathbf{T}^s,\bm{\Gamma}^s,\mathbf{T}^z,\bm{\Gamma}^z,f_x}(x|c_r) = \sum_{r=1}^R \tilde{p}(c_r|d^e)\tilde{p}_{\tilde{\mathbf{T}}^s,\tilde{\bm{\Gamma}}^s,\tilde{\mathbf{T}}^z,\tilde{\bm{\Gamma}}^z,\tilde{f}_x}(x|d^e).
				\end{align}
				According to \citet[Corollary 3]{barndorff1965identifiability}, if we additionally assume that 
				\begin{align*}
				\{ \left( \mathbf{T}^s([f^{-1}]_{\mathcal{S}}(x)), \mathbf{T}^z([f^{-1}]_{\mathcal{Z}}(x)) \right); \mathcal{B}(x) > 0\} \text{ contains a non-empty set}, 
				\end{align*}
				then we have that the $p(c_r|d^e) = \tilde{p}(c_r|d^e)$ for each $r \in [R], e$. In other words, the $L:= [P(C|d^{e_1})^{\mathsf{T}},...,P(C|d^{e_m})^{\mathsf{T}}]$ can be identified. 
				Let $\Delta = [\tilde{p}_{\mathbf{T}^s,\bm{\Gamma}^s,\mathbf{T}^z,\bm{\Gamma}^z,f_x}(x|c_1)-\tilde{p}_{\tilde{\mathbf{T}}^s,\tilde{\bm{\Gamma}}^s,\tilde{\mathbf{T}}^z,\tilde{\bm{\Gamma}}^z,\tilde{f}_x}(x|c_1), \cdots, \tilde{p}_{\mathbf{T}^s,\bm{\Gamma}^s,\mathbf{T}^z,\bm{\Gamma}^z,f_x}(x|c_m)-\tilde{p}_{\tilde{\mathbf{T}}^s,\tilde{\bm{\Gamma}}^s,\tilde{\mathbf{T}}^z,\tilde{\bm{\Gamma}}^z,\tilde{f}_x}(x|c_m)]^{\mathsf{T}}$, then the concantenation of \eqref{eq:sum} in a matrix form can be written as $L\Delta = 0$. Since we have assumed in assumption (5) theorem~\ref{thm:iden} that the $L$ has full column rank, therefore we have that $\Delta = 0$, \emph{i.e.} $\tilde{p}_{\mathbf{T}^s,\bm{\Gamma}^s,\mathbf{T}^z,\bm{\Gamma}^z,f_x}(x|c_r) = \tilde{p}_{\tilde{\mathbf{T}}^s,\tilde{\bm{\Gamma}}^s,\tilde{\mathbf{T}}^z,\tilde{\bm{\Gamma}}^z,\tilde{f}_x}(x|c_r)$ for each $r \in [R]$. The left proof is the same with the one in theorem~\ref{thm:iden-new}. 
			\end{proof}

			\subsection{Proof of Theorem~\ref{thm:p-iden}}
			\label{appx:proofs-p-iden}

			\begin{proof}[Proof of Theorem~\ref{thm:p-iden}] Due to ~\eqref{eq.sum}, it is suffices to prove the conclusion for every $c_r \in \{c_r\}_{r \in [R]}$. Motivated by \citet[Theorem 2]{barron1991approximation} that the distribution $p^e(s,z)$ defined on bounded set can be approximated by a sequence of exponential family with sufficient statistics denoted as polynomial terms, therefore the $\mathbf{T}^{t=s,z}$ are twice differentiable hence satisfies the assumption (2) in theorem~\ref{thm:iden} and assumption (1) in lemma~\ref{lemma:k2}. Besides, the lemma 4 in \cite{barron1991approximation} informs us that the KL divergence between $p_{\theta_0}(s,z|c_r)$ ($\theta_0 := (f_x,f_y,\bm{T}^z,\bm{T}^s,\bm{\Gamma}_0^z,\bm{\Gamma}_0^s$) and $p_{\theta_1}(s,z|c_r)$ ($\theta_1 := (f_x,f_y,\bm{T}^z,\bm{T}^s,\bm{\Gamma}_1^z,\bm{\Gamma}_1^s$) (the $p_{\theta_0}(s,z|c_r), p_{\theta_1}(s,z|c_r)$ belong to exponential family with polynomial sufficient statistics terms) can be bounded by the $\ell_2$ norm of $[(\bm{\Gamma}^s(c_r) - \bm{\Gamma}^s_1(c_r))^{\top},(\bm{\Gamma}^z_0(c_r) - \bm{\Gamma}^z_1(c_r))^{\top}]^{\top}$. Therefore, $\forall \epsilon > 0$, there exists a open set of $\Gamma(c_r)$ such that the $D_{\mathrm{KL}}(p(s,z|c_r),p_{\theta}(s,z|c_r)) < \epsilon$. Such an open set is with non-zero Lebesgue measurement therefore can satisfy the assumption (4) in theorem~\ref{thm:iden}, according to result in theorem~\ref{thm:assump-4-understand}. The left is to prove that for any $p$ defined 
				by a LaCIM following Def.~\ref{def:lacim}, there is a sequence of $\{p_m\}_{n} \in \mathcal{P}_{\exp}$ such that the $d_{\mathrm{Pok}}(p,p_n) \to 0$ that is equivalent to $p_n \overset{d}{\to} p$. For any $A,B$, we consider to prove that 
				\begin{align}
				I_{n} \overset{\Delta}{=} \bigg| p(x\in A,y\in B|c_r) - p_n(x \in A, y_n \in B|c_r) \bigg| \to 0,
				\end{align}
				where $p_n(x \in A, y_n \in B|c_r) = \int_{\mathcal{S}} \int_{\mathcal{Z}} p(x \in A|s,z)p(y_n \in B|s)p_n(s,z|c_r) dsdz $ with 
				\begin{align}
				y_n(i) = \frac{\exp( (f_{y,i}(\bm{s}) + \varepsilon_{y,i})/T_n ) }{\sum_i \exp( (f_{y,i}(\bm{s}) + \varepsilon_{y,i})/T_n ) }, \ i = 1,...,k, 
				\end{align}
				for $y \in \mathbb{R}^k$ denoting the $k$-dimensional one-hot vector for categorical variable and $\varepsilon_{y,1,...,_k}$ are Gumbel i.i.d. According to \citep[Proposition 1]{maddison2016concrete} that the $y_n(i) \overset{d}{\to} y(i)$ with 
				\begin{align}
				\label{eq:y}
				p(y(i) = 1) = \frac{\exp( f_{y,i}(\bm{s}) ) }{\sum_i \exp( (f_{y,i}(\bm{s})  )}, \ as \ T_n \to 0.
				\end{align}
				As long as $f_{y}$ is smooth, we have that the $p(y_n|s)$ is continuous. We have that 
				\begingroup
				\allowdisplaybreaks
				\begin{align}
				I_{n} & = \Big| p(x\in A,y\in B|c_r) - \int_{\mathcal{S} \times \mathcal{Z}} p(x\in A|s,z)p(y_n \in B|s)p_n(s,z|c_r) dsdz \Big| \nonumber \\
				& \leq \Big| p(x\in A,y\in B|c_r) - p(x\in A,y_n \in B|c_r)  \Big| \nonumber \\
				& \quad \quad \quad \quad  + \Big| p(x\in A,y_n \in B|c_r) - \int_{\mathcal{S} \times \mathcal{Z}} p(x\in A|s,z)p(y_n \in B|s)p_n(s,z|c_r)dsdz  \Big| \nonumber \\
				& = \Big| \int_{\mathcal{S} \times \mathcal{Z}} p(x \in A|s,z)\left( p(y\in B|s) - p(y_n \in B|s) \right) p(s,z|c_r) dsdz   \Big| \nonumber \\
				& \quad \quad \quad \quad + \Big| \int_{\mathcal{S} \times \mathcal{Z}} p(x \in A|s,z)p(y_n \in B|s)\left( p(s,z|c_r) - p_n(s,z|c_r) \right)   \Big| \nonumber \\
				& \leq \underbrace{\Big|   \int_{M_s \times M_z}  p(x \in A|s,z)\left( p(y\in B|s) - p(y_n \in B|s) \right) p(s,z|c_r) dsdz     \Big|}_{I_{n,1}} \nonumber \\
				& + \underbrace{\Big|   \int_{(M_s \times M_z)^{c_r}}  p(x \in A|s,z)\left( p(y\in B|s) - p(y_n \in B|s) \right) p(s,z|c_r) dsdz     \Big|}_{I_{n,2}} \nonumber \\
				& + \underbrace{\Big| \int_{M_s \times M_z} p(x \in A|s,z)p(y_n \in B|s)\left( p(s,z|c_r) - p_n(s,z|c_r) \right)   \Big|}_{I_{n,3}} \nonumber \\ 
				& +  \underbrace{\Big| \int_{(M_s \times M_z)^{c_r}} p(x \in A|s,z)p(y_n \in B|s)\left( p(s,z|c_r) - p_n(s,z|c_r) \right)   \Big|}_{I_{n,4}}. 
				\end{align}
				\endgroup
				For $I_{n,1}$, if $y$ is itself additive model with $y = f_y(\bm{s}) + \varepsilon_y$, then we just set $y_n \overset{d}{=} y$, then we have that $I_{n,1} = 0$. Therefore, we only consider the case when $y$ denotes the categorical variable with softmax distribution, \textit{i.e.}, \eqref{eq:y}. $\forall c_r \in \mathcal{C}:=\{c_1,...,c_R\}$ and $\forall \epsilon > 0$, there exists $M^{c_r}_s$ and $M^{c_r}_z$ such that $p(s,z \in M^{c_r}_s \times M^{c_r}_z|c_r) \leq \epsilon$; Denote $M_s \overset{\Delta}{=} \cup_{k=1}^{m} M_s^{c_r}$ and $M_z \overset{\Delta}{=} \cup_{k=1}^{m} M_z^{c_r}$, we have that $p(s,z \in M_s \times M_z|c) \leq 2 \epsilon$ for all $c_r \in \mathcal{C}$. Since $\forall s_1 \in M_s$, $\exists N_{s_1}$ such that $\forall n \geq N_{s_1}$, we have that $\Big| p(y \in B|s_1) - p(y \in B|s_1)| \leq \epsilon $ from that $y_n \overset{d}{\to} y$. Besides, there exists open set $\mathcal{O}_{s_1}$ such that $\forall s \in \mathcal{O}_{s_1}$ and
				\begin{align}
				\Big| p(y \in B|s_1) - p(y \in B|s_1)| \leq \epsilon,  \ \Big| p(y_n \in B|s_1) - p(y_n \in B|s_1)| \leq \epsilon. \nonumber 
				\end{align}
				Again, according to Heine–Borel theorem, there exists finite $s$, namely $s_1,...,s_l$ such that $M_s \subset \cup_{i=1}^l \mathcal{O}(s_i)$. Then there exists $N \overset{\Delta}{=} \max\{N_{s_1},...,N_{s_l} \}$ such that $\forall n \geq N$, we have that 
				\begin{align}
				\big| p(y\in B|s) - p(y_n \in B|s) \big| \leq 3\epsilon, \ \forall s \in M_s.
				\end{align}
				Therefore, $I_{n,1} \leq \int_{M_s \times M_z} 3\epsilon p(x\in A|s,z)p(s,z|c) dsdz \leq 3\epsilon$. Hence, $I_{n,1} \to 0$ as $n \to \infty$. Besides, we have that $I_{n,2} \leq \int_{M_s \times M_z} 2\epsilon p(s,z|c_r) dsdz \leq 2\epsilon$. Therefore, we have that $\big| \int_{\mathcal{S} \times \mathcal{Z}} p(x \in A|s,z)\left( p(y\in B|s) - p(y_n \in B|s) \right) p(s,z|c_r) dsdz   \big| \to 0$ as $n \to \infty$. For $I_{n,3}$, we have that 
				\begin{align}
				I_{n,3} & = \bigg| \int_{M_s \times M_z} p(x\in A|s,z)p(y_n \in B|s)\mathbbm{1}(s,z\in  M_s \times M_z)\left( p(s,z|c_r) - p_n(s,z|c_r) \right)dsdz       \bigg| \nonumber \\
				& \leq \underbrace{\bigg| \int_{M_s \times M_z} p(x \in A|s,z)p(y_n \in B|s)p(s,z|c_r) \left( \frac{1}{p(s,z\in  M_s \times M_z|c_r)} - 1 \right) dsdz \bigg|}_{I_{n,3,1}} \nonumber \\
				& + \underbrace{\bigg| \int_{M_s \times M_z} p(x \in A|s,z)p(y_n \in B|s)p(s,z|c_r) \left( \frac{1}{p(s,z\in  M_s \times M_z|c_r)} - 1 \right) dsdz \bigg|}_{I_{n,3,2}}.  \nonumber \\ 
				\end{align}
				The $I_{n,3,1} \leq \frac{\epsilon}{1-\epsilon}$. Denote $\tilde{p}(s,z|c_r) := \frac{p(s,z|c_r)\mathbbm{1}(s,z\in  M_s \times M_z)}{p(s,z\in  M_s \times M_z|c_r)}$, according to \citep[Theorem 2]{barron1991approximation}, there exists a sequence of $p_n(s,z|c)$ defined on a compact support $M_s \times M_z$ such that $\forall c_r \in \mathcal{C}$, we have that 
				\begin{align}
				p_n(s,z|c_r) \overset{d}{\to} p(s,z|c_r). \nonumber 
				\end{align}
				Applying again the Heine–Borel theorem, we have that $\forall \epsilon$, $\exists N$ such that $\forall n \geq N$, we have 
				\begin{align}
				\Big| \tilde{p}(s,z|c_r) - p_n(s,z|c_r) \Big| \leq \epsilon, 
				\end{align}
				which implies that $I_{n,3,2} \to 0$ as $n \to \infty$ combining with the fact that $p(x,y|s,z)$ is continuous with respect to $s,z$. For $I_{n,4}$, we have that 
				\begin{align}
				I_{n,4} = \bigg| \int_{M_s \times M_z} p(x\in A|s,z)p(y_n \in B|s) p(s,z|c_r) \bigg| \leq  \bigg| \int_{M_s \times M_z} p(s,z|c_r) \bigg| \leq \epsilon,
				\end{align}
				where the first equality is from that the $p_n(s,z|c_r)$ is defined on $M_s \times M_z$. Then we have that 
				\begin{align}
				\bigg| \int_{\mathcal{S} \times \mathcal{Z}} p(x \in A|s,z)p(y_n \in B|s)\left( p(s,z|c_r) - p_n(s,z|c_r) \right)   \bigg| \to 0, \ as \ n \to \infty. 
				\end{align}
				The proof is completed. 
			\end{proof}

			\subsection{Reparameterization for LaCIM}
			\label{sec:repara}
			We provide an alternative training method to avoid parameterization of prior $p(s,z|I^e)$ to increase the diversity of generative models in different environments. Specifically, motivated by \cite{hyvarinen1999nonlinear} that any distribution can be transformed to isotropic Gaussian with the density denoted by $p_{\mathrm{Gau}}$, we have that for any $e \in \mathcal{E}_{\mathrm{train}}$, we have 
			\begin{align}
			p^e(x,y) & = \int_{\mathcal{S} \times \mathcal{Z}} p_{f_x}(x|s,z)p_{f_y}(y|s)p(s,z|I^e)dsdz \nonumber \\
			& = \int_{\mathcal{S} \times \mathcal{Z}} p(x|(\varphi^{e}_s)^{-1}(s'),(\varphi^{e}_z)^{-1}(z'))p(y|\varphi_s(s'))p(s',z') ds'dz', \nonumber 
			\end{align}
			with $s',z' := \varphi^e_s(s), \varphi^e_z(z) \sim \mathcal{N}(0,I)$. We can then rewrite ELBO for LaCIM for environment $e$ as:
			\begin{align}
			\label{eq:lacim-unpool}
			\mathcal{L}^e_{\theta,\psi,\varphi^e} &  = \mathbb{E}_{p^e(x,y)}\left[ -\log{q_{\psi}^e(y|x)} \right] \nonumber \\
			& \ \ \ \ \ \ \ \ \ \ + \mathbb{E}_{p^e(x,y)}\left[ - \mathbb{E}_{q_{\psi}^e(s,z|x)}\frac{q_{\psi}(y|(\varphi^{e}_s)^{-1}(s))}{q_{\psi}^e(y|x)}\log{ \frac{p_{\theta}( (\varphi^{e}_s)^{-1}(s),(\varphi^{e}_z)^{-1}(z))p(s,z)}{q_{\psi}^e(s,z|x)} }   \right],
			\end{align}
			where $p(s,z)$ denotes the density function of isotropic gaussian.


			\subsection{Identifiability} 
			\label{sec:related-iden}
			
			{Earlier works that identify the latent confounders rely on strong assumptions regarding the causal structure, such as the linear model from latent to observed variable or ICA in which the latent component are independent \citet{silva2006learning}, or noise-free model \citet{shimizu2009estimation,davies2004identifiability}. The \citet{hoyer2008nonlinear,janzing2012identifying} extend to the additive noise model (ANM) and other causal discovery assumptions. Although the \citet{lee2019leveraging} relaxed the constraints put on the causal structure, it required the latent noise is with small strength, which does not match with many realistic scenarios, such as the structural MRI of Alzheimer's Disease considered in our experiment. The works which also based on the independent component analysis (ICA), \emph{i.e.}, the latent variables are (conditionally) independent, include \citet{davies2004identifiability,eriksson2003identifiability}; recently, a series of works extend the above results to deep nonlinear ICA \citep{hyvarinen2016unsupervised,hyvarinen2019nonlinear,khemakhem2020variational,khemakhem2020ice,teshima2020few}. However, these works require that the value of confounder of these latent variables is fixed, which cannot explain the spurious correlation in a single dataset. In contrast, our result incorporate these scenarios by assuming that each sample has a specific value of the confounder. }

			\subsection{Comparison with existing works}
			\label{appx:related}

			\subsubsection{Comparisons with data augmentation \& architecture design}
			
			The goal of data augmentation \cite{shorten2019survey} is increase the variety of the data distribution, such as geometrical transformation \cite{kang2017patchshuffle, taylor2017improving}, flipping, style transfer \cite{gatys2015neural}, adversarial robustness \cite{madry2017towards}. On the other way round, an alternative kind of approaches is to integrate into the model corresponding modules that improve the robustness to some types of variations, such as \cite{worrall2017harmonic, marcos2016learning}. 
			
			However, these techniques can only make effect because they are included in the training data for neural network to memorize \cite{zhang2016understanding}; besides, the improvement is only limited to some specific types of variation considered. As analyzed in \cite{xie2020risk,krueger2020out}, the data augmentation trained with empirical risk minimization or robust optimization \cite{ben2009robust} such as adversarial training \cite{madry2017towards,sagawa2019distributionally} can only achieve robustness on interpolation (convex hull) rather than extrapolation of training environments.

			\subsubsection{Comparisons with existing works in domain adaptation}
			
			Apparently, the main difference lies in the problem setting that (i) the domain adaptation (DA) can access the input data of the target domain while ours cannot; and (ii) our methods need multiple training data while the DA only needs one source domain. For methodology, our LaCIM shares insights but different with DA. Specifically, both methods assume some types of invariance that relates the training domains to the target domain. For DA, one stream is to assume the same conditional distribution shared between the source and the target domain, such as covariate shift \cite{huang2007correcting, ben2007analysis, johansson2019support, sugiyama2008direct} in which $P(Y|X)$ are assumed to be the same across domains, concept shift \cite{zhang2013domain} in which the $P(X|Y)$ is assumed to be invariant. Such an invariance is related to representation, such as $\Phi(X)$ in \cite{zhao2019learning} and  $P(Y|\Phi(X))$ in \cite{pan2010domain,ganin2016domain,magliacane2018domain}.

			However, these assumptions are only distribution-level rather than the underlying causation which takes the data-generating process into account. Taking the image classification again as an example, our method first propose a causal graph in which the latent factors are introduced as the explanatory/causal factors of the observed variables. These are supported by the framework of generative model \cite{khemakhem2020variational,khemakhem2020ice,kingma2014auto,suter2019robustly} which has natural connection with the causal graph \cite{scholkopf2019causality} that the edge in the causal graph reflects both the causal effect and also the generating process. Until now, perhaps the most similar work to us are \cite{romeijn2018intervention} and \cite{teshima2020few} which also need multiple training domains and get access to a few samples in the target domain. Both work assumes the similar causal graph with us but unlike our LaCIM, they do not separate the latent factors which can not explain the spurious correlation learned by supervised learning \cite{ilse2020designing}. Besides, the multiple training datasets in  \cite{romeijn2018intervention} refer to intervened data which may hard to obtain in some applications. We have verified in our experiments that explicitly disentangle the latent variables into two parts can result in better OOD prediction power than mixing them together.

			\subsubsection{Comparisons with domain generalization}
			\label{sec:dg}
			For domain generalization (DG),  similar to the invariance assumption in DA, a series of work proposed to align the representation $\Phi(X)$ that assumed to be invariant across domains \cite{li2017learning, li2018domain, muandet2013domain}.  As discussed above, these methods lack the deep delving of the underlying causal structure and precludes the variations of unseen domains. 
			
			Recently, a series of works leverage causal invariance to enable OOD generalization on unseen domains, such as \cite{ilse2019diva} which learns the representation that is domain-invariant. Notably, the Invariant Causal Prediction \cite{peters2016causal} formulates the assumption in the definition of Structural Causal Model and assumes that $Y = X_{\mathcal{S}} \beta_{\mathcal{S}}^\star + \varepsilon_Y$ where $\varepsilon_Y$ satisfies Gaussian distribution and $\mathcal{S}$ denotes the subset of covariates of $X$. The \cite{rojas2018invariant,buhlmann2018invariance} relaxes such an assumption by assuming the invariance of $f_y$ and noise distribution $\varepsilon_y$ in $Y \gets f_y(X_{\mathcal{S}},\varepsilon_y)$ which induces $P(Y|X_{\mathcal{S}})$. The similar assumption is also adopted in \cite{kuang2018stable}. However, these works causally related the output to the observed input, which may not hold in many real applications in which the observed data is sensory-level, such as audio waves and pixels. It has been discussed in \citet{bengio2013representation, bengio2017consciousness} that the causal factors should be high-level abstractions/concepts. The \cite{heinze2017conditional} considers the style transfer setting in which each image is linear combination of shape-related variable and contextual-related variable, which respectively correspond to $S$ and $Z$ in our LaCIM in which the nonlinear mechanism (rather than linear combination in \cite{heinze2017conditional}) is allowed. Besides, during testing, our method can generalize to the OOD sample with intervention such as adversarial noise and contextual intervention. 
			
			Recently, the most notable work is Invariant Risk Minimization \cite{arjovsky2019invariant}, which will be discussed in detail in the subsequent section.

			\subsubsection{Comparisons with Invariant Risk Minimization \cite{arjovsky2019invariant} and references therein}
			
			The Invariant Risk Minimization (IRM) \cite{arjovsky2019invariant} assumes the existence of invariant representation $\Phi(X)$ that induces the optimal classifier for all domains, \textit{i.e.}, the $\mathbb{E}[Y|Pa(Y)]$ is domain-independent in the formulation of SCM. Similar to our LaCIM, the $Pa(Y)$ can refer to latent variables. Besides, to identify the invariance and the optimal classifier, the training environments also need to be diverse enough. As aforementioned, this assumption is almost necessary to differentiate the invariance mechanism from the variant ones. 
			
			The difference of our LaCIM with IRM lies in two aspects: the definition of $Y$ and the methodology. For the label, the IRM defines it as the one obtained after the image (\emph{e.g.}, one label the ``dog" based on the image he/she observes); while the label $Y$ is generated concurrently with $X$, that is, the $Y$ is dependent on the semantic features he/she observed. Consider the following scenario as an illustration: the photographer takes a image $X$ and record the label $Y$ at the same time. Besides, in terms of methodology, the theoretical claim of IRM only holds in linear case; in contrast, the CI $f_x, f_y$ are allowed to be nonlinear. 
			
			Some other works share the similar spirit with or based on IRM. The Risk-Extrapolation (REx) \cite{krueger2020out} proposed to enforce the similar behavior of $m$ classifiers with variance of which proposed as the regularization function. The work in \cite{xie2020risk} proposed a Quasi-distribution framework that can incorporate empirical risk minimization, robust optimization and REx. It can be concluded that the robust optimization only generalizes the convex hull of training environments (defined as interpolation) and the REx can generalize extrapolated combinations of training environments. This work lacks model of underlying causal structure, although it performs similarly to IRM experimentally. Besides, the \cite{teney2020unshuffling} proposed to unpool the training data into several domains with different environment and leverages \cite{arjovsky2019invariant} to learn invariant information for classifier. Recently, the \cite{bellot2020generalization} also assumes the invariance to be generating mechanisms and can generalize the capability of IRM when unobserved confounder exist. However, this work also lacks the analysis of identifiability result.

			\subsection{Implementation Details and More Results for Simulation}
			\label{appx-exp-sim}
			
			\textbf{Data Generation} We set $m=5$. We set $q_{d}=q_s=q_z=q_y=2$ and $q_x=4$. For each environment $e \in [m]$ with $m=5$, we generate 1000 samples $\mathcal{D}^e = \{x_i,y_i\} \overset{i.i.d}{\sim} \int p_{f_x}(x|s,z)p_{f_y}(y|s)p(s,z|c)p(c|d^e)dsdzdc$. The $d^e = \left(\mathcal{N}(0,I_{q_{d} \times q_{d}}) + 5 *e\right)*2$; the $c|d^e \sim \mathcal{N}(d^e,I)$; the $s,z|c \sim  \mathcal{N}\left(\mu_{\theta^{\star}_{s,z}}(s,z|c),\sigma^2_{\theta^{\star}_{s,z}}(s,z|c) \right)$ with $\mu_{\theta^{\star}_{s,z}} = A^\mu_{s,z} * c$ and $\log{\sigma_{\theta^{\star}_{s,z}}} = A^\sigma_{s,z} * c$ ($A^\mu_{s,z}$, $A^\sigma_{s,z}$ are random matrices); the $x|s,z \sim \mathcal{N}\left(\mu_{\theta^{\star}_{x}}(x|s,z),\sigma^2_{\theta^{\star}_{x}}(x|s,z) \right)$ with $\mu_{\theta^{\star}_{s,z}} = h(A^{\mu,3}_{x} * h(A^{\mu,2}_{x} * h(A^{\mu,2}_{x}*[s^\top,z^\top]^\top])))$ and $\log{\sigma_{\theta^{\star}_{s,z}}} = h(A^{\sigma,3}_{x} * h(A^{\sigma,2}_{x} * h(A^{\sigma,2}_{x}*[s^\top,z^\top]^\top])))$ ($h$ is LeakyReLU activation function with slope $=0.5$ and $A^{\mu,i=1,2,3}_{x}$,$A^{\sigma,i=1,2,3}_{x}$ are random matrices); the $y|s$ is similarly to $x|s,z$ with $A^{\mu,i=1,2,3}_{x}$,$A^{\sigma,i=1,2,3}_{x}$ respectively replaced by  $A^{\mu,i=1,2,3}_{y}$,$A^{\sigma,i=1,2,3}_{y}$.

			\textbf{Implementation Details} We parameterize $p_{\theta}(s,z|I^e)$, $q_{\psi}(s,z|x,y,I^e)$, $p_{\theta}(x|s,z)$ and $p_{\theta}(y|s)$ as 3-layer MLP with the LeakyReLU activation function. The Adam with learning rate $5\times 10^{-4}$ is implemented for optimization. We set the batch size as 512 and run for 2,000 iterations in each trial. 
			
			\textbf{Visualization. } As shown from the visualization of $S$ is shown in Fig.~\ref{fig:sim-visualize}, our LaCIM can identify the causal factor $S$. 
			\begin{figure}[ht]
				\vspace{-0.2cm}
				\centering
				\begin{tabular}{ccccccc}
					\hspace{-0.15in} \includegraphics[width=0.25\textwidth]{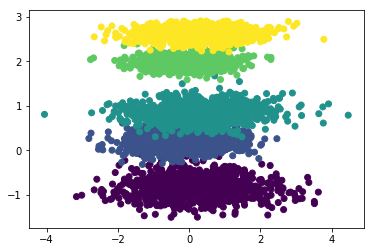} &  	
					\hspace{-0.15in} \includegraphics[width=0.25\textwidth]{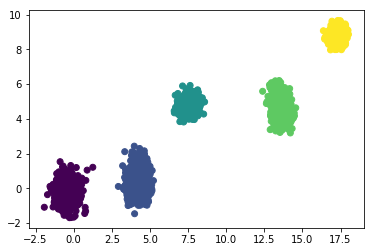} & 	\hspace{-0.15in} \includegraphics[width=0.25\textwidth]{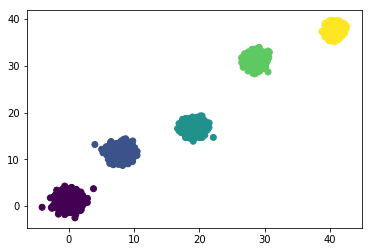} \\
					(a)  pool-LaCIM & (b) LaCIM & (c) $p_{\theta^\star}(s|D)$ 
				\end{tabular}
				\label{fig:sim-visualize}
				\caption{Estimated posterior by (a) pool-LaCIM; (b) LaCIM and (c) the ground-truth. As shown, the LaCIM can identify the $S$ (up to permutation and point-wise transformation), which validates the \eqref{eq:z} in theorem~\ref{thm:iden}.}
			\end{figure}


			\subsection{Implementation Details for Optimization over $S,Z$} 
			\label{appx:optimize-sz}
			Recall that we first optimize $s^*,z^*$ according to 
			\begin{equation*}
			s^*,z^* = \arg\max_{s,z} \log p_{\theta}(x|s,z). 
			\end{equation*}
			We first sample some initial points from each posterior distribution $q^e_{\psi}(s|x)$ and then optimize for 50 iterations. We using Adam as optimizer, with learning rate as 0.002 and weight decay 0.0002. The Fig.~\ref{fig:optimize-zs} shows the optimization effect of one run in CMNIST. As shown, the test accuracy keeps growing as iterates. For time saving, we chose to optimize for 50 iterations. 
			
			\begin{figure}[ht]
				\centering
				\includegraphics[width=0.3\textwidth]{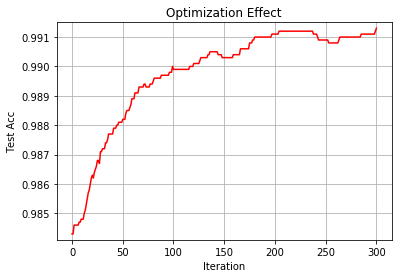}
				\label{fig:optimize-zs}
				\caption{The optimization effect in CMNIST, starting from the point with initial sampling from inference model $q$ of each branch. As shown, the test accuracy increases as iterates. }
			\end{figure}

			\subsection{Implementations For Baseline}
			
			The networks of CE $X \to Y$ contains two parts: (i) feature extractor, followed by (ii) classifier. The network structure of the feature extractor and classifier for CE $X \to Y$ is the same with that of our encoder and our $p_\theta(y|s)$. We adopt the same structure for IRM as CE $X \to Y$. DANN adopts the same structure of CE $X \to Y$ and a additional domain classifier which is the same as that of $p_{\theta}(y|s)$. sVAE adopt the same structure as LaCIM-$d$ with the exception that the $p_\theta(y|s)$ is replaced by $p_\theta(y|z,s)$. MMD-AAE adopt the same structure of encoder, decoder and classifier as LaCIM and a additional 2-layer MLP with channel 256-256-$dim_z$ is used to extract latent $z$.} The detailed number of parameters and channel size on each dataset for each method are summarized in Tab.~\ref{tab:framework},~\ref{tab:module}.


		\subsection{Supplementary for Colored MNIST}
		\label{appx-exp-cmnist}
		
		\textbf{Implementation details} The network structure for inference model is composed of two parts, with the first part shared among all environments and multiple branches corresponding to each environment for the second part. The network structure of the first-part encoder is composed of four blocks, each block is the sequential of Convolutional Layer (Conv), Batch Normalization (BN), ReLU and max-pooling with stride 2. The output number of feature map is accordingly 32, 64, 128, 256. The second part network structure that output the mean and log-variance of $S,Z$ is Conv-bn-ReLU(256) $\to$ Adaptive (1) $\to$ FC(256, 256) $\to$ ReLU $\to$ FC(256, $q_{t=s,z}$) with FC stands for fully-connected layer. The structure of $\varphi_{t=s,z}$ in \eqref{eq:lacim-unpool} is FC($q_t$, 256) $\to$ ReLU $\to$ FC(256, $q_t$). The network structure for generative model $p_{\theta}(x|s,z)$ is the sequential of three modules: (i) Upsampling with stride 2; (ii) four blocks of Transpose-Convolution (TConv), BN and ReLU with respective output dimension being 128, 64, 32, 16; (iii) Conv-BN-ReLU-Sigmoid with number of channels in the output as 3, followed by cropping step in order to make the image with the same size as input dimension, \textit{i.e.}, $3 \times 28 \times 28$. The network structure for generative model $p_{\theta}(y|s)$ is commposed of FC (512) $\to$ BN $\to$ ReLU $\to$ FC (256) $\to$ BN $\to$ ReLU $\to$ FC ($|\mathcal{Y}|$). The $q_{t=s,z}$ is set to 32. We implement SGD as optimizer with learning rate 0.5, weight decay $1e-5$ and we set batch size as 256. The total training epoch is 80.

		We first explain why we do not flip $y$ with $25\%$ in the manuscript, and then provide further exploration of our method for the setting with flipping $y$. 
		
		\textbf{Invariant \textit{Causation} v.s. Invariant \textit{Correlation} by Flipping $y$ in \cite{arjovsky2019invariant}} The $y$ is further flipped with $25\%$ to obtain the final label in IRM setting and this step is omitted in ours. The difference lies in the definition for the label $Y$ and the invariance. Our LaCIM defines invariance as the causal relation between $S$ and the label $Y$, while the one in IRM can be correlation since randomly flipping $Y$ can break the relations between $S$ and $Y$. As illustrated in Handwritting Sample Form in Fig.~\ref{fig:nist} in \cite{grother1995nist}, the generting direction should be $Y \to X$. If we denote $\tilde{Y}$ as the flipped $Y$ (\textit{a.k.a}, the final label in IRM), then the causal graph should be $X \gets Y \to \tilde{Y}$. In this case, the $\tilde{Y}$ is correlated rather than causally related to the digit $X$. For our LaCIM, we define the label as interpretable human label, which can approximate to the ground-truth label $y$ for any image $x$ since it can capture the causal relation between digits and the label. 
		
		
		\begin{figure}[ht]
			\centering
			\includegraphics[width=0.85\textwidth]{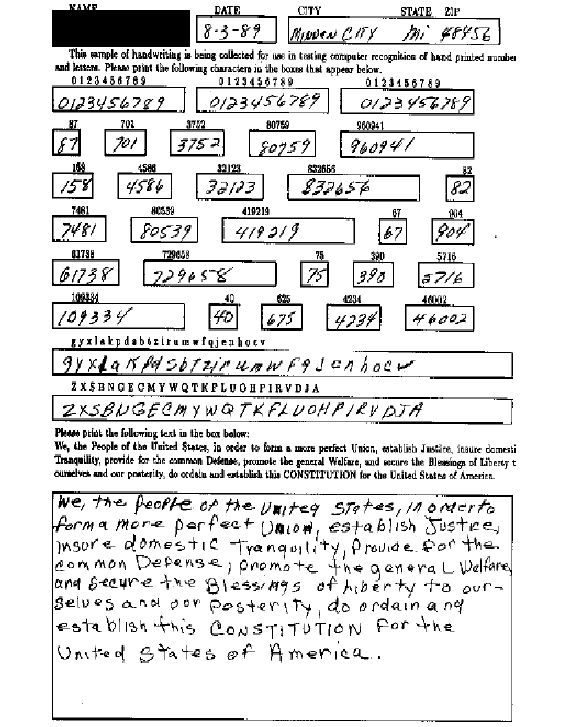}
			\label{fig:nist}
			\caption{Hand-writting Sample Form. The writer print the digit/character (\textit{i.e.}, $X$) with the label (\textit{i.e.}, $Y$) provided first.}
		\end{figure}
		
		\textbf{Experiment with IRM setting} We further conduct the experiment on IRM setting, with the final label $y$ defined by flipping original label with $25\%$, and further color $p^{e}$ proportions of digits with corresponding color-label mapping. If we assume the original ground-truth label to be the effect of the digit number of $S$, then the anti-causal relation with $Z$ and $Y$ can make the identifiability of $S$ difficult in this flipping scenario. Note that the causal effect between $S$ and $Y$ is invariant across domains, therefore we adopt to regularize the branch of inferring $S$ to be shared among inference models for multiple environments.  Besides, we regularize the causal effect between $S$ and $Z$ to be shared among different environments via pairwise regularization. The combined loss is formulated as:
		\begin{align}
		\tilde{\mathcal{L}}_{\psi,\theta} = \mathcal{L}_{\psi,\theta} + \frac{\gamma}{2m^2} \sum_{i=1}^m\sum_{j=1}^m \Vert \mathbb{E}_{(x,y) \sim p^{e_i}(x,y)}[y|x] - \mathbb{E}_{(x,y) \sim p^{e_j}(x,y)}[y|x] \Vert_2^2, \nonumber 
		\end{align}
		where $\gamma > 0$ denotes the regularization hyperparameter. The $q^e_{\psi}(s,z|x)$ in \eqref{eq:lacim-unpool} factorized as $q_{\psi^e_z}(z)q_{\psi_s}(s)$ and $\varphi_s$ shared among $m$ environments. The appended loss is coincide with recent study Risk-Extropolation (REx) in \cite{krueger2020out}, with the difference of separating $Y$-causative factor $S$ from others. We name such a training method as LaCIM-REx. For implementation details, in addition to shared encoder regarding $S$, we set learning rate as 0.1, weight decay as 0.0002, batch size as 256.  we have that $p(y|x) = \int_{\mathcal{S}} q_{\psi_s}(s|x)p_{\theta}(y|\varphi_s(s))$ for any $x$. We consider two settings: setting\#1 with $m2$ and $p^{e_1} = 0.9, p^{e_2} = 0.8$; and setting\#2 with $m=4$ with $p^{e_1} = 0.9, p^{e_2} = 0.8, p^{e_3} = 0.7, p^{e_4} = 0.6$. We only report the number of IRM since the cross entropy performs poorly in both settings. As shown, our model performs comparably than IRM \cite{arjovsky2019invariant} due to separation of $S$ znd $Z$.

		\begin{table}[h!]
			\centering
			\begin{tabular}{c|c|c|c|c|c}
				\hline
				&  IRM &  LaCIM-REx (\textbf{Ours}) \\ 
				\hline 
				$m=2$ & $67.15 \pm 3.79$  & $\mathbf{67.57 \pm 1.37}$  \\
				\hline 
				$m=4$ & $69.37 \pm 1.14$  & $\mathbf{69.50 \pm 0.57}$  \\
				\hline 
			\end{tabular}
			\label{table:acc-cmnist} 
			\caption{Accuracy (\%) of Colored MNIST on IRM setting in \cite{arjovsky2019invariant}. Average over three runs.} 
		\end{table}

		\subsection{Supplementary for NICO}
		\label{appx-exp-nico}
		
		\textbf{Implementation Details}  Due to size difference among images, we resize each image into 256$\times$256. The network structure of $p_{\theta}(z,s|I^e), q_{\psi}(z,s|x,I^e), p_{\theta}(x|z,s), p_{\theta}(y|s)$ for cat/dog classification is the same with the one implemented in early prediction of Alzheimer's Disease with exception of 3D convolution/Deconvolution replaced by 2D ones. For each model, we train for 200 epochs using sgd, with learning rate (lr) set to 0.01, and after every 60 epochs the learning rate is multiplied by lr decay parameter that is set to 0.2. The weight decay coefficients parameter is set to $5 \times 10^{-4}$. The batch size is set to 30. The training environments which is characterized by $c$ can be referenced in Table~\ref{table:cat-dog-environment}. For visualization, we implemented the gradient-based method \cite{simonyan2013deep} to visualize the neuron (in fully connected layer of CE $x\to y$ and the $s$ layer of LaCIM that is most correlated to label $y$. 
		
		\textbf{The $D$ for $m$ environments} We summarize the $D$ of $m=8$ and $m=14$ environments in Table~\ref{table:cat-dog-environment}. Since the distribution of $S,Z$ depends on $D$, we simply define $D$ as the parameterization of $S,Z$. In this context, such a parameterization refers to the proportions of (dog in grass, dog in snow; cat in grass, cat in snow); therefore $D \in \mathbb{R}^4$. As shown, the value of $D$ in the test domain is the extrapolation of the training environments, \textit{i.e.}, the $d^{\mathrm{test}}$ is not included in the convex hull of $\{d^{e_i}\}_{i=1}^{14}$. 
		\begin{table}[t!]
			\centering
			\footnotesize
			\begin{tabular}{l|cccc}
				\toprule
				& cat$\%$ on grass & dog$\%$ on grass & cat$\%$ on snow & cat$\%$ on snow \\ 
				\midrule
				&  \multicolumn{4}{c}{Training Environment} \\
				\midrule
				Env\#1 ($d^{e_1}$)  & 0.6  & 0.4 & 0.1 & 0.9  \\
				Env\#2 ($d^{e_2}$)  &  0.8 & 0.2 & 0.1 & 0.9  \\
				Env\#3 ($d^{e_3}$) & 0.5 & 0.5 & 0.2 &  0.8  \\
				Env\#4 ($d^{e_4}$)  & 0.8  & 0.2 & 0.2 & 0.8  \\
				Env\#5 ($d^{e_5}$)  & 0.7  & 0.3 & 0.2 & 0.8  \\
				Env\#6 ($d^{e_6}$)  & 0.8  & 0.2 & 0.3 & 0.7  \\
				Env\#7 ($d^{e_7}$)  & 0.7  & 0.3 & 0.3 & 0.7  \\
				Env\#8 ($d^{e_8}$)  & 0.9  & 0.1 & 0.3 & 0.7  \\
				\hline 
				Env\#9 ($d^{e_9}$) & 0.4  & 0.6 & 0.3 & 0.7  \\
				Env\#10 ($d^{e_{10}}$) & 0.6  & 0.4 & 0.3 & 0.7  \\
				Env\#11 ($d^{e_{11}}$)  & 0.5  & 0.5 & 0.4 & 0.6  \\
				Env\#12 ($d^{e_{12}}$)  & 0.4  & 0.6 & 0.4 & 0.6  \\
				Env\#13 ($d^{e_{13}}$)  & 0.7  & 0.3 & 0.4 & 0.6  \\
				Env\#14 ($d^{e_{14}}$)  & 0.8  & 0.2 & 0.4 & 0.6  \\
				\midrule
				&  \multicolumn{4}{c}{\textcolor{blue}{Testing Environment}} \\
				\midrule
				\textcolor{blue}{Env Test $d^{\mathrm{test}}$}  & \textcolor{blue}{0.2}  & \textcolor{blue}{0.8} & \textcolor{blue}{0.8} & \textcolor{blue}{0.2} \\
				\bottomrule
			\end{tabular}
			\label{table:cat-dog-environment} 
			\caption{Training and test environments (characterized by $D$)} 
		\end{table}

		\textbf{More Visualization Results} Fig.~\ref{fig:cat-dog-visualize-appx} shows more visualization results.
		\begin{figure}[h!]
			\centering
			\begin{tabular}{ccccccc}
				\centering
				\includegraphics[width=0.45\textwidth]{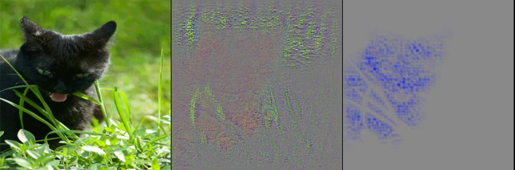}
				& \includegraphics[width=0.45\textwidth]{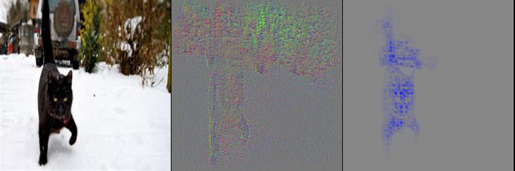} 
				\\
				\includegraphics[width=0.45\textwidth]{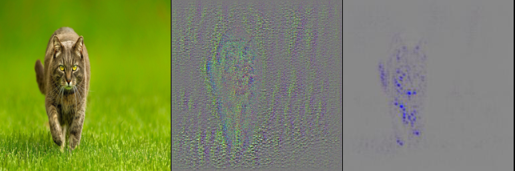}
				& \includegraphics[width=0.45\textwidth]{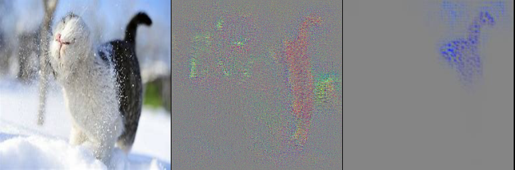}
				\\
				\includegraphics[width=0.45\textwidth]{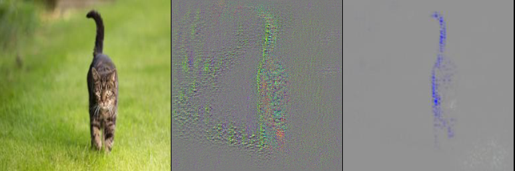}
				& \includegraphics[width=0.45\textwidth]{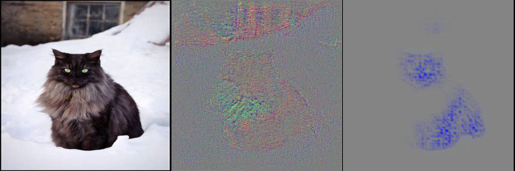}
				\\
				\includegraphics[width=0.45\textwidth]{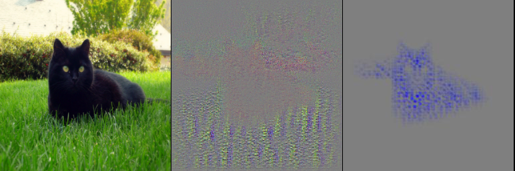}
				& \includegraphics[width=0.45\textwidth]{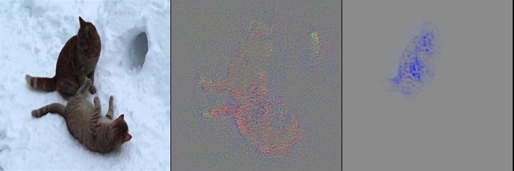}
				\\
				(a) Cat on grass & (b) Cat on snow \\
				\includegraphics[width=0.45\textwidth]{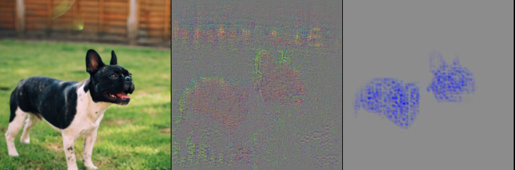}
				& \includegraphics[width=0.45\textwidth]{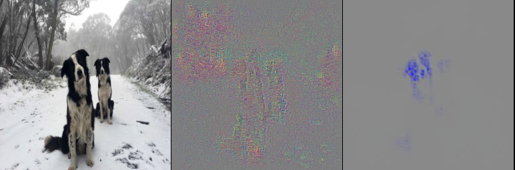}
				\\
				\includegraphics[width=0.45\textwidth]{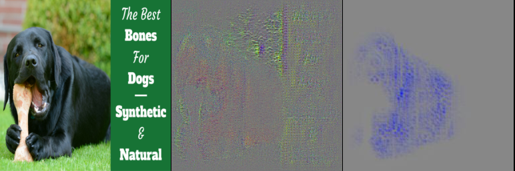}
				& \includegraphics[width=0.45\textwidth]{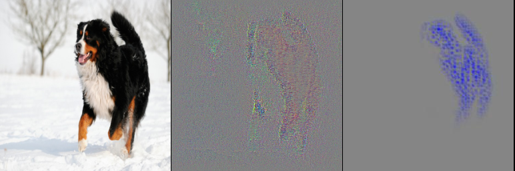}
				\\
				\includegraphics[width=0.45\textwidth]{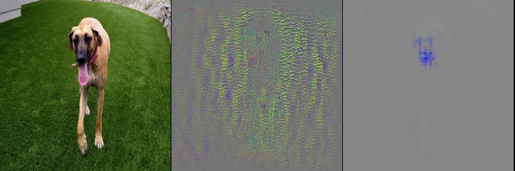}
				& \includegraphics[width=0.45\textwidth]{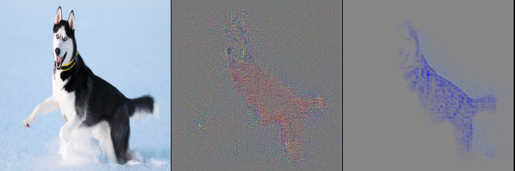}
				\\
				\includegraphics[width=0.45\textwidth]{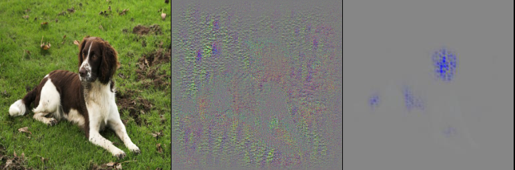}
				& \includegraphics[width=0.45\textwidth]{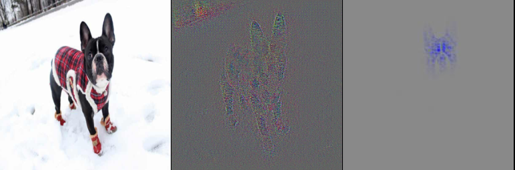}
				\\
				(c) Dog on grass & (d) Dog on snow
			\end{tabular}
			\caption{Visualization on the NICO via gradient-based method \cite{simonyan2013deep} for CE $X \to Y$ and LaCIM. The selected images are (a) cat on grass, (b) cat on snow, (c) dog on grass and (d) dog on snow.}
			\label{fig:cat-dog-visualize-appx}
		\end{figure}

		\begin{figure*}[]
			\centering
			\includegraphics[width=0.8\linewidth]{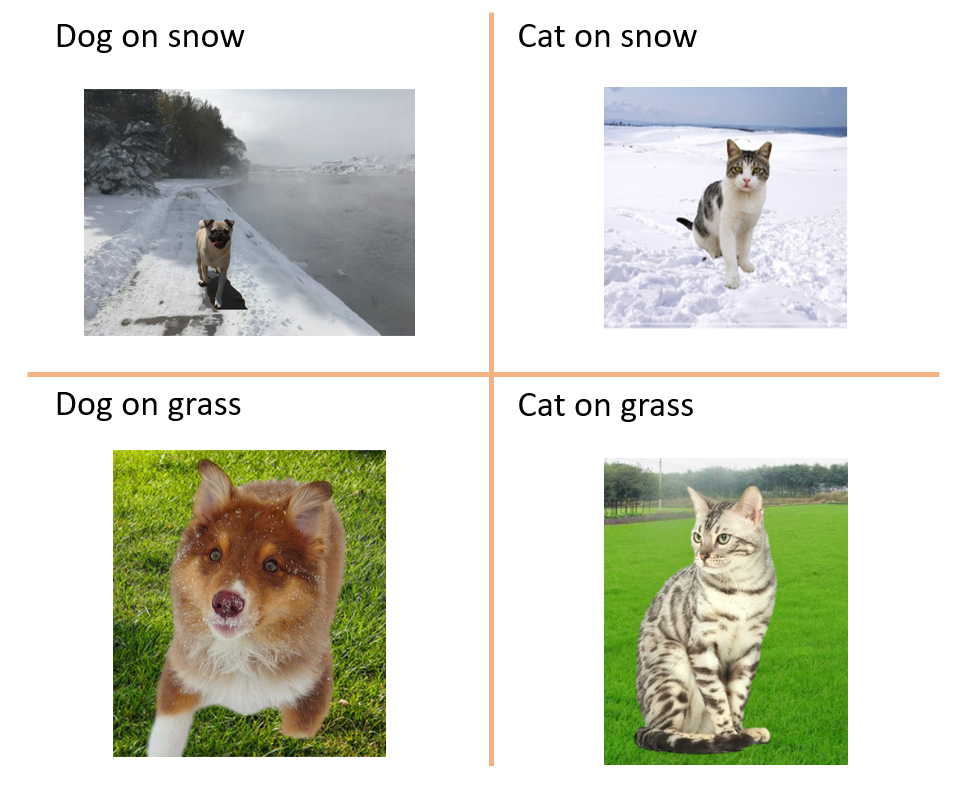}
			\caption{The constructed interventional dataset which includes of dog on snow, dog on grass, cat on snow, and dog on grass.  
			}
			\label{fig:interven}
		\end{figure*}
		
		\textbf{Generation of Intervened Data.} For generating an intervened sample, we replace the scene of an image with the scene from the another image, as shown in Fig.~\ref{fig:interven}. This process can be viewed as breaking the dependency between $Z$ and $Z$. We generate 120 images, including 30 images of types: cat on grass, dog on grass, cat on snow, and dog on grass.

		\subsection{Disease Prediction of Alzheimer's Disease}
		\label{appx:expm-ad}
		
		\textbf{Dataset Description.} The dataset contains in total 317 samples with 48 AD, 75 NC, and 194 MCI.
		
		\textbf{Denotation of Attributes $D$.} The $D \in \mathbb{R}^9$ includes personal attributes (\textit{e.g.}, age \cite{guerreiro2015age}, gender \cite{vina2010women} and education years \cite{mortimer1997brain} that play as potential risks of AD), gene ($\varepsilon_4$ allele), and biomarkers (\textit{e.g.}, changes of CSF, TAU, PTAU, amyloid$_{\beta}$, cortical amyloid deposition (AV45) \cite{humpel2011cerebrospinal}). 
		
		\textbf{Implementation Details} The $S,Z \in \mathbb{R}^{64}$. For the shared part of $q_{\psi}(s,z|x,I^e)$, we concatenate outputs of feature extractors of $X$ and $I^e$: the feature extractor for $x$ is composed of four Convolution-Batch Normalization-ReLU (CBNR) blocks and four Convolution-Batch Normalization-ReLU-MaxPooling (CBNR-MP) blocks with structure 64 BNR $\to$ 128 CBNR-MP $\to$ 128 CBNR $\to$ 256 CBNR-MP $\to$ 256 CBNR $\to$ 512 CBNR-MP $\to$ 512 CBNR $\to$ 1024 CBNR-MP; the feature extractor of $I^e$ is composed of three Fully Connection-Batch Normalization-ReLU (FC-BNR) blocks with structure 128 $\to$ 256 $\to$ 512. for the part specific to each domain, $\mu_{s,z}(x,d)$ and $\log{\sigma_{s,z}(x,d)}$ are generated by the sub-network which is composed of 1024 FC-BNR $\to$ 1024 FC-BNR $\to$ $q_{z,s}$ FC-BNR. The $z,s$ can be reparameterized by $\mu_{s,z}(x,d)$ and $\log{\sigma_{s,z}(x,d)}$ are fed into a sub-network which is composed of $q_{z,s}$  FC-BNR $\to$ 1024 FC-BNR $\to$ $q_{z,s}$ FC-BNR to get rid of the constraint of Gaussian distribution. For the prior model $p_{\theta}(s,z|I^e)$, it shares the same structure without feature extractor of $x$. For $p_{\theta}(x|s,z)$, the network is composed of three DeConvolution-Batch Normalization-ReLU (DCBNR) blocks and three Convolution-Batch Normalization-ReLU (CBNR) blocks, followed by a
		convolutional layer, with structure 256 DCBNR $\to$ 256 CBNR $\to$ 128 DCBNR $\to$ 128 CBNR $\to$ 64 DCBNR $\to$ 64 CBNR $\to$ 48 Conv. For $p_{\theta}(y|s)$, the network is composed of 256 FC-BNR $\to$ 512 FC-BNR $\to$ 3 FC-BNR. For prior model $p_{\theta}(s,z|I^e) \mathcal{N}(\mu_{s,z}(I^e), \mathrm{diag}(\sigma^2_{s,z}(I^e)))$ the $\mu_{s,z}(x,I^e)$ and $\log{\sigma_{s,z}}(x,I^e)$ are parameterized by Multi Perceptron Neural Network (MLP). The decoders $p_{\theta}(x|s,z)$ are $p_{\theta}(y|s)$ parameterized by Deconvolutional neural network. For all methods, we train for 200 epochs using SGD with weight decay $2\times 10^{-4}$ and learning rate 0.01 and is multiplied by 0.2 after every 60 epochs. The batch size is set to 4. 
		
		
		\textbf{The $D$ variable in training and test.} The selected attributes include Education Years, Age, Gender (0 denotes male and 1 denotes female), AV45, amyloid$_{\beta}$ and TAU. We split the data into $m=2$ training environments and test according to different value of $D$. The Tab.~\ref{table:ad-main-c} describes the data distribution in terms of number of samples, the value of $D$ (Age and TAU).

		\begin{table}[t!]
			\centering
			\footnotesize
			\begin{tabular}{c|cccccccc}
				\toprule
				& Training Env\#1  & Training Env\#1 & Test \\ 
				\midrule
				&  \multicolumn{3}{c}{Age} \\
				\midrule
				Number of AD  & 17  & 17 & 14  \\
				Number of MCI  & 76  & 83 & 35   \\
				Number of NC & 34 & 27 & 14\\
				Average value of $d$ (years): & 68.75 & 72.78 & 81.74 \\
				\midrule
				&  \multicolumn{3}{c}{TAU} \\
				\hline
				Number of AD  & 11  & 22 & 15  \\
				Number of MCI  & 75  & 78 & 41  \\
				Number of NC & 40 & 27 & 18 \\
				Average value of $d$: &  215.34 & 286.69 & 471.72 \\
				\bottomrule
			\end{tabular}
			\label{table:ad-main-c} 
			\caption{Training and test environments (characterized by $c$) in early prediction of AD} 
		\end{table}

		\section{Robustness on Security}
		\label{sec:expm-robust}
		
		We consider the DeepFake-related security problem, which targets on detecting small perturbed fake images that can spread fake news. The \citet{rossler2019faceforensics++} provides FaceForensics++ dataset from 1000 Youtube videos for training and 1,000 benchmark images from other sources (OOD) for testing. We split the train data into $m=2$ environments according to video ID. The considerable result in Tab.~\ref{table:acc-deepfake} verifies potential value on security.  
		
		\textbf{Implementation Details.} We implement data augmentations, specifically images with 30 angle rotation, with flipping horizontally with $50\%$ probability. We additionally apply random compressing techniques, such as JpegCompression. For inference model, we adopt Efficient-B5 \cite{Tan2019EfficientNetRM}, with the detailed network structure as: FC(2048, 2048) $\to$ BN $\to$ ReLU $\to$ FC(2048, 2048) $\to$ BN $\to$ ReLU $\to$ FC(2048, $q_{t=s,z}$). The structure of reparameterization, \textit{i.e.}, $\varphi_{t=s,z}$ is FC($q_{t=s,z}$, 2048) $\to$ BN $\to$ ReLU $\to$ FC(2048, 2048) $\to$ BN $\to$ ReLU $\to$ FC(2048, $q_{t=s,z}$). The network structure for generative model, \textit{i.e.}, $p_\psi(x|s,z)$ is TConv-BN-ReLU($q_{t=s,z}$, 256) $\to$ TConv-BN-ReLU(256, 128) $\to$ TConv-BN-ReLU(128, 64)$\to$ TConv-BN-ReLU(64, 32) $\to$ TConv-BN-ReLU(32, 32) $\to$ TConv-BN-ReLU(32, 16) $\to$ TConv-BN-ReLU(16, 16) $\to$ Conv-BN-ReLU(16, 3) $\to$ Sigmoid, followed by cropping the image to the same size $3\times 224 \times 224$. We set $q_{t=s,z}$ as 1024. We implement SGD as optimizer, with learning rate 0.02, weight decay 0.00005, and run for 9 epochs. 
		
		\begin{table}[h!]
			\centering
			\begin{tabular}{c|c|c}
				\hline 
				CE $X\to Y$  & IRM  & LaCIM (\textbf{Ours}) \\
				\hline 
				$82.8 \pm 0.99$  & $ 83.4 \pm 0.59$ & $\mathbf{84.47 \pm 0.90 }$ \\
				\hline
			\end{tabular}
			\label{table:acc-deepfake}
			\vspace{-0.2cm}
			\caption{Accuracy (\%) of robustness on FaceForensics++. Average over three runs.}
		\end{table}

		\section{Network Structure}

		\begin{sidewaystable}[]
			\tiny
			\centering
			\caption{General framework table for our method and baselines on $\mathrm{Data} \in \{\mbox{CMNIST},\mbox{NICO},\mbox{ADNI},\mbox{DeepFake}\}$ Dataset. We denote the dimension of $z$ or $s$ as $\mbox{dim}_{z,s}$. We list the output dimension (\emph{e.g.} the channel number) of each module, if it is different from the one in Tab.~\ref{tab:module}.}
			\label{tab:framework}
			\begin{tabular}{|c|c|c|c|c|c|c|c|}
				\hline
				\diagbox{Dataset}{Method} & CE $X \to Y$ & MMD-AAE & DANN & DIVA  & LaCIM \\
				\hline
				$\mathrm{Data}$:CMNIST &  
				\makecell[c]{
					$\mbox{Enc}_x^{\mathrm{Data}}$\\FC(256,$\mbox{dim}_{z}$)\\
					$\mbox{Dec-CE}_y^{\mathrm{Data}}$}
				& 
				\makecell[c]{$\mbox{Enc}_x^{\mathrm{Data}}$\\
					FC-BN-ReLU(256,256)\\
					FC(256,256) $\to z$ \\
					$\mbox{Dec}_y^{\mathrm{Data}}$; $\mbox{Dec}_x^{\mathrm{Data}}$} & 
				\makecell[c]{$\mbox{Enc}_x^{\mathrm{Data}}$\\
					$\mbox{DANN-CLS}_y^{\mathrm{Data}}$; $\mbox{DANN-CLS}_y^{\mathrm{Data}}$
				}
				&
				\makecell[c]{
					$p^{\mathrm{Data}}_\theta(x|z_d,z_x,z_y)$\\
					$p^{\mathrm{Data}}_{\theta_d}(z_d|d)$ \\ $p^{\mathrm{Data}}_{\theta_y}(z_y|y)$ \\
					$q^{\mathrm{Data}}_{\phi_d}(z_d|x)$\\
					$q^{\mathrm{Data}}_{\phi_x}(z_x|x)$\\
					$q^{\mathrm{Data}}_{\phi_y}(z_y|x)$
				}
				& 
				\makecell[c]{$\mbox{Enc}_x^{\mathrm{Data}}$\\
					$\mbox{Enc}_{z,s}^{\mathrm{Data}}$ $\times$ $m$ \\
					$\Phi_{z,s}^{\mathrm{Data}}$ $\times$ $m$\\
					$\mbox{Dec}_y^{\mathrm{Data}}$;$\mbox{Dec}_x^{\mathrm{Data}}$
				}  \\ 
				\hline
				\# of Params &  1.12M  & 1.23M & 1.1M & 1.69M & 0.92M \\ 
				\hline
				hyper-Params & \makecell[c]{lr: 0.1\\wd:0.00005} & \makecell[c]{lr: 0.01\\wd: 0.0001} & \makecell[c]{lr: 0.1\\wd: 0.0002} & \makecell[c]{lr: 0.001\\wd: 0.00001}  & \makecell[c]{lr: 0.01\\wd: 0.0002} \\ 
				\midrule
				$ \mathrm{Data}:NICO$ &  \makecell[c]{$\mbox{Enc}_x^{\mathrm{Data}}$\\
					FC(1024,$\mbox{dim}_{z}$)\\
					$\mbox{Dec-CE}_y^{\mathrm{Data}}$}
				& 
				\makecell[c]{$\mbox{Enc}_x^{\mathrm{Data}}$\\
					FC-BN-ReLU(1024,1024)\\
					FC(1024,1024) $\to z$ \\
					$\mbox{Dec}_y^{\mathrm{Data}}$; $\mbox{Dec}_x^{\mathrm{Data}}$} & 
				\makecell[c]{$\mbox{Enc}_x^{\mathrm{Data}}$\\
					$\mbox{DANN-CLS}_y^{\mathrm{Data}}$; $\mbox{DANN-CLS}_y^{\mathrm{Data}}$
				}
				&
				\makecell[c]{
					$p^{\mathrm{Data}}_\theta(x|z_d,z_x,z_y)$\\
					$p^{\mathrm{Data}}_{\theta_d}(z_d|d)$ \\ $p^{\mathrm{Data}}_{\theta_y}(z_y|y)$ \\
					$q^{\mathrm{Data}}_{\phi_d}(z_d|x)$\\
					$q^{\mathrm{Data}}_{\phi_x}(z_x|x)$\\
					$q^{\mathrm{Data}}_{\phi_y}(z_y|x)$
				}
				& \makecell[c]{$\mbox{Enc}_x^{\mathrm{Data}}$\\
					$\mbox{Enc}_{z,s}^{\mathrm{Data}}$ $\times$ $m$ \\
					$\Phi_{z,s}^{\mathrm{Data}}$ $\times$ $m$\\
					$\mbox{Dec}_y^{\mathrm{Data}}$;$\mbox{Dec}_x^{\mathrm{Data}}$ }  \\ \hline
				\# of Params ($m=8$) &  18.08M & 19.70M & 19.13M & 14.86M & 18.25M \\ \hline
				\# of Params ($m=14$) &  18.08M & 19.70M & 26.49M & 14.87M & 19.70M \\ \hline
				hyper-Params & \makecell[c]{lr: 0.01\\wd: 0.0002}  & \makecell[c]{lr: 0.2\\wd: 0.0001} & \makecell[c]{lr: 0.05\\wd: 0.0005} & \makecell[c]{lr: 0.001\\wd: 0.0001} &  \makecell[c]{lr: 0.01\\wd: 0.0001} \\
				\midrule
				$\mathrm{Data}$:ADNI &  \makecell[c]{$\mbox{Enc}_x^{\mathrm{Data}}$\\
					FC(1024,$\mbox{dim}_{z}$)\\
					$\mbox{Dec-CE}_y^{\mathrm{Data}}$}
				& 
				\makecell[c]{$\mbox{Enc}_x^{\mathrm{Data}}$\\
					FC-BN-ReLU(1024,1024)\\
					FC(1024,1024) $\to z$ \\
					$\mbox{Dec}_y^{\mathrm{Data}}$; $\mbox{Dec}_x^{\mathrm{Data}}$} & 
				\makecell[c]{$\mbox{Enc}_x^{\mathrm{Data}}$\\
					$\mbox{DANN-CLS}_y^{\mathrm{Data}}$; $\mbox{DANN-CLS}_y^{\mathrm{Data}}$
				}
				&
				\makecell[c]{
					$p^{\mathrm{Data}}_\theta(x|z_d,z_x,z_y)$\\
					$p^{\mathrm{Data}}_{\theta_d}(z_d|d)$ \\ $p^{\mathrm{Data}}_{\theta_y}(z_y|y)$ \\
					$q^{\mathrm{Data}}_{\phi_d}(z_d|x)$\\
					$q^{\mathrm{Data}}_{\phi_x}(z_x|x)$\\
					$q^{\mathrm{Data}}_{\phi_y}(z_y|x)$
				}
				& \makecell[c]{$\mbox{Enc}_x^{\mathrm{Data}}$\\
					$\mbox{Enc}_{z,s}^{\mathrm{Data}}$ $\times$ $m$ \\
					${\Phi}_{z,s}^{\mathrm{Data}}$ $\times$ $m$\\
					$\mbox{Dec}_y^{\mathrm{Data}}$;$\mbox{Dec}_x^{\mathrm{Data}}$ }  \\ \hline
				\# of Params &  28.27M & 36.68M & 30.21M & 33.22M & 37.78M \\ \hline
				hyper-Params & \makecell[c]{lr: 0.01\\wd: 0.0002}  & \makecell[c]{lr: 0.005\\wd: 0.0002} & \makecell[c]{lr: 0.01\\wd: 0.0002} & \makecell[c]{lr: 0.005\\wd: 0.0001} &  \makecell[c]{lr: 0.01\\wd: 0.0002} \\
				\bottomrule
			\end{tabular}
		\end{sidewaystable}

		\begin{table}[]
			\scriptsize
			\centering
			\caption{Network Structure of Modules used in our method and baselines.}
			\label{tab:module}
			\begin{tabular}{|c|c|c|c|}
				\hline
				Method &  CMNIST & NICO & ADNI \\ \hline
				$\mbox{Enc}_x^{\mathrm{Data}}$&  \makecell[c]{Conv-BN-ReLU($\mbox{dim}_{\mbox{input}}$,64,3,1,1)\\
					MaxPool(2)\\
					Conv-BN-ReLU(64,128,3,1,1)\\
					MaxPool(2)\\
					Conv-BN-ReLU(128,256,3,1,1)\\
					MaxPool(2)\\
					Conv-BN-ReLU(256,256,3,1,1)\\
					AdaptivePool(1)\\
					Flatten()
				} & \makecell[c]{Conv-BN-ReLU($\mbox{dim}_{\mbox{input}}$,128,3,1,1)\\
					Conv-BN-ReLU(128,256,3,2,0)\\
					MaxPool(2)\\
					Conv-BN-ReLU(256,256,3,1,1)\\
					Conv-BN-ReLU(256,512,3,1,1)\\
					MaxPool(2)\\
					Conv-BN-ReLU(512,512,3,1,1)\\
					Conv-BN-ReLU(512,512,3,1,1)\\
					MaxPool(2)\\
					Conv-BN-ReLU(512,512,3,1,1)\\
					Conv-BN-ReLU(512,1024,3,1,1)\\
					AdaptivePool(1)\\
					Flatten()
				} &
				\makecell[c]{Conv3d-BN-ReLU($\mbox{dim}_{\mbox{input}}$,128,3,1,1)\\
					Conv3d-BN-ReLU(128,256,3,2,0)\\
					MaxPool(2)\\
					Conv3d-BN-ReLU(256,256,3,1,1)\\
					Conv3d-BN-ReLU(256,512,3,1,1)\\
					MaxPool(2)\\
					Conv3d-BN-ReLU(512,512,3,1,1)\\
					Conv3d-BN-ReLU(512,512,3,1,1)\\
					MaxPool(2)\\
					Conv3d-BN-ReLU(512,512,3,1,1)\\
					Conv3d-BN-ReLU(512,1024,3,1,1)\\
					AdaptivePool(1)\\
					Flatten()
				} \\ \hline
				$\mbox{Dec}_x^{\mathrm{Data}}$ & 
				\makecell[c]{
					UnFlatten()\\
					Upsample(2)\\
					Tconv-BN-ReLU($\mbox{dim}_{\mbox{input}}$,128,2,2,0)\\
					Tconv-BN-ReLU(128,64,2,2,0)\\
					Tconv-BN-ReLU(64,32,2,2,0)\\
					Tconv-BN-ReLU(32,16,2,2,0)\\
					Conv(16,3,3,1,1)\\
					Sigmoid()\\
					Cropping(28)
				} & \makecell[c]{
					UnFlatten()\\
					Upsample(16)\\
					Tconv-BN-ReLU($\mbox{dim}_{\mbox{input}}$,256,2,2,0)\\
					Conv-BN-ReLU(256,256,3,1,1)\\
					Tconv-BN-ReLU(256,128,2,2,0)\\
					Conv-BN-ReLU(128,128,3,1,1)\\
					Tconv-BN-ReLU(128,64,2,2,0)\\
					Conv-BN-ReLU(64,64,3,1,1)\\
					Tconv-BN-ReLU(64,32,2,2,0)\\
					Conv-BN-ReLU(32,32,3,1,1)\\
					Conv(32,3,3,1,1)\\
					Sigmoid()
				} & \makecell[c]{
					UnFlatten()\\
					Upsample(6)\\
					Tconv3d-BN-ReLU($\mbox{dim}_{\mbox{input}}$,256,2,2,0)\\
					Conv3d-BN-ReLU(256,256,3,1,1)\\
					Tconv3d-BN-ReLU(256,128,2,2,0)\\
					Conv3d-BN-ReLU(128,128,3,1,1)\\
					Tconv3d-BN-ReLU(128,64,2,2,0)\\
					Conv3d-BN-ReLU(64,64,3,1,1)\\
					Tconv3d-BN-ReLU(64,64,2,2,0)\\
					Conv3d-BN-ReLU(64,64,3,1,1)\\
					Conv3d(64,1,3,1,1)\\
					Sigmoid()
				} \\ \hline
				$\mbox{Enc}_d^{\mathrm{Data}}$ & \makecell[c]{
					FC-BN-ReLU($d$, 128)\\
					FC-BN-ReLU(128, 256)
				} & \makecell[c]{
					FC-BN-ReLU($d$, 256)\\
					FC-BN-ReLU(256, 512)\\
					FC-BN-ReLU(512, 512)
				} & \makecell[c]{
					FC-BN-ReLU($d$, 256)\\
					FC-BN-ReLU(256, 512)\\
					FC-BN-ReLU(512, 512)
				} \\ \hline
				$\mbox{Dec}_y^{\mathrm{Data}}$ & \makecell[c]{
					FC-BN-ReLU($\mbox{dim}_{z,s}$, 512)\\
					FC-BN-ReLU(512, 256)\\
					FC(256,2)
				} & \makecell[c]{
					FC-BN-ReLU($\mbox{dim}_{z,s}$, 512)\\
					FC-BN-ReLU(512, 256)\\
					FC(256,2)
				} &\makecell[c]{
					FC-BN-ReLU($\mbox{dim}_{z,s}$, 512)\\
					FC-BN-ReLU(512, 256)\\
					FC(256,2)
				}  \\ \hline
				$\mbox{Dec-CE}_y^{\mathrm{Data}}$ & \makecell[c]{
					FC-BN-ReLU($\mbox{dim}_{z,s}$, 512)\\
					FC-BN-ReLU(512, 256)\\
					FC(256,2)
				} & \makecell[c]{
					FC-BN-ReLU($\mbox{dim}_{z,s}$, 1024)\\
					FC-BN-ReLU(1024, 2048)\\
					FC(2048,2)
				} &
				\makecell[c]{
					FC-BN-ReLU($\mbox{dim}_{z,s}$, 512)\\
					FC-BN-ReLU(512, 256)\\
					FC(256,2)
				}\\ \hline
				$\mbox{DANN-CLS}_y^{\mathrm{Data}}$ & \makecell[c]{
					FC-BN-ReLU(256, 32)\\
					FC-BN-ReLU(32, 2)\\
				} & \makecell[c]{
					FC-BN-ReLU(1024, 2048)\\
					FC-BN-ReLU(2048, 2)\\
				} & \makecell[c]{
					FC-BN-ReLU(1024, 1024)\\
					FC-BN-ReLU(1024, 2)\\
				} \\ \hline
				${\Phi}_{z,s}^{\mathrm{Data}}$ & \makecell[c]{
					FC-ReLU($\mbox{dim}_{z,s}$, 256)\\
					FC-ReLU(256, $\mbox{dim}_{z,s}$)\\
				} &
				\makecell[c]{
					FC-ReLU($\mbox{dim}_{z,s}$, 1024)\\
					FC-ReLU(1024, $\mbox{dim}_{z,s}$)\\
				} &
				\makecell[c]{
					FC-ReLU($\mbox{dim}_{z,s}$, 1024)\\
					FC-ReLU(1024, $\mbox{dim}_{z,s}$)\\
				}
				\\ \hline
				$\mbox{Enc}_{z,s}^{\mathrm{Data}}$ & \makecell[c]{
					FC-ReLU(256, 256)\\
					FC-ReLU(256, $\mbox{dim}_{z,s}$)\\
				} &
				\makecell[c]{
					FC-ReLU(1024, 1024)\\
					FC-ReLU(1024, $\mbox{dim}_{z,s}$)\\
				} &
				\makecell[c]{
					FC-ReLU(1024, 1024)\\
					FC-ReLU(1024, $\mbox{dim}_{z,s}$)\\
				}
				\\ \hline
				$p^{\mathrm{Data}}_\theta(x|z_d,z_x,z_y)$ & \makecell[c]{
					FC-BN-ReLU(1024)\\
					UnFlatten()\\
					Upsample(8)\\
					TConv-BN-ReLU(64,128,5,1,0)\\
					Upsample(24)\\
					TConv-BN-ReLU(128,256,5,1,0)\\
					Conv(256, 256*3,1,1,0)
				} &
				\makecell[c]{
					FC-BN-ReLU(1024)\\
					UnFlatten()\\
					Upsample(16)\\
					TConv-BN-ReLU(64,128,5,1,0)\\
					Upsample(64)\\
					TConv-BN-ReLU(128,256,5,1,0)\\
					Upsample(256)\\
					Conv(256, 3,1,1,0)
				} &
				\makecell[c]{
					FC-BN-ReLU(1024)\\
					UnFlatten()\\
					Upsample(8)\\
					TConv3d-BN-ReLU(16,64,5,1,0)\\
					Conv3d-BN-ReLU(64,128,3,1,1)\\
					Upsample(24)\\
					TConv3d-BN-ReLU(128,128,5,1,0)\\
					Conv3d-BN-ReLU(128,128,3,1,1)\\
					Upsample(48)\\
					Conv3d-BN-ReLU(128,32,3,1,1)\\
					Conv3d(32, 1,1,1,0)\\
				}
				\\ \hline
				\makecell[c]{$p^{\mathrm{Data}}_{\theta_d}(z_d|d)$ \\ $p^{\mathrm{Data}}_{\theta_y}(z_y|y)$} & \makecell[c]{
					FC-BN-ReLU($\dim_{d,y}$, 64)\\
					FC(64,64); FC(64,64)\\} 
				&
				\makecell[c]{
					FC-BN-ReLU($\dim_{d,y}$, 64)\\
					FC(64,64); FC(64,64)\\} & 
				\makecell[c]{
					FC-BN-ReLU($\dim_{d,y}$, 64)\\
					FC(64,64); FC(64,64)}
				\\ \hline
				\makecell[c]{$q^{\mathrm{Data}}_{\phi_d}(z_d|x)$\\
					$q^{\mathrm{Data}}_{\phi_x}(z_x|x)$\\
					$q^{\mathrm{Data}}_{\phi_y}(z_y|x)$} & \makecell[c]{
					Conv-BN-ReLU(3,32,5,1,0)\\
					MaxPool(2)\\
					Conv-BN-ReLU(32,64,5,1,0)\\
					MaxPool(2)\\
					Flatten()\\
					FC(1024, 64); FC(1024, 64) $\mathrm{Data}$
				} &
				\makecell[c]{
					Conv-BN-ReLU(3,32,3,2,1)\\
					MaxPool(2)\\
					Conv-BN-ReLU(32,64,3,2,1)\\
					MaxPool(2)\\
					Conv-BN-ReLU(64,64,3,2,1)\\
					MaxPool(2)\\
					Flatten()\\
					FC(1024, 64); FC(1024, 64) $\mathrm{Data}$
				} &
				\makecell[c]{
					Conv3d-BN-ReLU(1,64,3,2,1)\\
					Conv3d-BN-ReLU(64,128,3,1,1)\\
					MaxPool(3)\\
					Conv3d-BN-ReLU(128,256,3,1,1)\\
					Conv3d-BN-ReLU(256,256,3,1,1)\\
					MaxPool(2)\\
					Conv3d-BN-ReLU(256,256,3,1,1)\\
					Conv3d-BN-ReLU(256,128,3,1,1)\\
					MaxPool(2)\\
					Flatten()\\
					FC(1024, 64); FC(1024, 64) $\mathrm{Data}$
				}
				\\ \hline
			\end{tabular}
		\end{table}

		
	\end{document}